\newtheorem{Proposition}{Proposition}
\newtheorem{Definition}{Definition}
\newtheorem{Corollary}{Corollary}
\newtheorem{Theorem}{Theorem}
\newtheorem{Lemma}{Lemma}
\newtheorem{Remark}{Remark}
\newtheorem{Assumption}{Assumption}
\renewcommand{\d}{\mathrm{d}}
\DeclareMathOperator*{\argmin}{arg\,min}
\def\Assumptionautorefname{Assumption}
\title{Exponential Convergence of CAVI for Bayesian PCA}
\author{%
  Arghya Datta \\
  Department of Mathematics and Statistics \\
  Université de Montréal \\
  \And
  Philippe Gagnon \\
  Department of Mathematics and Statistics \\
  Université de Montréal \\
  \And
  Florian Maire \\
  Department of Mathematics and Statistics \\
  Université de Montréal \\
}
\begin{document}

\def\sectionautorefname{Section}
\def\subsectionautorefname{Section}
\def\subsubsectionautorefname{Section}
\def\Assumptionautorefname{Assumption}
\maketitle

\begin{abstract}
 Probabilistic principal component analysis (PCA) and its Bayesian variant (BPCA) are widely used for dimension reduction in machine learning and statistics. The main advantage of probabilistic PCA over the traditional formulation is allowing uncertainty quantification. The parameters of BPCA are typically learned using mean-field variational inference, and in particular, the coordinate ascent variational inference (CAVI) algorithm. So far, the convergence speed of CAVI for BPCA has not been characterized. In our paper, we fill this gap in the literature. Firstly, we prove a precise exponential convergence result in the case where the model uses a single principal component (PC). Interestingly, this result is established through a connection with the classical \textit{power iteration algorithm} and it indicates that traditional PCA is retrieved as points estimates of the BPCA parameters. Secondly, we leverage recent tools to prove exponential convergence of CAVI for the model with any number of PCs, thus leading to a more general result, but one that is of a slightly different flavor. To prove the latter result, we additionally needed to introduce a novel lower bound for the symmetric Kullback--Leibler divergence between two multivariate normal distributions, which, we believe, is of independent interest in information theory.
\end{abstract}
\vspace{0.5em}
\noindent\textbf{Keywords:} coordinate ascent variational inference, principal component analysis, dynamical systems, power iteration algorithm, mean-field variational inference, information theoretic lower bounds.

\section{Introduction}
\subsection{Probabilistic principal component analysis}\label{section1.1}
Principal component analysis (PCA) is a ubiquitous dimension reduction tool. It has a broad spectrum of applications ranging from computer vision \citep{rousseau2024improving} to finance \citep{veronesi2016handbook,swanson2020predicting}. In \cite{veronesi2016handbook}, it is explained that PCA is commonly used in a context of interest rate modeling to produce a low dimensional input to the model. More precisely, PCA is applied to yield curve data and three principal components (PCs), roughly corresponding to the level, slope and curvature of the yield curves, serve as inputs to the model. When the PCs have an interesting interpretation and serve as inputs to a model like in this example, uncertainty quantification is of importance. However, PCA is not a statistical model and thus uncertainty quantification is not available. Probabilistic principal component analysis (PPCA) is a model introduced by \cite{tipping} to remedy the situation.  It has been widely used for uncertainty quantification in machine learning and statistics (see, e.g., \cite{ilin2010practical} and \cite{wu2021uncertainty}). The PPCA model is defined as:
\begin{equation}\label{ppca1}
x_i=\textbf{W}z_i+\tau^{-1/2}\epsilon_i,\quad i = 1, \ldots, n,
\end{equation}
where $\textbf{W} \in \mathbb{R}^{d \times k}$ is a loading matrix, $z_i \in \mathbb{R}^k$ a standard normal latent variable, $\tau>0$ a precision parameter, and $\epsilon_i \in \mathbb{R}^d$ a standardized normal error vector. Thus, $x_i \in\mathbb{R}^d$ plays the role of a centered data vector, and it is assumed that $n \geqslant d$. The columns of the matrix \textbf{Z}, with lines given by $z_1, \ldots, z_n$, play an analogous role as the PCs in PCA. The PPCA model is complex by, among others, its non-identifiability. 

A Bayesian version, henceforth referred to as BPCA, has been introduced in \cite{bishop1}. In this Bayesian version, $\textbf{W}$ has a prior given by a matrix normal distribution \citep{gupta1999matrix} with mean $\mathbf{0}$ and covariance parameter $I_d\otimes \Lambda^{-1}$, where $I_d$ is the identity matrix of size $d$ and $\Lambda$ is a diagonal matrix of precision parameters. This covariance structure implies that the rows of $\textbf{W}$ are independent with a shared covariance of $\Lambda^{-1}$. The prior distribution on $\tau$ is a gamma. 

Despite the widespread use of such models, their theoretical properties remain largely unexplored. Recently, however, there has been a growing interest in establishing rigorous guarantees. For example, the consistency of the maximum likelihood estimator has been proved in \cite{NEURIPS2023_5b0c0b2c}. On the Bayesian side, a selection procedure for the number of PCs has been proposed and established to be consistent in \cite{cherief2019consistency}. In this work, we establish exponential convergence of CAVI (coordinate ascent variational inference, \cite{bishop2006pattern}) for mean-field variational inference of BPCA. Variational inference is used when Bayesian inference using Markov chain Monte Carlo methods \citep{hastings1970monte,neal2011mcmc} are inefficient. In \cite{819772}, it has been highlighted to be the case for BPCA given the high-dimensional nature of the model (the number of unknowns is ${dk + nk}$). In our theoretical analyses, $\tau$ is assumed fixed and known to simplify, with a value equal to $\tau_0 > 0$.
\subsection{Contributions and organization of the paper}
In \autoref{sec_2}, we present an overview of variational inference and the details of CAVI for BPCA, which will be useful to introduce our exponential convergence results for this numerical scheme. Recently, theoretical results about convergence speed of CAVI have been established in several contexts. In \cite{zhang2020theoretical}, the context is community detection, whereas in \cite{plummer2020dynamics} and \cite{goplerud2024partially} the contexts are 2D Ising models and generalized linear mixed models, respectively. More general results are presented under a log-concave framework in \cite{LHZG} and \cite{arnese2024convergence}, and under a general abstract framework in \cite{bhattacharya2023convergence}. The results in \cite{LHZG} and \cite{arnese2024convergence} cannot be exploited here as BPCA does not fall under the log-concave framework. For one of our theoretical results, we leverage recent tools provided in \cite{bhattacharya2023convergence}. By doing so, we highlight the strengths and limitations of such abstract tools.

\textit{We now present our main contributions.}
\begin{itemize}
\item In \autoref{sec_3}, we prove a precise exponential convergence result in the case $k = 1$, meaning that the model includes a single PC. To achieve this, we establish a novel connection between CAVI and the classical \emph{power iteration algorithm} \citep[chapter 25]{trefethen2022numerical} coming from the field of numerical linear algebra whose purpose is to compute the leading singular vectors of a matrix (\autoref{convergence_of_mu_z_fixed_update}).
\item In \autoref{genk_subseq}, we provide the \emph{first} general (for any $k$) theoretical result on the exponential convergence of CAVI for BPCA (\autoref{main_theorem_bpca}). This result being more general than that of \autoref{sec_3}, it requires different tools and is of a slightly different flavor.
\item A key ingredient of the proof of \autoref{main_theorem_bpca} is a \emph{novel lower bound on the symmetric Kullback--Leibler (KL) divergence} between two multivariate normal distributions (\autoref{KL_lowerbound}). We believe this result is of independent interest in information theory as traditional bounds based on Pinsker’s inequality are often too loose to be useful in practice.
\end{itemize}
\section{Variational inference and CAVI}\label{sec_2}
In variational inference, we resort to find a ``\textit{good enough}'' approximation to  the posterior distribution, that we denote by $\pi$, hoping to capture its key statistical properties. We begin by identifying a family $\mathcal{Q}$ of parametric probability distributions for the approximation. Assume, as will be the case in our context of BPCA, that $\pi$ and all distributions in $\mathcal{Q}$ are absolutely continuous with respect to the Lebesgue measure. We then pick the best candidate from the family $\mathcal{Q}$, in the sense of minimizing the KL divergence with respect to the posterior distribution:
\begin{equation}\label{problem}
q^{*} = \argmin\limits_{q \in \mathcal{Q}} \text{KL}(q \| \pi), \quad \text{KL}(q \| \pi) = \int q(\mathrm{d}x) \log\left( \frac{q(x)}{\pi(x)} \right),
\end{equation}
where we used the same notation $q$ and $\pi$ to denote the densities to simplify. 

The optimization problem in \eqref{problem} is typically intractable. We now present CAVI, a numerical scheme that is used to find an approximate solution to this optimization problem. We present CAVI for our specific purpose of variational inference of BPCA. For this purpose, we specify a mean-field variational family defined as follows:
\begin{equation}
  \mathcal{Q} = \mathcal{Q}_{\textbf{W}} \times \mathcal{Q}_{\textbf{Z}} := \left\{ q\ \middle\vert \begin{array}{l}
    q = q_{\textbf{W}} q_{\textbf{Z}} \quad \text{with} \\
    q_{\textbf{W}} = \mathcal{N}( \mu_{\textbf{W}}, \Sigma^{(c)}_{\textbf{W}} \otimes \Sigma^{(r)}_{\textbf{W}})\quad\text{and} \quad
    q_{\textbf{Z}} = \mathcal{N}( \mu_{\textbf{Z}}, \Sigma^{(c)}_{\textbf{Z}} \otimes \Sigma^{(r)}_{\textbf{Z}})
  \end{array} \right\}.
  \label{family}
\end{equation}

In the case of $q_\textbf{W}$, for instance, $\mu_\textbf{W}$ is a $d \times k$ matrix and the covariance parameter $\Sigma^{(c)}_{\textbf{W}}\otimes \Sigma^{(r)}_{\textbf{W}}$ means that the rows of \textbf{W} share the covariance matrix $\Sigma_\textbf{W}^{(r)}$, whereas the columns share the covariance matrix $\Sigma_\textbf{W}^{(c)}$.

At each step of CAVI, a particular marginal distribution, say $q_{\textbf{W}}$ (referred to as a \textit{block}), is updated in a way that decreases the KL divergence, while the other marginal distribution $q_{\textbf{Z}}$ is fixed. For implementing such a scheme, we need to decide the order in which the blocks are updated. In our case, we first proceed with $q_\textbf{W}$, and next $q_\textbf{Z}$. We thus need to initialize $q_\textbf{Z}=q_{\textbf{Z}}^{(0)}$ for the first update of $q_\textbf{W}$. Therefore, CAVI proceeds sequentially as
\begin{align}
\nonumber
    q^{(t+1)}_{\textbf{W}} &= \arg\min_{q_{\textbf{W}}\in\mathcal{Q}_{\textbf{W}}} \, \text{KL}\left(q_{\textbf{W}} q^{(t)}_{\textbf{Z}} \,\|\, \pi\right), \\
    q^{(t+1)}_{\textbf{Z}} &= \arg\min_{q_{\textbf{Z}}\in\mathcal{Q}_{\textbf{Z}}} \, \text{KL}\left(q^{(t+1)}_{\textbf{W}} q_{\textbf{Z}} \,\|\, \pi\right),\quad t=0,1,2,\ldots.\label{problem_star}
\end{align}
 
It can be shown that each sub-problem described in \eqref{problem_star} is convex and has a unique solution:
\begin{equation}\label{iterw}
q_{\textbf{W}}^{(t+1)}(\textbf{W})\propto \exp\int q^{(t)}_{\textbf{Z}}(\textbf{Z})\log p(\textbf{W},\textbf{Z},\textbf{X}) \ \d\textbf{Z},
\end{equation}
\begin{equation}\label{iterz}
q^{(t+1)}_{\textbf{Z}}(\textbf{Z})\propto \exp\int q^{(t+1)}_{\textbf{W}}(\textbf{W})\log p(\textbf{W},\textbf{Z},\textbf{X}) \ \d\textbf{W},
\end{equation}
where $p(\textbf{W},\textbf{Z},\textbf{X})$ corresponds to the joint density ($p$ will be used as a generic notation for density) and \textbf{X} is the $n \times d$ observed data matrix with lines given by $x_1, \ldots, x_n$ (here and afterwards, $x_1, \ldots, x_n$ are used to denote the observed data vectors; we made an abuse of notation to simplify). 

For BPCA, the integrals in \eqref{iterw} and \eqref{iterz} admit closed-form solutions, see \cite{819772}. However, no derivation was provided in that paper. For completeness, we offer a detailed derivation in \autoref{ap3}. We have that $q^{(t+1)}_\textbf{W}$ corresponds to a matrix normal distribution with parameters $\mu_\textbf{W}^{(t+1)}$ and $I_d\otimes \Sigma_\textbf{W}^{(t+1)}$, where
$$\Sigma^{(t+1)}_{\textbf{W}}=(\tau_0(n\Sigma^{(t)}_{\textbf{Z}}+(\mu^{(t)}_{\textbf{Z}})'\mu^{(t)}_{\textbf{Z}})+\Lambda)^{-1}\text{ and }\mu^{(t+1)}_{\textbf{W}}=(\tau_0\textbf{X}'\mu_{\textbf{Z}}^{(t)})\Sigma^{(t+1)}_{\textbf{W}},$$
$\Sigma_\textbf{Z}^{(t)}$ corresponding to the row covariance matrix of $\textbf{Z}$ at iteration $t$. Indeed, we have that $q^{(t+1)}_\textbf{Z}$ corresponds to a matrix normal distribution with parameters $\mu_\textbf{Z}^{(t+1)}$ and $I_n\otimes \Sigma_\textbf{Z}^{(t+1)}$ where
$$\Sigma^{(t+1)}_{\textbf{Z}}=(\tau_0(d\Sigma^{(t+1)}_{\textbf{W}}+(\mu^{(t+1)}_{\textbf{W}})'\mu^{(t+1)}_{\textbf{W}})+I_k)^{-1}\text{ and }\mu^{(t+1)}_{\textbf{Z}}=(\tau_0\textbf{X}'\mu_{\textbf{W}}^{(t+1)})\Sigma^{(t+1)}_{\textbf{Z}}.$$
Note that even if we did not impose an independence between the rows of \textbf{W}, it automatically arises in the updates. This is a consequence of normality assumptions in the BPCA model. For analogous reasons, it is also the case that the rows of \textbf{Z} are independent in the updates. All this suggests to set $q_\textbf{Z}^{(0)}$ to a matrix normal distribution with a covariance parameter of $I_n \otimes\Sigma_Z^{(0)}$ in the initialization step. From now on, we thus assume that CAVI is initialized with such a $q_\textbf{Z}^{(0)}$. Consequently, we can consider without loss of generality that all $(q_\textbf{W}, q_\textbf{Z}) \in \mathcal{Q}$ are such that the covariance parameters of these matrix normal distributions are $I_d\otimes\Sigma_\textbf{W}$ and $I_n\otimes\Sigma_\textbf{Z}$. Note also that if CAVI is initialized with $\mu_\textbf{Z}^{(0)} = 0$,  all next iterates will be such that both $\mu_\textbf{W}^{(t)}$ and $\mu_\textbf{Z}^{(t)}$ equal to $0$ for all $t$. We consider this fixed point of CAVI as trivial and will thus assumed in the following that $\mu_\textbf{Z}^{(0)}\neq 0$.

 CAVI scheme proceeds by repeating those sequential updates until convergence (see \autoref{algos} for a detailed algorithm). To assess the convergence of CAVI scheme, it would be natural to monitor the KL divergence. However, the latter cannot be used directly because it includes the intractable normalizing constant of the posterior distribution (recall \eqref{problem}). We monitor instead what is referred to as the \textit{evidence lower bound} (ELBO) derived from
\begin{equation}\label{elbo_def}
 \text{KL}(q|| \pi)=\log p(\textbf{X})-\text{ELBO}(q),
\end{equation}
where $\text{ELBO}(q)=\mathbb{E}_q[\log p(\textbf{W},\textbf{Z},\textbf{X})]-\mathbb{E}_q[\log q(\textbf{W},\textbf{Z})],$ $\mathbb{E}_q$ denoting an expectation with respect to $q$. It is referred to as the evidence lower bound given that the non-negativity of the KL divergence implies that it is a lower bound on the \textit{log-evidence}, that is $\log p(\textbf{X})$, corresponding to the log of the normalizing constant. For many statistical models involving latent variables, including BPCA, the calculation of $\text{ELBO}(q)$ is tractable. Therefore in practice, it is determined that CAVI convergence is reached when the increase in $\text{ELBO}(q)$ is negligible. 

We finish this section with a theoretical result which guarantees existence of at least one solution to the optimization problem in \eqref{problem} for BPCA.
\begin{Proposition}\label{non_void}
Let $\pi$ be the posterior distribution for BPCA, as described in \autoref{section1.1}. Let $M=\inf_{q\in\mathcal{Q}}\text{KL}\left(q\| \pi\right)$, then there exists $q^*\in\mathcal{Q}$ such that 
$$\text{KL}\left(q^*\| \pi\right)=M.$$
\end{Proposition}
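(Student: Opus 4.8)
The plan is to use the direct method of the calculus of variations. By the discussion preceding the statement, every element of $\mathcal{Q}$ is a product of two matrix normal distributions with covariance parameters $I_d\otimes\Sigma_{\textbf{W}}$ and $I_n\otimes\Sigma_{\textbf{Z}}$, so $\mathcal{Q}$ is parametrized by the finite-dimensional object $\theta=(\mu_{\textbf{W}},\Sigma_{\textbf{W}},\mu_{\textbf{Z}},\Sigma_{\textbf{Z}})$, where $\mu_{\textbf{W}}\in\re^{d\times k}$, $\mu_{\textbf{Z}}\in\re^{n\times k}$, and $\Sigma_{\textbf{W}},\Sigma_{\textbf{Z}}$ range over the open cone of $k\times k$ symmetric positive definite matrices. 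Writing $F(\theta):=\text{KL}(q_\theta\|\pi)$, it suffices to prove that $F$ is continuous and \emph{coercive} on this parameter space, meaning that the sublevel sets $\{\theta:F(\theta)\le M+1\}$ are compact. The conclusion then follows from a Weierstrass argument applied to a minimizing sequence: coercivity confines it to a compact set, continuity lets us pass to a convergent subsequence, and the limit $\theta^*$ stays in the (closed) sublevel set, hence in the parameter space, and satisfies $F(\theta^*)=M$. Here $M<\infty$ because $F$ is finite at any single $\theta$ (the relative entropy of two such Gaussians is finite, since $\pi>0$ and $\log\pi$ is polynomial), and $M\ge 0$ by non-negativity of the KL divergence, so the sublevel set is nonempty.

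The heart of the argument is the coercivity estimate, which I would derive from the decomposition $F(\theta)=\mathbb{E}_{q_\theta}[\log q_\theta]-\mathbb{E}_{q_\theta}[\log\pi]$. The first term (the negative differential entropy) is available in closed form from the Gaussian structure: using $\det(I_d\otimes\Sigma_{\textbf{W}})=(\det\Sigma_{\textbf{W}})^d$ and the analogue for $\textbf{Z}$,
\begin{equation*}
\mathbb{E}_{q_\theta}[\log q_\theta]=c_0-\tfrac{d}{2}\log\det\Sigma_{\textbf{W}}-\tfrac{n}{2}\log\det\Sigma_{\textbf{Z}},
\end{equation*}
for a constant $c_0$. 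For the second term, the key observation is that the log-posterior satisfies the upper bound
\begin{equation*}
\log\pi(\textbf{W},\textbf{Z})\le c_1-\tfrac{1}{2}\operatorname{tr}(\Lambda\,\textbf{W}'\textbf{W})-\tfrac{1}{2}\operatorname{tr}(\textbf{Z}'\textbf{Z}),
\end{equation*}
obtained by keeping only the Gaussian prior contributions and discarding the likelihood term $-\tfrac{\tau_0}{2}\sum_i\|x_i-\textbf{W}z_i\|^2\le 0$. Taking expectations under $q_\theta$ and using $\mathbb{E}_{q_\theta}[\textbf{W}'\textbf{W}]=\mu_{\textbf{W}}'\mu_{\textbf{W}}+d\,\Sigma_{\textbf{W}}$ and $\mathbb{E}_{q_\theta}[\textbf{Z}'\textbf{Z}]=\mu_{\textbf{Z}}'\mu_{\textbf{Z}}+n\,\Sigma_{\textbf{Z}}$ produces a lower bound on $-\mathbb{E}_{q_\theta}[\log\pi]$ that is quadratic in the means and linear in $\operatorname{tr}(\Sigma_{\textbf{W}})$ and $\operatorname{tr}(\Sigma_{\textbf{Z}})$, all with strictly positive coefficients (here I use that $\Lambda\succ 0$, being a diagonal precision matrix).

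Combining the two estimates gives, up to an additive constant,
\begin{equation*}
F(\theta)\ge -\tfrac{d}{2}\log\det\Sigma_{\textbf{W}}-\tfrac{n}{2}\log\det\Sigma_{\textbf{Z}}+\tfrac{1}{2}\bigl(\operatorname{tr}(\Lambda\mu_{\textbf{W}}'\mu_{\textbf{W}})+\|\mu_{\textbf{Z}}\|_F^2+d\operatorname{tr}(\Lambda\Sigma_{\textbf{W}})+n\operatorname{tr}(\Sigma_{\textbf{Z}})\bigr).
\end{equation*}
Coercivity then follows by inspecting each way the boundary of the parameter space can be approached: if $\|\mu_{\textbf{W}}\|$ or $\|\mu_{\textbf{Z}}\|\to\infty$, the quadratic terms dominate; if an eigenvalue of $\Sigma_{\textbf{W}}$ or $\Sigma_{\textbf{Z}}$ tends to $+\infty$, the linear trace terms beat the logarithmic $\log\det$ terms; and if an eigenvalue tends to $0$, the $-\log\det$ terms diverge to $+\infty$ while the trace terms stay bounded below. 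Hence the sublevel set $\{F\le M+1\}$ forces the means into a bounded region and the eigenvalues of $\Sigma_{\textbf{W}},\Sigma_{\textbf{Z}}$ into a compact interval $[\varepsilon,R]\subset(0,\infty)$, giving a compact subset of the parameter space.

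The remaining ingredient, continuity (indeed smoothness) of $\theta\mapsto F(\theta)$ on the open parameter space, is more routine: it is where I expect the only technical care to be needed, since it requires justifying interchange of limit and integral, which is straightforward because all moments of the Gaussian $q_\theta$ are finite and vary continuously with $\theta$, and $\log\pi$ is a polynomial in $(\textbf{W},\textbf{Z})$. The genuine obstacle is the coercivity estimate above: one must confirm that the negative-entropy contribution, which favors large variances, is always overpowered by the confining quadratic and linear growth coming from the Gaussian prior, and in particular that discarding the likelihood term still yields a valid lower bound — which it does precisely because that term is non-positive.
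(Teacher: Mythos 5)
Your proposal is correct and follows essentially the same route as the paper's proof: both reduce the problem to the finite-dimensional parameter space, discard the non-negative residual term $\tfrac{\tau_0}{2}\|\textbf{X}-\mu_{\textbf{Z}}\mu_{\textbf{W}}'\|^2$ (equivalently, your upper bound on $\log\pi$ by the prior contributions alone), use $\mathbb{E}[\textbf{W}'\textbf{W}]=\mu_{\textbf{W}}'\mu_{\textbf{W}}+d\Sigma_{\textbf{W}}$ to arrive at the identical coercive lower bound combining quadratic terms in the means with $\operatorname{tr}(\cdot)-\log\det(\cdot)$ terms in the covariances, and conclude by a Weierstrass-type argument. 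The only cosmetic differences are that the paper first writes out the full loss $\Psi$ and completes the square before dropping terms, and handles the boundary of the positive-definite cone via nested closed sets $\mathcal{M}_{\varepsilon}$ rather than your (arguably cleaner) compact-sublevel-set formulation.
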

Note that there can be multiple global minima, meaning that different choices of the tuple $(q^{*}_{\textbf{W}}, q^{*}_{\textbf{Z}})$ may result in product densities $q^{*}_{\textbf{W}} q^{*}_{\textbf{Z}}$ that are equally distant from the posterior in terms of KL divergence.
\section{Theoretical analysis of CAVI for $k = 1$}\label{sec_3}\label{1_subseq}
In BPCA, as mentioned, the case $k = 1$ reduces the model to one with a single PC. In particular, $k=1$ implies that the family $\mathcal{Q}$ (recall \eqref{family}) is such that $\mu_{\textbf{W}}\in\mathbb{R}^d$ and $\mu_{\textbf{Z}}\in\mathbb{R}^n$ are vectors, and $\Sigma_\textbf{W}$ and $\Sigma_{\textbf{Z}}$ are positive real numbers. Also, CAVI takes a simpler form governed by the following mappings:
\begin{align*}
  F\colon \mathbb{R}^n\times \mathbb{R}_{+} &\to \mathbb{R}^d\times\mathbb{R}_{+}\\
  (\mu_\textbf{Z},\Sigma_\textbf{Z}) &\mapsto \left(\dfrac{\tau_0 \textbf{X}' \mu_\textbf{Z}}{\tau_0 (n\Sigma_\textbf{Z} + \|\mu_\textbf{Z}\|^2) + \Lambda}, \dfrac{1}{\tau_0 (n\Sigma_\textbf{Z} + \|\mu_\textbf{Z}\|^2) + \Lambda}\right),
\end{align*}
\begin{align*}
  G\colon \mathbb{R}^d\times \mathbb{R}_{+} &\to \mathbb{R}^n\times \mathbb{R}_{+}\\
  (\mu_\textbf{W},\Sigma_\textbf{W}) &\mapsto \left(\dfrac{\tau_0 \textbf{X}\mu_\textbf{W}}{\tau_0 (d\Sigma_\textbf{W} +\|\mu_\textbf{W}\|^2) + 1}, \dfrac{1}{\tau_0 (d\Sigma_{\textbf{W}} + \|\mu_\textbf{W}\|^2) + 1}\right),
\end{align*}
where $\| \cdot \|$ is the Euclidean norm; when applied to a matrix, $\| \cdot \|$ represents the Frobenius norm, unless otherwise stated. CAVI indeed corresponds to the following update equations for the parameters of the matrix normal distributions (which are in fact multivariate normal distributions in the case $k = 1$):
\begin{equation}\label{update_equations}
    (\mu^{(t+1)}_{\textbf{W}},\Sigma^{(t+1)}_{\textbf{W}})=F(\mu^{(t)}_{\textbf{Z}},\Sigma^{(t)}_{\textbf{Z}})\quad\text{and}\quad 
    (\mu^{(t+1)}_{\textbf{Z}},\Sigma^{(t+1)}_{\textbf{Z}})= G(\mu^{(t+1)}_{\textbf{W}},\Sigma^{(t+1)}_{\textbf{W}}),\quad t=0,1,\ldots.
\end{equation}
Thus,
\begin{align}\label{bal}
 (\mu^{(t+1)}_{\textbf{Z}},\Sigma^{(t+1)}_{\textbf{Z}}) &= G \circ F(\mu^{(t)}_{\textbf{Z}},\Sigma^{(t)}_{\textbf{Z}}),\quad t=0,1,\ldots,\ \text{and} \notag \\
 (\mu^{(t+1)}_{\textbf{W}},\Sigma^{(t+1)}_{\textbf{W}}) 
 &= F \circ G(\mu^{(t)}_{\textbf{W}},\Sigma^{(t)}_{\textbf{W}}),\quad t=1,2,\ldots.
\end{align}

To analyze the convergence of CAVI, we leverage these representations and the \textit{polar decomposition} of $\mu_\textbf{Z}^{(t)}$ and $\mu_\textbf{W}^{(t)}$: \[
\mu^{(t)}_{\textbf{Z}} = 
\underbrace{\|\mu^{(t)}_{\textbf{Z}}\|}_{\text{scaling}} 
\underbrace{\left(\frac{\mu^{(t)}_{\textbf{Z}}}{\|\mu^{(t)}_{\textbf{Z}}\|}\right)}_{\text{direction}}\quad\text{and}\quad \mu^{(t)}_{\textbf{W}} = 
\underbrace{\|\mu^{(t)}_{\textbf{W}}\|}_{\text{scaling}} 
\underbrace{\left(\frac{\mu^{(t)}_{\textbf{W}}}{\|\mu^{(t)}_{\textbf{W}}\|}\right)}_{\text{direction}}.
\]
In the following two subsections, the convergence question is addressed in two parts: first for the directional component, and next for the scaling component, together with that of $\Sigma^{(t)}_\textbf{Z}$ and $\Sigma^{(t)}_\textbf{W}$.
\subsection{On the convergence of the directional component}
From now on, we assume that the data matrix $\textbf{X} \in \mathbb{R}^{n \times d}$ has full rank. Using a singular value decomposition of \textbf{X}, we can conclude that $\textbf{X}'\textbf{X}$ and $\textbf{X}\textbf{X}'$ share the same set of non-zero eigenvalues. Therefore, $\textbf{X}'\textbf{X}$ and $\textbf{X}\textbf{X}'$ both have exactly $d$ positive eigenvalues, denoted by $\lambda_1, \ldots, \lambda_d$, and the remaining eigenvalues of $\textbf{X}\textbf{X}'$ are equal to $0$. We also assume that $\lambda_1 > \lambda_2 > ... > \lambda_d$. Let $\mu_1, \mu_2, \ldots, \mu_n$ denote the eigenvectors of $\textbf{X}\textbf{X}'$, forming an orthonormal basis of $\mathbb{R}^n$. Finally, let $\text{sgn}(x)$ denote the sign of a non-zero real number $x$. We are ready to present the main result of this section.
\begin{Theorem}\label{convergence_of_mu_z_fixed_update}
Given an initialization $(\mu^{(0)}_{\textbf{Z}},\Sigma^{(0)}_{\textbf{Z}})\in \mathbb{R}^n\setminus\{0\}\times \mathbb{R}_{+}$, let $c_j= \mu_j'\mu^{(0)}_{\textbf{Z}},$ $j=1,\ldots,n$. Let $i= \min\{1\leqslant j\leqslant n: c_j\neq 0\}$. If $i<d$, then there exist constants $c_0,\ c_0'>0$ which depend on the initialization $\mu^{(0)}_{\textbf{Z}}$ such that for all $t\geqslant 2$, we have 
 $$\left\|\dfrac{\mu^{(t)}_{\textbf{Z}}}{\|\mu^{(t)}_{\textbf{Z}}\|}-\text{sgn}(c_i)\ \mu_{i}\right\|\leqslant c_0\left(1-\dfrac{\rho_i}{\lambda_i}\right)^t,$$
 $$\left\|\dfrac{\mu^{(t)}_{\textbf{W}}}{\|\mu^{(t)}_{\textbf{W}}\|}-\text{sgn}(c_i)\dfrac{{\textbf{X}'\mu_i}}{{\|\textbf{X}'\mu_i\|}}\right\|\leqslant c_0'\left(1-\dfrac{\rho_i}{\lambda_i}\right)^t,$$
 where $\rho_i =\lambda_i-\lambda_{i+1}$.
\end{Theorem}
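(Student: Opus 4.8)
The plan is to observe that the \emph{direction} of the CAVI iterates evolves exactly as in the power iteration algorithm applied to $\textbf{X}\textbf{X}'$, and then to invoke its classical convergence analysis. The decisive structural fact, which I would prove by induction on $t$, is that
$$\mu^{(t)}_\textbf{Z}=\alpha_t\,(\textbf{X}\textbf{X}')^t\,\mu^{(0)}_\textbf{Z}\qquad\text{for some scalar }\alpha_t>0.$$
Indeed, reading off the maps $F$ and $G$ from \eqref{update_equations}, $\mu^{(t+1)}_\textbf{W}$ is a \emph{positive} scalar multiple of $\textbf{X}'\mu^{(t)}_\textbf{Z}$ (the denominator $\tau_0(n\Sigma^{(t)}_\textbf{Z}+\|\mu^{(t)}_\textbf{Z}\|^2)+\Lambda$ being strictly positive), and $\mu^{(t+1)}_\textbf{Z}$ is in turn a positive multiple of $\textbf{X}\mu^{(t+1)}_\textbf{W}$, hence of $\textbf{X}\textbf{X}'\mu^{(t)}_\textbf{Z}$. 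The covariance updates merely rescale and never flip sign, so the nuisance normalizing constants cancel out of the directional dynamics and $\mu^{(t)}_\textbf{Z}/\|\mu^{(t)}_\textbf{Z}\|$ is precisely the normalized $t$-th power iterate of $\textbf{X}\textbf{X}'$.

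Next I would run the standard power-iteration expansion. Writing $\mu^{(0)}_\textbf{Z}=\sum_{j=1}^n c_j\mu_j$ in the orthonormal eigenbasis and using $\lambda_j=0$ for $j>d$, one gets for $t\geq 1$ that
$$(\textbf{X}\textbf{X}')^t\mu^{(0)}_\textbf{Z}=c_i\lambda_i^t\Big(\mu_i+\sum_{j=i+1}^{d}\tfrac{c_j}{c_i}\big(\tfrac{\lambda_j}{\lambda_i}\big)^t\mu_j\Big)=:c_i\lambda_i^t\,v_t.$$
Since the $\mu_i$-coordinate of $v_t$ is fixed at $1$ and the remaining coordinates are orthogonal, $\|v_t\|\geq 1$ uniformly, while $\|v_t-\mu_i\|\leq C(\lambda_{i+1}/\lambda_i)^t$ with $C=|c_i|^{-1}(\sum_{j=i+1}^d c_j^2)^{1/2}$; here $\lambda_{i+1}$ is the largest competing eigenvalue and the hypothesis $i<d$ guarantees $\lambda_{i+1}>0$, so the ratio lies in $(0,1)$. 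A short normalization estimate (using $\|v_t\|\geq 1$ and the reverse triangle inequality, which give $\|v_t/\|v_t\|-\mu_i\|\leq 2\|v_t-\mu_i\|$) together with the identity $1-\rho_i/\lambda_i=\lambda_{i+1}/\lambda_i$ then delivers the first inequality with $c_0=2C$, valid already for $t\geq 1$.

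For the $\mu_\textbf{W}$ direction I would use that $\mu^{(t)}_\textbf{W}$ is a positive multiple of $\textbf{X}'\mu^{(t-1)}_\textbf{Z}$, so its normalized version equals $\phi(u_{t-1})$ with $\phi(v)=\textbf{X}'v/\|\textbf{X}'v\|$ and $u_{t-1}=\mu^{(t-1)}_\textbf{Z}/\|\mu^{(t-1)}_\textbf{Z}\|$, while the target is $\phi(\text{sgn}(c_i)\mu_i)=\text{sgn}(c_i)\textbf{X}'\mu_i/\|\textbf{X}'\mu_i\|$. The elementary bound $\|p/\|p\|-q/\|q\|\|\leq 2\|p-q\|/\|p\|$ with $p=\textbf{X}'u_{t-1}$ and $q=\textbf{X}'\,\text{sgn}(c_i)\mu_i$ yields $\|\phi(u_{t-1})-\phi(\text{sgn}(c_i)\mu_i)\|\leq 2\sqrt{\lambda_1}\,\|u_{t-1}-\text{sgn}(c_i)\mu_i\|/\|\textbf{X}'u_{t-1}\|$, using $\|\textbf{X}'v\|\leq\sqrt{\lambda_1}\|v\|$. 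The crucial uniform lower bound is $\|\textbf{X}'u_{t-1}\|\geq\sqrt{\lambda_d}$: for $t-1\geq 1$ the iterate $u_{t-1}$ lies in $\mathrm{span}\{\mu_i,\ldots,\mu_d\}$, on which $\textbf{X}\textbf{X}'$ acts with eigenvalues at least $\lambda_d>0$. Combining this with the first inequality taken at index $t-1$ gives the second inequality for $t\geq 2$, the extra factor $\lambda_i/\lambda_{i+1}$ from the index shift being absorbed into $c_0'$.

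The conceptually decisive step is the first one: recognizing that the normalizing denominators (involving $\Sigma^{(t)}_\textbf{Z},\Sigma^{(t)}_\textbf{W}$ and the squared norms) play no role in the directional dynamics, exactly reducing CAVI to power iteration on $\textbf{X}\textbf{X}'$. After that the reasoning is classical, and the only genuinely delicate point is the uniform lower bound $\|\textbf{X}'u_{t-1}\|\geq\sqrt{\lambda_d}$ in the $\mu_\textbf{W}$ step — it is exactly what forces the threshold $t\geq 2$, ensuring $u_{t-1}$ already lives in the positive-eigenvalue subspace, and it is what lets the Lipschitz constant of $\phi$ be taken uniform rather than merely local.
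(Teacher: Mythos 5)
Your proof is correct, and its first half is essentially the paper's argument: both reduce the directional dynamics to power iteration by observing that the positive scalar denominators in $F$ and $G$ cancel, so that $\mu^{(t)}_\textbf{Z}/\|\mu^{(t)}_\textbf{Z}\|=(\textbf{X}\textbf{X}')^t\mu^{(0)}_\textbf{Z}/\|(\textbf{X}\textbf{X}')^t\mu^{(0)}_\textbf{Z}\|$, then expand in the eigenbasis, isolate $v_t=\mu_i+\sum_{j>i}(c_j/c_i)(\lambda_j/\lambda_i)^t\mu_j$, and finish with the normalization estimate of \autoref{rate_lemma} (you even handle general $i$ directly, where the paper sets $i=1$ ``without loss of generality''). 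Where you genuinely diverge is the second inequality. The paper treats $\mu^{(1)}_\textbf{W}$ as a fresh initialization, expands it in the eigenbasis $\nu_1,\ldots,\nu_d$ of $\textbf{X}'\textbf{X}$ with coefficients $\sqrt{\lambda_j}c_j$, and reruns the entire power-iteration analysis a second time for the iteration $\mu^{(t+1)}_\textbf{W}/\|\mu^{(t+1)}_\textbf{W}\|=\textbf{X}'\textbf{X}\mu^{(t)}_\textbf{W}/\|\textbf{X}'\textbf{X}\mu^{(t)}_\textbf{W}\|$. You instead transfer the already-proved $\mu_\textbf{Z}$ bound through the map $\phi(v)=\textbf{X}'v/\|\textbf{X}'v\|$, using $\|\textbf{X}'v\|\leqslant\sqrt{\lambda_1}\|v\|$ above and the uniform lower bound $\|\textbf{X}'u_{t-1}\|\geqslant\sqrt{\lambda_d}$ below, the latter justified because for $t\geqslant 2$ the iterate $u_{t-1}$ lies in $\mathrm{span}\{\mu_i,\ldots,\mu_d\}$, where $\textbf{X}\textbf{X}'$ is bounded below by $\lambda_d>0$. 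This is more economical (one power-iteration analysis instead of two) and makes explicit why $t\geqslant 2$ is the natural threshold for the $\mu_\textbf{W}$ bound; the cost is a worse constant $c_0'$ (it picks up $\sqrt{\lambda_1/\lambda_d}$ and the index-shift factor $\lambda_i/\lambda_{i+1}$), whereas the paper's rerun yields the sharper $c_0'=\tau_0c_0\sqrt{\lambda_1}/\Lambda$. Both routes are sound; the one genuinely delicate point in yours, the uniform lower bound on $\|\textbf{X}'u_{t-1}\|$, is stated and justified correctly.
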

The proof of \autoref{convergence_of_mu_z_fixed_update} can be found in \autoref{ap_proof_main_theorem}. It is to be noted that ${\textbf{X}'\mu_1}/{\|\textbf{X}'\mu_1\|}$ corresponds to the first principal direction of the data, that is, the direction along which the data matrix \textbf{X} has maximum variance. If $c_1 \neq 0$, then according to \autoref{convergence_of_mu_z_fixed_update} the iterates $\mu_\textbf{Z}^{(t)}/\|\mu_\textbf{Z}^{(t)}\|$ converge exponentially to the first (normalized) PC obtained with usual PCA and the convergence rate is determined by the ratio $\rho_1 / \lambda_1$. We have $c_1 = 0$ if $\mu_\textbf{Z}^{(0)}$ is equal to one of the vectors $\mu_2, ..., \mu_n$ or a linear combination of these. It means that if $\mu_\textbf{Z}^{(0)}$ is set to a realization of a continuous random variable, we are sure (with probability $1$) that this event will not happen and thus that $c_1 \neq 0$. In this case, iterates $\mu^{(t)}_\textbf{W}/\|\mu^{(t)}_\textbf{W}\|$ converge exponentially to the first principal direction. It is natural to obtain such a convergence, as in the case $k = 1$, the orthogonality underlying PCA is automatically obtained in BPCA. When $k > 1$, unfortunately, this is not the case, and the parameters obtained by variational inference of BPCA do not respect orthogonality (as we observed in numerical experiments). In \autoref{fig:convergence_direction}, we show that the bounds in \autoref{convergence_of_mu_z_fixed_update} are tight.
\begin{figure}[ht]
    \centering
    \includegraphics[width=0.8\textwidth]{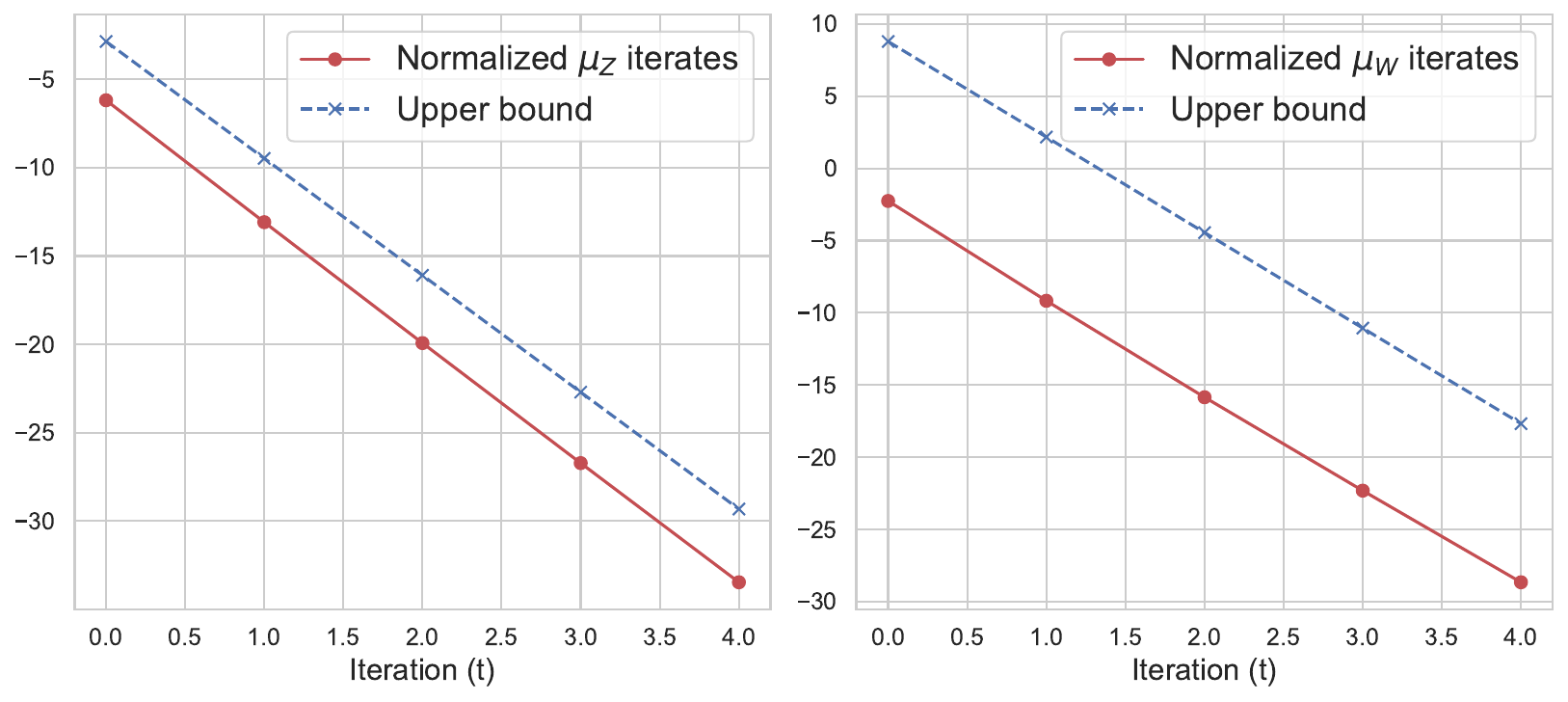}
    \caption{$\|\mu^{(t)}_{\textbf{Z}}/\|\mu^{(t)}_{\textbf{Z}}\|-\text{sgn}(c_1)\mu_1\|$ (left panel) and  $\|\mu^{(t)}_{\textbf{W}}/\|\mu^{(t)}_{\textbf{W}}\|-\text{sgn}(c_1)\textbf{X}'\mu_1/\|\textbf{X}'\mu_1\|\|$ (right panel) as the algorithm progresses, and upper bounds provided in \autoref{convergence_of_mu_z_fixed_update}, on the log scale.}
    \label{fig:convergence_direction}
\end{figure}

In the proof of \autoref{convergence_of_mu_z_fixed_update},  we use that the iterates $\mu_\textbf{Z}^{(t)}/\|\mu_\textbf{Z}^{(t)}\|$ can be written as 
\begin{equation}\label{power_iter}
    \mu_\textbf{Z}^{(t)}/\|\mu_\textbf{Z}^{(t)}\|= \left(\textbf{X}\textbf{X}'\right)^t\mu^{(0)}_\textbf{Z}/ \|\left(\textbf{X}\textbf{X}'\right)^t\mu^{(0)}_\textbf{Z}\|.
\end{equation}
We have an analogous expression for the iterates $\mu_\textbf{W}^{(t)}/\|\mu_\textbf{W}^{(t)}\|$. The expression in \eqref{power_iter} corresponds to the power iteration algorithm that can be used for computing the leading normalized singular vectors of $\textbf{X}\textbf{X}'$. The expression for the iterates $\mu_\textbf{W}^{(t)}/\|\mu_\textbf{W}^{(t)}\|$ corresponds to the power iteration algorithm, but for computing the leading normalized singular vectors of $\textbf{X}'\textbf{X}$.
\subsection{On the convergence of the scaling component and $(\Sigma^{(t)}_{\textbf{Z}},\Sigma^{(t)}_{\textbf{W}})$}
We begin by analyzing the sequence $(\|\mu^{(t)}_{\textbf{Z}}\|,\Sigma^{(t)}_{\textbf{Z}})$. Given \autoref{convergence_of_mu_z_fixed_update}, one can exploit that the normalized version of $\mu_\textbf{Z}^{(t)}$ converges to $\mu_1$ (if $c_1\neq 0$) and initialize CAVI as follows:  
\[
(\mu^{(0)}_{\textbf{Z}},\Sigma^{(0)}_{\textbf{Z}}) = (a^{(0)}\mu_1, b^{(0)}),
\]
where $(a^{(t)}, b^{(t)})\in\mathbb{R}_+^2$ with $a^{(t)}$ representing the norm of $\mu_\textbf{Z}^{(t)}$. We now investigate whether the sequence $(a^{(t)}, b^{(t)})$ admits a limit.

With the initialization $(\mu^{(0)}_{\textbf{Z}},\Sigma^{(0)}_{\textbf{Z}}) = (a^{(0)}\mu_1, b^{(0)}),$ we are in fact guaranteed that $c_1\neq 0$ in \autoref{convergence_of_mu_z_fixed_update} and CAVI updates are governed by the following mapping:  
\begin{align*}
  \Phi\colon \mathbb{R}_{+}^2 &\to \mathbb{R}_{+}^2\\
  (a,b) &\mapsto \left(\dfrac{\tau_0^2 a L(a,b)\lambda_1}{d\tau_0L(a,b) +\tau_0^3a^2\lambda_1+L(a,b)^2}, \dfrac{L(a,b)^2}{d\tau_0L(a,b) +\tau_0^3a^2\lambda_1+L(a,b)^2}\right),
\end{align*}
where $L(a,b) = \tau_0(nb + a^2) + \Lambda$. In other words,  
\[
(a^{(t+1)},b^{(t+1)}) = \Phi(a^{(t)},b^{(t)}), \quad t=0,1,\ldots.
\]
If the sequence $(a^{(t)}, b^{(t)})$ converges to a limit $(a^*, b^*)$, then this limit must satisfy the fixed point condition $\Phi(a^*, b^*) = (a^*, b^*)$ given that the map $\Phi$ is continuous. Consequently, characterizing all such fixed points provides insights into the possible limit points of the CAVI iterates. The next proposition is about this.
\begin{Proposition}
\label{fixed_pts_possibility}
Assume that $(a^*,b^*)$ is a fixed point of $\Phi$. Then, $(a^*)^2$ must be a root of the following quadratic polynomial:
$$P(u)=\lambda_1\tau_0^2 u^2+\tau_0 u \Bigl[2\lambda_1\Lambda+(d-\lambda_1\tau_0)(\lambda_1\tau_0-n)+(\lambda_1\tau_0-n)^2\Bigr]
+[\lambda_1\Lambda^2+(d-\lambda_1\tau_0)(\lambda_1\tau_0-n)\Lambda].$$
Given $(a^*)^2$ a positive root of $P$, if any, then $b^*$ is uniquely given by
$$b^* = \dfrac{\Lambda+\tau_0\left(a^*\right)^2}{\tau_0^2\lambda_1-n\tau_0}.$$
Consequently, the map $\Phi$ has at most $2$ fixed points.
\end{Proposition}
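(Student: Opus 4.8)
The plan is to reduce the two fixed-point equations to a single polynomial equation in $u=(a^*)^2$. Write $L=L(a^*,b^*)$ and let $D=d\tau_0 L+\tau_0^3(a^*)^2\lambda_1+L^2$ denote the common denominator appearing in both components of $\Phi$. The fixed-point condition $\Phi(a^*,b^*)=(a^*,b^*)$ then reads $a^*=\tau_0^2 a^* L\lambda_1/D$ and $b^*=L^2/D$. Since we are interested in nontrivial fixed points with $a^*>0$ (the boundary case $a^*=0$ corresponding to $\mu_{\textbf{Z}}=0$, already set aside as trivial), the first equation can be divided by $a^*$ to yield the clean identity $D=\tau_0^2\lambda_1 L$. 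This single observation is what drives the whole argument.

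Next I would substitute this identity into the second equation: $b^*=L^2/D=L^2/(\tau_0^2\lambda_1 L)=L/(\tau_0^2\lambda_1)$, so $L=\tau_0^2\lambda_1 b^*$. Expanding $L=\tau_0(n b^*+(a^*)^2)+\Lambda$ and equating gives the linear relation $b^*(\tau_0^2\lambda_1-n\tau_0)=\Lambda+\tau_0(a^*)^2$, which is exactly the claimed formula for $b^*$ and simultaneously shows that $b^*$ is uniquely determined by $(a^*)^2$. One must assume here that $\tau_0^2\lambda_1-n\tau_0\neq 0$: if instead $\tau_0\lambda_1=n$, the same relation forces $0=\Lambda+\tau_0(a^*)^2>0$, a contradiction, so no fixed point exists and the final conclusion holds vacuously.

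For the polynomial itself I would use the identity $D=\tau_0^2\lambda_1 L$ a second time, now unpacking the definition of $D$, to obtain $L^2+\tau_0 L(d-\tau_0\lambda_1)+\tau_0^3\lambda_1(a^*)^2=0$. The key simplification is that back-substituting the formula for $b^*$ into $L$ collapses it to an affine function of $u=(a^*)^2$; a short computation gives $L=\tau_0\lambda_1(\Lambda+\tau_0 u)/(\tau_0\lambda_1-n)$. Inserting this affine expression turns the displayed relation into a quadratic in $u$, and multiplying through by the positive factor $(\tau_0\lambda_1-n)^2/(\tau_0^2\lambda_1)$ should reproduce $P(u)$ term by term. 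Thus $(a^*)^2$ is a root of $P$.

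Finally, since $P$ has degree two (its leading coefficient $\lambda_1\tau_0^2$ is strictly positive), it has at most two real roots, hence at most two positive ones; each positive root $u$ determines a unique $a^*=\sqrt{u}>0$ and, through the formula of the second step, a unique $b^*$, giving the bound of at most two fixed points. I expect the only genuine work to lie in the last step, namely verifying that the coefficients of the cleared quadratic coincide with the stated coefficients of $P(u)$; this is pure bookkeeping rather than a conceptual difficulty, since the structural reduction is forced the moment the first equation is divided by $a^*$.
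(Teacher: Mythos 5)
Your argument is correct and follows essentially the same route as the paper's proof: divide the first fixed-point equation by $a^*>0$ to get $b^*=L/(\tau_0^2\lambda_1)$, deduce the affine formula for $b^*$ (and hence for $L$) in terms of $u=(a^*)^2$, and substitute into the quadratic relation $L^2+\tau_0 L(d-\tau_0\lambda_1)+\tau_0^3\lambda_1 u=0$ to recover $P(u)$; the final coefficient check you defer does work out exactly as stated. Your explicit treatment of the degenerate case $\tau_0\lambda_1=n$ is a small but welcome addition that the paper passes over silently.
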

The proof of \autoref{fixed_pts_possibility} can be found in \autoref{ap_proof_main_theorem}. A user can exploit \autoref{fixed_pts_possibility} to conclude whether $(a^{(t)}, b^{(t)})$, and thus CAVI, converge. Indeed, as noted in \cite{arnese2024convergence}, CAVI iterates may not always converge. The user can numerically solve the equation $P(u) = 0$. If there is no positive root, then CAVI does not converge. If there is at least one, then the user can apply the map $\Phi$ on $(a^*, b^*)$ given in \autoref{fixed_pts_possibility}, and verify that the output is equal to $(a^*, b^*)$. If this is the case,  then $(a^*, b^*)$ is a fixed point and CAVI converges to this fixed point. In this case, CAVI actually does not need to be implemented as the parameters of the normal distribution of \textbf{Z} can be set to $(a^* \mu_1, b^* I_n)$ and that of \textbf{W} can be set to $(\mu_\textbf{W}^*,  \Sigma_\textbf{W}^* I_d)$ with $(\mu_\textbf{W}^*, \Sigma_\textbf{W}^*)$ computed using $F$  (see \eqref{update_equations}). If there are two positive roots, the user should also compute the ELBO values to find the optimal solution. The polynomial $P$ has a unique positive root if the constant term is strictly negative, which is the case if  
\((\lambda_1)^2 \tau_0^2 - \lambda_1 (\Lambda + (d + n) \tau_0) + d n > 0\).  
This holds when \(\lambda_1\) lies outside the interval \((\alpha, \beta)\), where  
\(\alpha = \frac{\Lambda + (d + n) \tau_0 - \sqrt{(\Lambda + (d + n) \tau_0)^2 - 4 \tau_0^2 d n}}{2 \tau_0^2}\) and   
\(\beta = \frac{\Lambda + (d + n) \tau_0 + \sqrt{(\Lambda + (d + n) \tau_0)^2 - 4 \tau_0^2 d n}}{2 \tau_0^2}\). We observed all this in numerical experiments (see \autoref{numerical_k1} for an example).

A result from dynamical systems \citep[Theorem 10.1]{teschl2012ordinary} states that, if the Jacobian of $\Phi$ evaluated at $(a^*, b^*)$ has eigenvalues strictly smaller than $1$ in absolute value, then the convergence of $(a^{(t)}, b^{(t)})$ is exponential. Applying this result in our context is not feasible as the map $\Phi$ is complex and the solution of $P$ implicit. Therefore, it is not possible using such tools to provide a theoretical guarantee of exponential convergence of $(a^{(t)}, b^{(t)})$ and thus of CAVI iterates (recall \autoref{convergence_of_mu_z_fixed_update}). However, a user can continue the procedure mentioned above for finding the optimal solution $(a^*, b^*)$ (if it exists) and numerically evaluate the Jacobian of $\Phi$ to verify the exponential convergence. We observed in numerical experiments that the Jacobian has eigenvalues strictly lesser than $1$ in absolute value and exponential convergence; see \autoref{fig:ab_scaling_convergence} for an example. 

Even though we partially address the question of exponential convergence of CAVI using dynamical systems tools, we believe that the results of this section in the case $k = 1$  are of interest as they allow a precise understanding of CAVI and variational inference in this case (for instance, that the same results as PCA are retrieved). In the next section, we provide a general result for any $k$ about exponential convergence of CAVI.

\begin{figure}[h]
    \centering
    \includegraphics[width=0.9\textwidth]{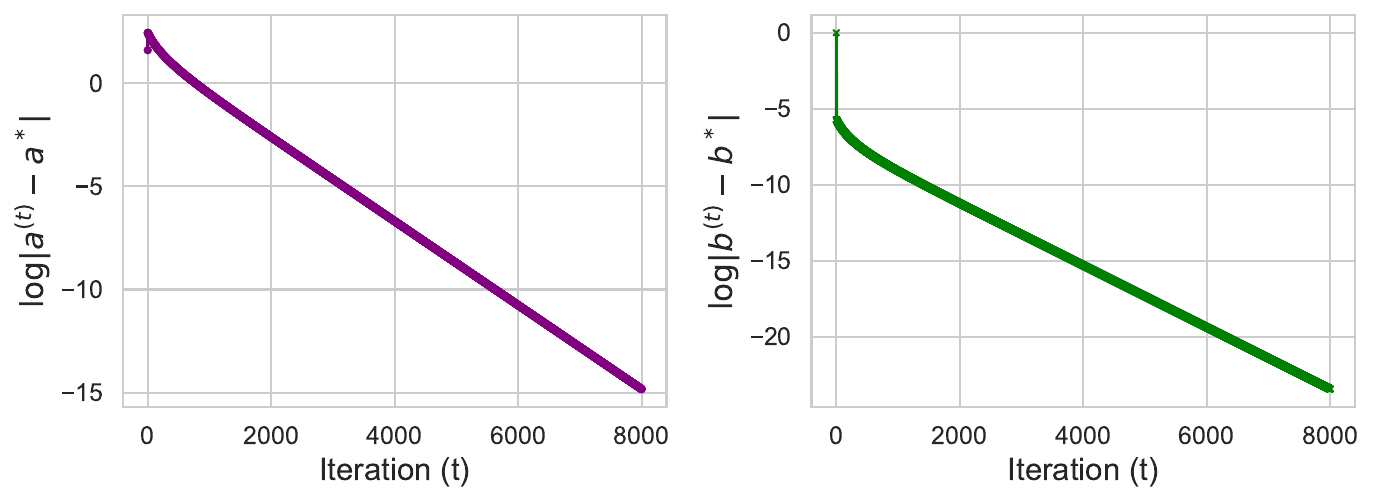}
    \caption{$|a^{(t)} - a^*|$ (left panel) and $|b^{(t)} - b^*|$ (right panel) as the algorithm progresses, on log scale.}
    \label{fig:ab_scaling_convergence}
\end{figure}
\section{Theoretical analysis of CAVI in the general case}\label{genk_subseq}
In this section, we analyze CAVI for BPCA when $k$ is an arbitrary integer. We view $\text{KL}(q\| \pi)$ as a function of the variational parameters of $q$. Thus, we identify a tuple $q= (\mathcal{N}(\mu_{\textbf{W}},I_d\otimes \Sigma_{\textbf{W}}), \mathcal{N}(\mu_{\textbf{Z}},I_n\otimes \Sigma_{\textbf{Z}}))$ by its corresponding parameters $(\mu_{\textbf{W}},\Sigma_{\textbf{W}},\mu_{\textbf{Z}},\Sigma_{\textbf{Z}})\in \mathbb{R}^{d\times k}\times \mathbb{S}^k_{+}\times\mathbb{R}^{n\times k}\times\mathbb{S}^k_{+}$, where $\mathbb{S}^k_{+}$ denotes the set of all positive definite symmetric matrices, and we will thus abuse notation by writing $q=(\mu_{\textbf{W}},\Sigma_{\textbf{W}},\mu_{\textbf{Z}},\Sigma_{\textbf{Z}})$ to simplify. We define the map $\Psi:\mathbb{R}^{d\times k}\times \mathbb{S}^k_{+}\times\mathbb{R}^{n\times k}\times\mathbb{S}^k_{+}\to\mathbb{R}$ as
\begin{equation}\label{loss_func_first_appear}
\Psi \left(\mu_{\textbf{W}},\Sigma_{\textbf{W}},\mu_{\textbf{Z}},\Sigma_{\textbf{Z}}\right)= \text{KL}\left(q_{\textbf{W}}q_{\textbf{Z}}\| \pi\right) -\log p(\textbf{X}).
\end{equation}
The optimization problem in \eqref{problem} is equivalent to minimizing $\Psi$. 

Let $\mathcal{M}$ denote the set of all stationary points (which includes minimizers) to the optimization problem in \eqref{problem}. In \autoref{non_void}, we proved that $\mathcal{M}\neq \varnothing$ and therefore we are allowed to consider a stationary point \( q^* = (\mu^*_{\textbf{W}},\Sigma^*_{\textbf{W}},\mu^*_{\textbf{Z}},\Sigma^*_{\textbf{Z}})\in\mathcal{M} \). For any $(q_{\textbf{W}}, q_{\textbf{Z}})$, it is shown \citep[Lemma 2]{bhattacharya2023convergence} that
\begin{align}
\text{KL}\left(q_{\textbf{W}} q_{\textbf{Z}} \,\|\, \pi\right) 
- \text{KL}\left(q^*_{\textbf{W}} q^*_{\textbf{Z}} \,\|\, \pi\right)
= \text{KL}(q_{\textbf{W}} \,\|\, q^*_{\textbf{W}})
+ \text{KL}(q_{\textbf{Z}} \,\|\, q^*_{\textbf{Z}})
- \Delta_{q^*}(q_{\textbf{W}}, q_{\textbf{Z}}),
\label{interaction}
\end{align}
where
\begin{equation}\label{covariance}
\Delta_{q^*}(q_\textbf{W},q_{\textbf{Z}})= \iint \left(q_\textbf{W}(\textbf{W})-q^*_{\textbf{W}}(\textbf{W})\right)\left(q_\textbf{Z}(\textbf{Z})-q^*_{\textbf{Z}}(\textbf{Z})\right)\log \pi(\textbf{W,\ \textbf{Z}}) \ \d\textbf{W} \ \d\textbf{Z}.
\end{equation}
The mathematical object in \eqref{covariance} was first introduced in \cite{bhattacharya2023convergence}. It controls the CAVI dynamics: it acts as a measure of covariance between the blocks $q_\textbf{W}$ and $q_\textbf{Z}$. It is worth noting that if the posterior density would factorize exactly into two marginal densities, that is, if $\textbf{W}$ and $\textbf{Z}$ would be \textit{a posteriori} independent, then $\Delta_{q^*}(q_{\textbf{W}}, q_{\textbf{Z}})$ would vanish for all $q_{\textbf{W}}$ and $q_{\textbf{Z}}$. In this case, the optimization problem in \eqref{problem} is equivalent to separate minimizations of $ \text{KL}(q_{\textbf{W}} \,\|\, q^*_{\textbf{W}})$ and $ \text{KL}(q_{\textbf{Z}} \,\|\, q^*_{\textbf{Z}})$, leading to a convergence of CAVI in one step using equations \eqref{iterw} and \eqref{iterz}.

We thus expect a fast convergence of CAVI when $\Delta_{q^*}(q_\textbf{W},q_\textbf{Z})$ is small, relatively to $\text{KL}(q_{\textbf{W}}\|q^*_{\textbf{W}})$ and $\text{KL}(q_{\textbf{Z}}\|q^*_{\textbf{Z}})$, on the trajectory. In \cite{bhattacharya2023convergence}, a sufficient condition, related to what is referred to as the $1/2$-\textit{generalized correlation}, is proposed:
\begin{equation}\label{gen_correlation_first_appear}
  \text{GCorr}_{1/2}(r_0):=\sup\limits_{\substack{q_{\textbf{W}}\in \mathcal{B}^*_{\textbf{W}}(r_0)\setminus \{q^*_\textbf{W}\},\\q_{\textbf{Z}}\in \mathcal{B}^*_{\textbf{Z}}(r_0)\setminus \{q^*_\textbf{Z}\}}}
\dfrac{|\Delta_{q^*}(q_{\textbf{W}},q_{\textbf{Z}})|}
{\sqrt{D_{\text{KL},1/2}\left(q_{\textbf{W}}\|q^*_{\textbf{W}}\right)
D_{\text{KL},1/2}\left(q_{\textbf{Z}}\|q^*_{\textbf{Z}}\right)}}\in (0,2),  
\end{equation}
where $\mathcal{B}^*_{\textbf{Z}}(r_0)=\{q_{\textbf{Z}}:\text{KL}(q^*_{\textbf{Z}}\|q_{\textbf{Z}})\leqslant r_0\}$ is a KL ball of radius $r_0$ around $q_{\textbf{Z}}^*$, with an analogous definition for $\mathcal{B}^*_{\textbf{W}}(r_0)$, and $D_{\text{KL},1/2}(q_\textbf{Z}\|q^*_{\textbf{Z}})=(\text{KL}(q_\textbf{Z}\|q^*_{\textbf{Z}})+\text{KL}(q^*_\textbf{Z}\|q_{\textbf{Z}}))/2$ is the symmetric KL divergence between $q_\textbf{Z}$ and $q_\textbf{Z}^*$, with again an analogous definition for $D_{\text{KL},1/2}(q_\textbf{W}\|q^*_{\textbf{W}})$. In Theorem 4 of \cite{bhattacharya2023convergence}, it is assumed that CAVI is initialized within a symmetric KL ball, that is $q^{(0)}_{\textbf{Z}}\in\{q_{\textbf{Z}}:D_{\text{KL},1/2}(q_{\textbf{Z}}\|q^*_{\textbf{Z}})\leqslant r_0/2\},$ with $\text{GCorr}(r_0)\in(0,2)$, thus effectively ensuring that the generalized correlation is small on the trajectory. The denominator in the generalized correlation is there to make sure that $\Delta_{q^*}(q_\textbf{W},q_{\textbf{Z}})$ is small relatively to $D_{\text{KL},1/2}(q_{\textbf{Z}}\|q^*_{\textbf{Z}})$ and $D_{\text{KL},1/2}(q_{\textbf{W}}\|q^*_{\textbf{W}}),$ which is a stronger requirement than being small relatively to $\text{KL}(q_{\textbf{Z}}\|q^*_{\textbf{Z}})$ and $\text{KL}(q_{\textbf{W}}\|q^*_{\textbf{W}})$.

In our context of BPCA, by using the expression of $\log \pi(\textbf{W}, \ \textbf{Z})$ in \eqref{covariance} and after some simplifications, the condition $\text{GCorr}_{1/2}(r_0)\in(0,2)$ corresponds to 
\begin{equation}\label{GCorr_rewritten}
 \sup\limits_{\substack{q_{\textbf{W}}\in \mathcal{B}^*_{\textbf{W}}(r_0)\setminus \{q^*_\textbf{W}\},\\q_{\textbf{Z}}\in \mathcal{B}^*_{\textbf{Z}}(r_0)\setminus \{q^*_\textbf{Z}\}}}
\dfrac{\left|\displaystyle\iint \left(q_{\textbf{W}}(\textbf{W}) - q^*_{\textbf{W}}(\textbf{W})\right)\left(q_{\textbf{Z}}(\textbf{Z}) - q^*_{\textbf{Z}}(\textbf{Z})\right) \|\textbf{X} - \textbf{Z}\textbf{W}'\|^2 \, \mathrm{d}\textbf{W} \, \mathrm{d}\textbf{Z}\right|}
{\sqrt{D_{\text{KL},1/2}\left(q_{\textbf{W}}\|q^*_{\textbf{W}}\right)
D_{\text{KL},1/2}\left(q_{\textbf{Z}}\|q^*_{\textbf{Z}}\right)}}\in(0,2).  
\end{equation}
To have a small $\Delta_{q^*}(q_\textbf{W},q_{\textbf{Z}})$ relatively to $D_{\text{KL},1/2}(q_{\textbf{Z}}\|q^*_{\textbf{Z}})$ and $D_{\text{KL},1/2}(q_{\textbf{W}}\|q^*_{\textbf{W}})$, we thus need to have small values for $\|\textbf{X} - \textbf{Z}\textbf{W}'\|$ over the integral. In other words, \textbf{X} must be sufficiently well-approximated by the low-rank factorization $\textbf{Z}\textbf{W}'$ over the integral.

To verify the condition, or equivalently to have a uniform control of the generalized correlation on the trajectory, we developed a sharp lower bound on the symmetric KL between normal distributions (\autoref{KL_lowerbound}). The result is of independent interest in information theory, particularly in the context of sparse source retrieval (see \cite{ghodhbani2019close}). This bound yields a product of terms like $\|\mu_\textbf{Z} - \mu_\textbf{Z}^*\|\|\mu_\textbf{W} - \mu_\textbf{W}^*\|$. For the numerator, using the properties of the trace operator (see the proof of \autoref{delta_upperbound}), we can write
$$\Delta_{q^*}\left(q_\textbf{W},q_\textbf{Z}\right)= \tau_0 \text{tr}\left(\left(\mu_{\textbf{W}}-\mu^*_{\textbf{W}}\right)\left(\mu'_{\textbf{Z}}-[\mu^*_{\textbf{Z}}]'\right)\textbf{X}\right)-\dfrac{\tau_0}{2}\text{tr}\left(\left(\Gamma_{\textbf{W}}-\Gamma_{\textbf{W}}^*\right)\left(\Gamma'_{\textbf{Z}}-[\Gamma^*_{\textbf{Z}}]'\right)\right),$$
where $\Gamma_{\textbf{W}}=d\Sigma_{\textbf{W}}+\mu_{\textbf{W}}'\mu_{\textbf{W}}$ and $\Gamma_{\textbf{Z}}=n\Sigma_{\textbf{Z}}+\mu_{\textbf{Z}}'\mu_{\textbf{Z}}$, $\Gamma^*_{\textbf{W}}$ and $ \Gamma^*_{\textbf{Z}}$ being defined analogously. We then performed a careful analysis of the numerator to obtain an upper bound involving terms that cancel out with aforementioned terms in the denominator. This yields a bound on the ratio in \eqref{GCorr_rewritten} which is independent of $q_\textbf{Z}$ and $q_\textbf{W}$, guaranteeing a uniform control on the generalized correlation on the trajectory. The uniform bound obtained translates into: the parameters of the tuple $(q_\textbf{Z}^*, q_\textbf{W}^*)$ is required to belong to $\mathcal{I}\subset \mathbb{R}^{d\times k}\times \mathbb{S}^k_{+}\times\mathbb{R}^{n\times k}\times\mathbb{S}^k_{+}$, denoting the set of parameters satisfying the condition 
\begin{equation}\label{GC}
\max \Bigg\{  
\tau_0\left(\|\Sigma_{\mathbf{W}}\| \cdot \|\Sigma_{\mathbf{Z}}\|\right)^{1/2}  
\frac{\|\mathbf{X}\| + 2\|\mu_{\mathbf{W}}\| \cdot \|\mu_{\mathbf{Z}}\|}{\gamma_0},\ 
e\tau_0 \left(\|\Sigma_{\mathbf{W}}\| \cdot \|\Sigma_{\mathbf{Z}}\|\right) \sqrt{nd},\ 
\end{equation}
\begin{equation*}
\tau_0\left(\|\Sigma_{\mathbf{W}}\| \cdot \|\Sigma_{\mathbf{Z}}\|^{1/2}\right)  
\frac{\sqrt{2de} \|\mu_{\mathbf{Z}}\|}{\gamma_0^{1/2}},\ 
\tau_0\left(\|\Sigma_{\mathbf{W}}\|^{1/2} \cdot \|\Sigma_{\mathbf{Z}}\|\right)  
\frac{\sqrt{2ne} \|\mu_{\mathbf{W}}\|}{\gamma_0^{1/2}}  
\Bigg\} < 1,
\end{equation*}
where $\gamma_0=(1+e^{-1})/4$. We now state two assumptions that will be used in the main result of this section.
\begin{Assumption}\label{A1}
The intersection $\mathcal{M}\cap\mathcal{I}$ is non-empty, that is, at least one stationary point $q^*$ has parameters satisfying the condition in \eqref{GC}.
\end{Assumption}
\begin{Assumption}\label{A2}
The loss function in \eqref{loss_func_first_appear} has a non-singular Hessian at every stationary point.
\end{Assumption}
\begin{Theorem}[Local contraction of KL]\label{main_theorem_bpca}
Suppose Assumptions \ref{A1} and \ref{A2} hold. Let $q^*\in\mathcal{M}\cap\mathcal{I}$. There exists $r^*>0$ such that, if the initialization in \autoref{algo1} (see \autoref{algos}) satisfies $q^{(0)}_\textbf{Z}\in\mathcal{B}^*_{\textbf{Z}}(r^*)$, then for any $t\geqslant 1$,
$${\text{KL}}(q^{(t+1)}_\textbf{Z}\| q^*_{\textbf{Z}})\leqslant C_{\textbf{Z}}\kappa^t,$$
$${\text{KL}}(q^{(t+1)}_\textbf{W}\| q^*_{\textbf{W}})\leqslant C_{\textbf{W}}\kappa^t,$$
where $C_\textbf{Z}$ and $C_\textbf{W}$ are two positive constants, independent of $t$, and $\kappa\in (0, 1)$ is a constant.
\end{Theorem}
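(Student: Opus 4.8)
The plan is to reduce the statement to the abstract local-contraction result of \cite{bhattacharya2023convergence} (their Theorem 4) and to spend essentially all the work verifying its one nontrivial hypothesis, namely that the $1/2$-generalized correlation obeys $\text{GCorr}_{1/2}(r_0)\in(0,2)$ for some radius $r_0>0$. The decomposition \eqref{interaction} already isolates the mechanism: the excess KL splits into the ``self'' terms $\text{KL}(q_\textbf{W}\|q^*_\textbf{W})+\text{KL}(q_\textbf{Z}\|q^*_\textbf{Z})$ minus the ``interaction'' term $\Delta_{q^*}(q_\textbf{W},q_\textbf{Z})$, and $\text{GCorr}_{1/2}$ measures exactly how large the latter can be relative to the symmetrized self terms along the trajectory. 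The lower end of the interval is automatic, since \textbf{W} and \textbf{Z} are a posteriori dependent and hence $\Delta_{q^*}\not\equiv 0$, so the supremum is strictly positive; the real content is the upper bound $\text{GCorr}_{1/2}(r_0)<2$. Once this is in hand, Theorem 4 of \cite{bhattacharya2023convergence} delivers the geometric contraction with a rate $\kappa\in(0,1)$ determined by $\text{GCorr}_{1/2}(r_0)$, which is the asserted conclusion.

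The core estimate is a uniform bound on the ratio in \eqref{GCorr_rewritten}. For the numerator I would start from the trace representation of $\Delta_{q^*}$ recorded just before \eqref{GC} (\autoref{delta_upperbound}) and bound its two pieces separately: the mean piece by $\tau_0\|\textbf{X}\|\,\|\mu_\textbf{W}-\mu^*_\textbf{W}\|\,\|\mu_\textbf{Z}-\mu^*_\textbf{Z}\|$ using the Frobenius--trace inequality, and the $\Gamma$ piece by first telescoping $\Gamma_\textbf{W}-\Gamma^*_\textbf{W}=d(\Sigma_\textbf{W}-\Sigma^*_\textbf{W})+\mu'_\textbf{W}(\mu_\textbf{W}-\mu^*_\textbf{W})+(\mu_\textbf{W}-\mu^*_\textbf{W})'\mu^*_\textbf{W}$ (and symmetrically in \textbf{Z}), so that it decomposes into products of a \textbf{W}-difference and a \textbf{Z}-difference weighted by the factors $d$, $n$, $\|\mu_\textbf{W}\|$, $\|\mu^*_\textbf{W}\|$. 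This produces exactly four kinds of products, $\|\mu_\textbf{W}-\mu^*_\textbf{W}\|\,\|\mu_\textbf{Z}-\mu^*_\textbf{Z}\|$, $\|\Sigma_\textbf{W}-\Sigma^*_\textbf{W}\|\,\|\Sigma_\textbf{Z}-\Sigma^*_\textbf{Z}\|$, and the two mixed ones, matching the four entries of the $\max$ in \eqref{GC}. For the denominator, the decisive tool is the novel symmetric-KL lower bound (\autoref{KL_lowerbound}), which bounds $D_{\text{KL},1/2}(q_\textbf{W}\|q^*_\textbf{W})$ below by a sum of a mean term controlling $\|\mu_\textbf{W}-\mu^*_\textbf{W}\|^2$ and a covariance term controlling $\|\Sigma_\textbf{W}-\Sigma^*_\textbf{W}\|^2$, and likewise for \textbf{Z}; this is precisely where a Pinsker-type estimate is too lossy and the sharpness of \autoref{KL_lowerbound} is indispensable.

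Combining the two, the numerator is a sum of products, each pairing a \textbf{W}-difference with a \textbf{Z}-difference, while the denominator is the geometric mean of the two quadratic lower bounds; term-by-term Cauchy--Schwarz across the \textbf{W} and \textbf{Z} factors then lets every parameter difference cancel, leaving the ratio dominated by the maximum of four explicit coefficients, which is exactly the left-hand side of \eqref{GC}. To upgrade this to a bound that is genuinely uniform over $q_\textbf{W},q_\textbf{Z}$ in the balls, I would use \autoref{A2}: a non-singular Hessian makes the forward, reverse, and symmetric KL divergences all locally equivalent to the same quadratic form in the variational parameters, so that (i) a small symmetric-KL ball is contained in a small parameter ball, forcing the running norms $\|\mu_\textbf{W}\|,\|\mu_\textbf{Z}\|$ in the coefficients to stay within an arbitrarily small neighborhood of their starred values, and (ii) the reverse-KL ball $\mathcal{B}^*_\textbf{Z}(r^*)$ used for the initialization in the statement is comparable to the symmetric-KL ball required by Theorem 4 of \cite{bhattacharya2023convergence}, and the output symmetric-KL contraction can be read back as the stated forward-KL bound. \autoref{A1} supplies the strict inequality in \eqref{GC} at $q^*$ itself, so by continuity a small enough radius $r^*$ keeps the coefficients strictly below $1$, hence below $2$, yielding $\text{GCorr}_{1/2}(r^*)\in(0,2)$ and closing the proof.

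The principal obstacle is the denominator, not the bookkeeping of the numerator: one needs a symmetric-KL lower bound (\autoref{KL_lowerbound}) that is at once tight enough to beat the threshold $2$ and structurally aligned with the numerator, that is, cleanly separated into a mean part and a covariance part whose squares pair, term by term, with the products produced by $\Delta_{q^*}$. This is exactly why an off-the-shelf Pinsker bound fails and a bespoke inequality is required. The secondary difficulty is uniformity: because the numerator's coefficients carry the running norms $\|\mu_\textbf{W}\|,\|\mu_\textbf{Z}\|$ rather than only their starred counterparts, one must spend \autoref{A2} to shrink $r^*$ and trade a slightly smaller ball for coefficients arbitrarily close to the starred values appearing in \eqref{GC}.
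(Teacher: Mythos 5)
Your proposal is correct and follows essentially the same route as the paper: reduce to verifying $\text{GCorr}_{1/2}(r_0)\in(0,2)$ for Theorem 4 of \cite{bhattacharya2023convergence}, bound the numerator via the trace/telescoping decomposition of $\Delta_{q^*}$, bound the denominator via \autoref{KL_lowerbound}, pair terms by Cauchy--Schwarz and a max-of-ratios argument to land on the condition in \eqref{GC}, then shrink $r_0$ so the running norms approach their starred values. The only minor deviation is that you invoke \autoref{A2} to justify the comparability of the reverse-, forward-, and symmetric-KL balls, whereas the paper obtains this directly from explicit Gaussian computations (\autoref{subset_inclusion}, \autoref{control_lemma}) and the tensorization argument of \autoref{neurips_paper_remark}, reserving \autoref{A2} for the isolation of stationary points required by the abstract theorem; this does not affect the validity of your argument.
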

The proof of \autoref{main_theorem_bpca} can be found in \autoref{gen_k_details}. Empirically, we observed that the critical quantity in \eqref{GC} is first one, namely
$$\tau_0\left(\|\Sigma_{\mathbf{W}}\| \cdot \|\Sigma_{\mathbf{Z}}\|\right)^{1/2}  
\frac{\|\mathbf{X}\| + 2\|\mu_{\mathbf{W}}\| \cdot \|\mu_{\mathbf{Z}}\|}{\gamma_0},$$
in the sense that the three others in \eqref{GC} were observed to be smaller. The first quantity reflects the extent to which \(\mathbf{X}\) is well-approximated by a low-rank factorization (note that it is the only one depending explicitly on \textbf{X}). In some numerical experiments, we were able to verify that this quantity was smaller than \(1\).

 \autoref{A2} is a regularity condition that ensures that two stationary points $q_1^*, q_2^* \in\mathcal{M}$ cannot be arbitrarily close \cite[Corollary 2.3]{milnor1963morse}. This assumption is implicit in \cite{bhattacharya2023convergence}. In our case, it is possible to express \(\Psi\) explicitly (see \eqref{eq:loss_function}). However, theoretically proving the non-singularity of its Hessian at stationary points is infeasible using standard tools due to its complexity, and thus beyond the scope of this paper. As shown in \autoref{balls_dont_intersect} (see \autoref{ap_localbehaviour}), \autoref{A2} is incompatible with choosing the prior hyperparameter $\Lambda$ proportional to the identity matrix. \autoref{balls_dont_intersect} states that such a $\Lambda$ leads to $\mathcal{M}$ containing a continuous manifold with an infinite number of stationary points with the same distance in KL divergence to the posterior distribution. This equivalence between the stationary points is due to the non-identifiability of the model mentioned in \autoref{section1.1}. Numerical experiments suggest that selecting $\Lambda$ to be non-proportional to the identity matrix is sufficient to satisfy \autoref{A2} (see \autoref{ap_localbehaviour}).

 \autoref{main_theorem_bpca} has the following corollary regarding the convergence of the parameters of the matrix normal distributions $q^{(t)}_\textbf{Z}=\mathcal{N}(\mu^{(t)}_\textbf{Z}, I_n\otimes \Sigma^{(t)}_\textbf{Z})$ and $q^{(t)}_\textbf{W}=\mathcal{N}(\mu^{(t)}_\textbf{W}, I_d\otimes \Sigma^{(t)}_\textbf{W})$.
\begin{Corollary}[Convergence of parameters]\label{parameter}
Under the same assumptions as \autoref{main_theorem_bpca}, one has, for any $t\geqslant 1$,
$$\max\{\|\mu^{(t+1)}_{\textbf{Z}}-\mu^*_{\textbf{Z}}\|,\ \|\Sigma^{(t+1)}_{\textbf{Z}}-\Sigma^*_{\textbf{Z}}\|\}\leqslant A_{\textbf{Z}}\kappa^{t/2},$$
$$\max\{\|\mu^{(t+1)}_{\textbf{W}}-\mu^*_{\textbf{W}}\|,\ \|\Sigma^{(t+1)}_{\textbf{W}}-\Sigma^*_{\textbf{W}}\|\}\leqslant A_{\textbf{W}}\kappa^{t/2},$$
where $A_{\textbf{Z}},A_{\textbf{W}},$ are positive constants, independent of $t$. The constant $\kappa$ is the same as in \autoref{main_theorem_bpca}.
\end{Corollary}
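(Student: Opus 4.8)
The plan is to invert the two KL bounds of \autoref{main_theorem_bpca} into parameter bounds by exploiting the closed form of the KL divergence between matrix normal distributions. Since $q^{(t+1)}_{\textbf{Z}}=\mathcal{N}(\mu^{(t+1)}_{\textbf{Z}},I_n\otimes\Sigma^{(t+1)}_{\textbf{Z}})$ and $q^*_{\textbf{Z}}=\mathcal{N}(\mu^*_{\textbf{Z}},I_n\otimes\Sigma^*_{\textbf{Z}})$ share a covariance parameter whose left Kronecker factor is $I_n$, the density factorizes over the $n$ rows, and summing the $n$ per-row Gaussian KL divergences gives
\[
\text{KL}(q^{(t+1)}_{\textbf{Z}}\|q^*_{\textbf{Z}})
=\underbrace{\tfrac{n}{2}\Bigl(\text{tr}([\Sigma^*_{\textbf{Z}}]^{-1}\Sigma^{(t+1)}_{\textbf{Z}})-\log\det([\Sigma^*_{\textbf{Z}}]^{-1}\Sigma^{(t+1)}_{\textbf{Z}})-k\Bigr)}_{=:T_\Sigma}
+\underbrace{\tfrac{1}{2}\,\text{tr}\bigl((\mu^{(t+1)}_{\textbf{Z}}-\mu^*_{\textbf{Z}})[\Sigma^*_{\textbf{Z}}]^{-1}(\mu^{(t+1)}_{\textbf{Z}}-\mu^*_{\textbf{Z}})'\bigr)}_{=:T_\mu}.
\]
Both $T_\Sigma$ and $T_\mu$ are non-negative (the former is $n$ times a zero-mean Gaussian KL), so each is itself bounded above by $\text{KL}(q^{(t+1)}_{\textbf{Z}}\|q^*_{\textbf{Z}})\leqslant C_{\textbf{Z}}\kappa^t$. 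This is the mechanism behind the corollary, and the exponent $\kappa^{t/2}$ will appear because the KL divergence controls \emph{squared} parameter deviations.

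For the mean component, I would use $T_\mu=\tfrac12\,\text{tr}([\Sigma^*_{\textbf{Z}}]^{-1}(\mu^{(t+1)}_{\textbf{Z}}-\mu^*_{\textbf{Z}})'(\mu^{(t+1)}_{\textbf{Z}}-\mu^*_{\textbf{Z}}))\geqslant\tfrac{1}{2\lambda_{\max}(\Sigma^*_{\textbf{Z}})}\|\mu^{(t+1)}_{\textbf{Z}}-\mu^*_{\textbf{Z}}\|^2$, where $\lambda_{\max}$ denotes the largest eigenvalue. Combining with $T_\mu\leqslant C_{\textbf{Z}}\kappa^t$ yields $\|\mu^{(t+1)}_{\textbf{Z}}-\mu^*_{\textbf{Z}}\|\leqslant\sqrt{2\lambda_{\max}(\Sigma^*_{\textbf{Z}})C_{\textbf{Z}}}\,\kappa^{t/2}$, with a constant independent of $t$.

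For the covariance component, I would simultaneously diagonalize by setting $A=[\Sigma^*_{\textbf{Z}}]^{-1/2}\Sigma^{(t+1)}_{\textbf{Z}}[\Sigma^*_{\textbf{Z}}]^{-1/2}$ with eigenvalues $\alpha_1,\dots,\alpha_k>0$, so that $T_\Sigma=\tfrac{n}{2}\sum_j g(\alpha_j)$ with $g(\alpha)=\alpha-\log\alpha-1\geqslant0$. Here the main obstacle is obtaining a \emph{uniform} quadratic lower bound $g(\alpha)\geqslant c(\alpha-1)^2$. My approach is first to note that, since $\kappa<1$, the bound $T_\Sigma\leqslant\text{KL}(q^{(t+1)}_{\textbf{Z}}\|q^*_{\textbf{Z}})\leqslant C_{\textbf{Z}}$ holds for every $t\geqslant1$; hence each $g(\alpha_j)\leqslant 2C_{\textbf{Z}}/n$, which confines every $\alpha_j$ to a compact interval $[\alpha_-,\alpha_+]\subset(0,\infty)$ that is \emph{independent of $t$}, because $g(\alpha)\to\infty$ as $\alpha\to0^+$ or $\alpha\to\infty$. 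On this fixed compact interval the continuous positive function $\alpha\mapsto g(\alpha)/(\alpha-1)^2$ (extended to $1/2$ at $\alpha=1$) attains a positive minimum $c>0$, giving $g(\alpha)\geqslant c(\alpha-1)^2$ there. Then $T_\Sigma\geqslant\tfrac{nc}{2}\sum_j(\alpha_j-1)^2=\tfrac{nc}{2}\|A-I_k\|^2$, and since $\Sigma^{(t+1)}_{\textbf{Z}}-\Sigma^*_{\textbf{Z}}=[\Sigma^*_{\textbf{Z}}]^{1/2}(A-I_k)[\Sigma^*_{\textbf{Z}}]^{1/2}$, submultiplicativity of the operator norm gives $\|\Sigma^{(t+1)}_{\textbf{Z}}-\Sigma^*_{\textbf{Z}}\|\leqslant\lambda_{\max}(\Sigma^*_{\textbf{Z}})\|A-I_k\|$. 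Chaining these inequalities with $T_\Sigma\leqslant C_{\textbf{Z}}\kappa^t$ produces $\|\Sigma^{(t+1)}_{\textbf{Z}}-\Sigma^*_{\textbf{Z}}\|\leqslant\sqrt{2\lambda_{\max}(\Sigma^*_{\textbf{Z}})^2 C_{\textbf{Z}}/(nc)}\,\kappa^{t/2}$. Taking $A_{\textbf{Z}}$ to be the maximum of the two resulting constants yields the first displayed bound; the second follows by the identical argument applied to the block $\textbf{W}$, replacing $n$ by $d$ and $C_{\textbf{Z}}$ by $C_{\textbf{W}}$, and the rate $\kappa$ is unchanged throughout.
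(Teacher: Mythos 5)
Your proposal is correct, and it reaches the conclusion by a genuinely different route from the paper. The paper's proof is a two-line consequence of machinery already in place: it invokes the stronger symmetric-KL decay $D_{\text{KL},1/2}(q^{(t+1)}_{\textbf{Z}}\|q^*_{\textbf{Z}})\leqslant C'_{\textbf{Z}}\kappa^t$ from \autoref{neurips_paper_remark}, together with the lower bound of \autoref{KL_lowerbound}, which requires knowing that the iterates remain in the KL ball $\mathcal{B}^*_{\textbf{Z}}(r_0)$ (itself inherited from the Bhattacharya et al.\ framework); the constants $A_{\textbf{Z}}, A_{\textbf{W}}$ come out explicitly in terms of $r_0$. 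You instead work directly from the one-sided bound $\text{KL}(q^{(t+1)}_{\textbf{Z}}\|q^*_{\textbf{Z}})\leqslant C_{\textbf{Z}}\kappa^t$ as stated in \autoref{main_theorem_bpca}, split the Gaussian KL into its non-negative trace--log-det and Mahalanobis parts, and replace the ball-membership argument by a self-contained compactness step: the uniform bound $T_\Sigma\leqslant C_{\textbf{Z}}$ confines the eigenvalues of $[\Sigma^*_{\textbf{Z}}]^{-1/2}\Sigma^{(t+1)}_{\textbf{Z}}[\Sigma^*_{\textbf{Z}}]^{-1/2}$ to a $t$-independent compact interval on which $g(\alpha)=\alpha-\log\alpha-1$ admits a quadratic lower bound $c(\alpha-1)^2$. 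All the individual steps check out (the extension of $g(\alpha)/(\alpha-1)^2$ by $1/2$ at $\alpha=1$ is continuous, $\|A-I_k\|^2=\sum_j(\alpha_j-1)^2$ for symmetric $A$, and the mixed operator/Frobenius submultiplicativity is valid). What your route buys is independence from \autoref{neurips_paper_remark} and from \autoref{KL_lowerbound}, so the corollary follows from the theorem's stated conclusion alone; what it costs is that the constant $c$ from the compactness argument is not explicit, whereas the paper's $m_{\textbf{Z}}$ is given in closed form in terms of $r_0$, $n$ and $\|\Sigma^*_{\textbf{Z}}\|$.
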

The proof of \autoref{parameter} can be found in \autoref{gen_k_details}.

With \autoref{parameter}, we do not have a characterization of the limit point $q^*$ and of the rate of convergence. In particular, we do not have a characterization of $\mu_\textbf{Z}^*$ and $\mu_\textbf{W}^*$. As mentioned, we empirically observed that these parameters do not correspond to traditional PCA when $k > 1$, by opposition to the case $k = 1$. This is expected as the orthogonality underlying PCA is not imposed in BPCA. A strength of the dynamical systems tools used to derive the results in \autoref{sec_3} is to allow for more precision in these results, but they cannot be applied when $k > 1$. A strength of the abstract tools of \cite{bhattacharya2023convergence} is that they do. This section allows one to familiarize oneself with these tools.
\section{Limitations and broader impact}
A limitation of our work lies in the assumptions of \autoref{main_theorem_bpca}. In particular, we expect the result to hold under a more general version of the condition in \eqref{GC}, with an upper bound perhaps depending on the problem parameters and the data. The condition in \eqref{GC} follows from the sufficient condition in \cite{bhattacharya2023convergence} (see \eqref{gen_correlation_first_appear}), highlighting at the same time a limitation of their abstract tools. We also expect \autoref{A2} to not be necessary. Proving a general result like \autoref{main_theorem_bpca} in \cite{bhattacharya2023convergence} without this assumption would require a significantly different proof technique. Instead of analyzing the local behavior of CAVI with a uniform control of the generalized correlation within a KL ball, one would need to directly work with CAVI trajectories. In other words, our work highlights a need of theory in the line of research of speed of convergence CAVI to cover a broader range of situations.

A limitation of BPCA is that the number of PCs used in the model, that is $k$, is considered as an hyperparameter, thus as known and fixed by the user. In practice, it is of interest to learn this (hyper)parameter to identify the number of PCs to include in a data-driven way. A possible extension of our work is to establish exponential convergence of CAVI in the case where $k$ is learned using the procedure of \cite{cherief2019consistency}.

It is worth noting that our research does not present any discernible negative societal implications.
\bibliographystyle{apalike}
\bibliography{ref}

\appendix
\section{An explicit CAVI algorithm for BPCA}\label{algos}
In this section, we present an explicit CAVI algorithm in detail.
\begin{algorithm}[htp]
\caption{CAVI algorithm for BPCA} \label{algo1}

 \begin{enumerate}
 \itemsep 1.9mm

  \item Initialization: Fix $(\mu^{(0)}_{\textbf{Z}},\Sigma^{(0)}_{\textbf{Z}})\in\mathbb{R}^{n\times k}\times\mathbb{S}_k^+$, and $q_\textbf{Z}^{(0)}=\mathcal{N}(\mu^{(0)}_{\textbf{Z}},I_n\otimes \Sigma^{(0)}_{\textbf{Z}})$. Fix $\varepsilon>0$.
  \item \text{Update the parameters of the distribution of \textbf{W}}:
      $$\Sigma^{(t+1)}_{\textbf{W}}=(\tau_0(n\Sigma^{(t)}_{\textbf{Z}}+(\mu^{(t)}_{\textbf{Z}})'\mu^{(t)}_{\textbf{Z}})+\Lambda)^{-1},$$
      $$\mu^{(t+1)}_{\textbf{W}}=(\tau_0\textbf{X}'\mu_{\textbf{Z}}^{(t)})\Sigma^{(t+1)}_{\textbf{W}}.$$
  \item \text{Update the parameters of the distribution of the latent variable \textbf{Z}}:
     $$\Sigma^{(t+1)}_{\textbf{Z}}=(\tau_0(d\Sigma^{(t+1)}_{\textbf{W}}+(\mu^{(t+1)}_{\textbf{W}})'\mu^{(t+1)}_{\textbf{W}})+I_k)^{-1},$$
     $$\mu^{(t+1)}_{\textbf{Z}}=(\tau_0\textbf{X}\mu_{\textbf{W}}^{(t+1)})\Sigma^{(t+1)}_{\textbf{Z}}.$$
  \item\text{Calculate the relative amount of increase in ELBO:}
  $$\Delta^{(t)}\text{ELBO}:=\dfrac{\text{ELBO}(q^{(t+1)})-\text{ELBO}(q^{(t)})}{\text{ELBO}(q^{(t)})}.$$
  \item While $\Delta^{(t)}\text{ELBO}>\varepsilon$, go to Step 2.

 \end{enumerate}

\end{algorithm}
\section{Details of the analysis of CAVI dynamics when $k=1$}\label{ap_proof_main_theorem}
\begin{proof}[Proof of \autoref{convergence_of_mu_z_fixed_update}]
Recall that the $n$ eigenvalues of the matris $\textbf{X}\textbf{X}'$ are sorted such that the first $d$ eigen values are strictly positive $\lambda_1>\lambda_2>\lambda_3> \ldots>\lambda_d>0$ and $ \lambda_{d+1}=\cdots=\lambda_n =0$ with corresponding eigenvectors $\mu_1,\mu_2,\ldots, \mu_n$, which forms an orthonormal basis of $\mathbb{R}^n$. Without loss of generality, we assume there exist constants $c_1,c_2,c_3,\ldots, c_n$ such that $\tilde{\mu}^{(0)}_{\textbf{Z}}:=\mu^{(0)}_{\textbf{Z}}/\|\mu^{(0)}_{\textbf{Z}}\|=c_1\mu_1+c_2\mu_1+\cdots+c_n\mu_n$ with $c_1\neq 0$.

For $t\geqslant 1$, let us define the quantity $B_t= \tau_0(n\Sigma^{(t)}_{\textbf{Z}}+\|\mu^{(t)}_{\textbf{Z}}\|^2)+\Lambda$. From \eqref{update_equations}, we have that
\begin{align}
\nonumber
(\mu^{(t+1)}_{\textbf{Z}},\Sigma^{(t+1)}_{\textbf{Z}})
&= (G\circ F)(\mu^{(t)}_{\textbf{Z}},\Sigma^{(t)}_{\textbf{Z}})\\ 
\nonumber
&= G\left(\dfrac{\tau_0\textbf{X}'\mu^{\text{(t)}}_{\textbf{Z}}}{\tau_0(n\Sigma^{\text{(t)}}_{\textbf{Z}}+\|\mu^{(t)}_{\textbf{Z}}\|^2)+\Lambda}\ , \dfrac{1}{\tau_0(n\Sigma^{\text{(t)}}_{\textbf{Z}}+\|\mu^{(t)}_{\textbf{Z}}\|^2)+\Lambda}\ \right).
\end{align}
Thus, we have that
\begin{align}
\nonumber
\mu^{(t+1)}_{\textbf{Z}}
&=\dfrac{\tau_0\textbf{X} \left(\dfrac{\tau_0\textbf{X}'\mu^{\text{(t)}}_{\textbf{Z}}}{\tau_0(n\Sigma^{\text{(t)}}_{\textbf{Z}}+\|\mu^{(t)}_{\textbf{Z}}\|^2)+\Lambda}\right)}{\tau_0\left(\dfrac{d}{\tau_0(n\Sigma^{\text{(t)}}_{\textbf{Z}}+\|\mu^{(t)}_{\textbf{Z}}\|^2)+\Lambda}+\left\|\dfrac{\tau_0\textbf{X}'\mu^{\text{(t)}}_{\textbf{Z}}}{\tau_0(n\Sigma^{\text{(t)}}_{\textbf{Z}}+\|\mu^{(t)}_{\textbf{Z}}\|^2)+\Lambda}\right\|^2\right)+1}\\
\nonumber
&= \dfrac{\tau_0^2\textbf{X}\textbf{X}'\mu^{(t)}_{\textbf{Z}}/B_t}{\tau_0\left(\dfrac{d}{B_t}+\dfrac{\|\tau_0\textbf{X}'\mu^{(t)}_{\textbf{Z}}\|^2}{B_t^2}\right)+1}\\
&= \dfrac{\tau_0^2B_t\textbf{X}\textbf{X}'\mu^{(t)}_{\textbf{Z}}}{d\tau_0B_t+\tau_0^3\|\textbf{X}'\mu^{(t)}_{\textbf{Z}}\|^2+B_t^2} \label{mu_z_update},
\end{align}
and 
$$\Sigma^{(t+1)}_{\textbf{Z}}=\dfrac{B_t^2}{d\tau_0B_t+\tau_0^3\|\textbf{X}'\mu^{(t)}_{\textbf{Z}}\|^2+B_t^2},$$
which implies the normalized $(t+1)$-th update of $\tilde{\mu}^{(t+1)}_{\textbf{Z}}= \textbf{X}\textbf{X}'\mu^{(t)}_{\textbf{Z}}/\|\textbf{X}\textbf{X}'\mu^{(t)}_{\textbf{Z}}\|=\textbf{X}\textbf{X}'\tilde{\mu}^{(t)}_{\textbf{Z}}/\|\textbf{X}\textbf{X}'\tilde{\mu}^{(t)}_{\textbf{Z}}\|$. This motivates us to look at the following iterative sequence on the unit circle:
$$\tilde{\mu}^{(0)}_{\textbf{Z}}:=\dfrac{{\mu}^{(0)}_{\textbf{Z}}}{\|{\mu}^{(0)}_{\textbf{Z}}\|}\ ,\dfrac{\textbf{A}\tilde{\mu}^{(0)}_{\textbf{Z}}}{\|\textbf{A}\tilde{\mu}^{(0)}_{\textbf{Z}}\|}\ ,\dfrac{\textbf{A}^2\tilde{\mu}^{(0)}_{\textbf{Z}}}{\|\textbf{A}^2\tilde{\mu}^{(0)}_{\textbf{Z}}\|}\ , \ldots,\quad\text{where }\textbf{A}=\textbf{X}\textbf{X}'.$$
We claim the above sequence converges to $\mu_1$. Note that for any $t\geqslant 1$,
\begin{align*}
\textbf{A}^t \tilde{\mu}^{(0)}_\textbf{Z} 
&= c_1\textbf{A}^t\mu_1+c_2\textbf{A}^t\mu_1+\cdots+c_n\textbf{A}^t\mu_n \\
&= c_1(\lambda_1)^t\Biggl(\mu_1
+\dfrac{c_2}{c_1}\left(\dfrac{\lambda_2}{\lambda_1}\right)^t\mu_1+\cdots+\dfrac{c_{d}}{c_1}\left(\dfrac{\lambda_d}{\lambda_1}\right)^t\mu_d\Biggr).
\end{align*}

Let us define the vector
$$v_t=\left(\mu_1+\dfrac{c_2}{c_1}\left(\dfrac{\lambda_2}{\lambda_1}\right)^t\mu_1+\cdots+\dfrac{c_{d}}{c_1}\left(\dfrac{\lambda_d}{\lambda_1}\right)^t\mu_d\right).$$
Thus we have that
\begin{align}
\nonumber
\tilde{\mu}^{(t)}_{\textbf{Z}}
&= \dfrac{\textbf{A}^t\tilde{\mu}^{(0)}_{\textbf{Z}}}{\|\textbf{A}^t\tilde{\mu}^{(0)}_{\textbf{Z}}\|}\ \\
\nonumber 
&=\dfrac{c_1}{|c_1|}\left(\dfrac{v_t}{\|v_t\|}\right)\\ 
\nonumber
&= \dfrac{c_1}{|c_1|}\dfrac{\mu_1+\dfrac{c_2}{c_1}\left(\dfrac{\lambda_2}{\lambda_1}\right)^t\mu_1+\cdots+\dfrac{c_{d-1}}{c_1}\left(\dfrac{\lambda_d}{\lambda_1}\right)^t\mu_d}{\left\|\mu_1+\dfrac{c_2}{c_1}\left(\dfrac{\lambda_2}{\lambda_1}\right)^t\mu_1+\cdots+\dfrac{c_{d-1}}{c_1}\left(\dfrac{\lambda_d}{\lambda_1}\right)^t\mu_d\right\|}\\
\nonumber
&\to \text{sgn}(c_1)\mu_1, \quad \text{as } t\to\infty.
\end{align}
The second part of the proof follows from \autoref{rate_lemma} after noticing for all $t\geqslant 1$
$$\|v_t-\text{sgn}(c_1)\mu_1\|\leqslant   \max\{|c_1|,|c_2|,\ldots,|c_{d}|\}\left(\dfrac{d}{|c_1|}\right)\left(\dfrac{\lambda_2}{\lambda_1}\right)^t=\dfrac{c_0}{2}\left(1-\dfrac{\rho_1}{\lambda_1}\right)^t,$$
for some $c_0>0$ which only depends on $d$, the initialization and the data $\textbf{X}$, and consequently 
$$\left\|\tilde{\mu}^{(t)}_{\textbf{Z}}-\dfrac{c_1}{|c_1|}\mu_1\right\|=\left\|\dfrac{c_1}{|c_1|}\left(\dfrac{v_t}{\|v_t\|}\right)-\dfrac{c_1}{|c_1|}\mu_1\right\|=\left\|\dfrac{v_t}{\|v_t\|}-\mu_1\right\|\leqslant c_0\left(1-\dfrac{\rho_1}{\lambda_1}\right)^t.$$
To obtain the upper bound on normalized iterates of $\mu^{(t)}_{\textbf{W}}$, we pretend that our initialization is 
$$\mu^{(1)}_{\textbf{W}}=\dfrac{\tau_0\textbf{X}'\mu^{(0)}_{\textbf{Z}}}{B_0}=\dfrac{\tau_0}{B_0}\left(c_1\textbf{X}'\mu_1+\cdots +c_n\textbf{X}'\mu_n\right)=\dfrac{\tau_0}{B_0}\left(\sqrt{\lambda_1}c_1\nu_1+\cdots+\sqrt{\lambda_d}c_d\nu_d\right),$$
where $\nu_1,\ldots,\nu_d$ are unit norm eigen vectors of $\textbf{X}'\textbf{X}$ corresponding to the eigen values $\lambda_1,\ldots,\lambda_d$, forming an orthonormal basis of $\mathbb{R}^d$. Using \eqref{bal}, a similar argument as before leads to: 
$$\dfrac{\mu^{(t+1)}_\textbf{W}}{\|\mu^{(t+1)}_\textbf{W}\|}= \dfrac{\textbf{X}'\textbf{X}\mu^{(t)}_\textbf{W}}{\|\textbf{X}'\textbf{X}\mu^{(t)}_\textbf{W}\|}\quad t=1,2,\ldots.$$
In this case, using an analogous calculation as $\mu^{(t)}_\textbf{Z}$, we obtain for $t\geqslant 2,$
$$\left\|\dfrac{\mu^{(t+1)}_\textbf{W}}{\|\mu^{(t+1)}_\textbf{W}\|}-\text{sgn}(c_1)\dfrac{\textbf{X}'\mu_1}{\|\textbf{X}'\mu_1\|}\right\|\leqslant c_0'\left(1-\dfrac{\rho_1}{\lambda_1}\right)^t,$$
where $c_0'= \tau_0c_0\sqrt{\lambda_1}/\Lambda$.
\end{proof} 
\begin{proof}[Proof of \autoref{fixed_pts_possibility}]
We recall that
\begin{align*}
  \Phi\colon \mathbb{R}_{+}^2 &\to \mathbb{R}_{+}^2\\
  (a,b) &\mapsto \left(\dfrac{\tau_0^2 a L(a,b)\lambda_1}{d\tau_0L(a,b) +\tau_0^3a^2\lambda_1+L(a,b)^2}, \dfrac{L(a,b)^2}{d\tau_0L(a,b) +\tau_0^3a^2\lambda_1+L(a,b)^2}\right),
\end{align*}
where $L(a,b)=\tau_0(nb+a^2)+\Lambda$. Suppose, $(a^*,b^*)$ is a fixed point of $\Psi$ with $a^*>0$, i.e., we have
$$(a^*,b^*)=\Phi(a^*,b^*),$$
equivalently, the following system of equations is satisfied
$$a^* = \dfrac{\tau_0^2 a^* L(a^*,b^*)\lambda_1}{d\tau_0L(a^*,b^*) +\tau_0^3(a^*)^2\lambda_1+L(a^*,b^*)^2}\text{ and }b^* = \dfrac{L(a^*,b^*)^2}{d\tau_0L(a^*,b^*) +\tau_0^3(a^*)^2\lambda_1+L(a^*,b^*)^2}.$$
Plugging the value of $b^*$ from the second equation, we have that
\begin{equation}\label{b_star}
 a^* = \dfrac{\tau_0^2L(a^*,b^*)a^*\lambda_1b^*}{L(a^*,b^*)^2}\implies b^*= \dfrac{L(a^*,b^*)}{\tau^2_0\lambda_1}.  
\end{equation}
Equating two expressions for $b^*$, we obtain
$$\dfrac{L(a^*,b^*)}{\tau^2_0 \lambda_1}=\dfrac{L(a^*,b^*)^2}{d\tau_0L(a^*,b^*) +\tau_0^3(a^*)^2\lambda_1+L(a^*,b^*)^2}.$$
Thus,
\begin{equation}\label{eq_L}
    L(a^*,b^*)^2+(d\tau_0-\lambda_1\tau_0^2)L(a^*,b^*)+\tau_0^3(a^*)^2\lambda_1 = 0. 
\end{equation}
Note that also, $b^* =\dfrac{1}{n}\left(\dfrac{L(a^*,b^*)-\Lambda}{\tau_0}-(a^*)^2\right)$ which implies,

$$\dfrac{L(a^*,b^*)}{\tau^2_0 \lambda_1}=\dfrac{1}{n}\left(\dfrac{L(a^*,b^*)-\Lambda}{\tau_0}-(a^*)^2\right)\implies L(a^*,b^*)=\dfrac{\frac{\Lambda}{n\tau_0}+\frac{(a^*)^2}{n}}{\frac{1}{n\tau_0}-\frac{1}{\tau_0^2\lambda_1}}.$$
 Note that if we plug the value of $L(a^*,b^*)$ in \eqref{eq_L}, then we can obtain a quadratic equation, which is being satisfied by $(a^*)^2$. Let $u=(a^*)^2$. Then 
$$L(a^*,b^*)= \tau_0\lambda_1\left(\dfrac{\Lambda+\tau_0u}{\lambda_1\tau_0-n}\right),$$
substituting this in \eqref{eq_L}, one obtains a quadratic equation in $u$, which means $(a^*)^2$ has at most $2$ possibilities implying the conclusion. The details are as follows:
\begin{align*}
\left(\tau_0\lambda_1\frac{\Lambda+\tau_0 u}{\lambda_1\tau_0-n}\right)^2 
&+(d\tau_0-\lambda_1\tau_0^2)\left(\tau_0\lambda_1\frac{\Lambda+\tau_0 u}{\lambda_1\tau_0-n}\right)
+\tau_0^3\lambda_1 u &= 0, \\
\tau_0^2(\lambda_1)^2(\Lambda+\tau_0 u)^2 
&+(d\tau_0-\lambda_1\tau_0^2)\tau_0\lambda_1(\Lambda+\tau_0 u)(\lambda_1\tau_0-n)
+\tau_0^3\lambda_1 u (\lambda_1\tau_0-n)^2 &= 0, \\
\tau_0^2(\lambda_1)^2(\Lambda+\tau_0 u)^2 
&+\tau_0^2\lambda_1(d-\lambda_1\tau_0)(\Lambda+\tau_0 u)(\lambda_1\tau_0-n)
+\tau_0^3\lambda_1 u (\lambda_1\tau_0-n)^2 &= 0, \\
\lambda_1(\Lambda+\tau_0 u)^2 
&+(d-\lambda_1\tau_0)(\Lambda+\tau_0 u)(\lambda_1\tau_0-n)
+\tau_0 u (\lambda_1\tau_0-n)^2 &= 0, \\
\lambda_1(\Lambda^2+2\Lambda\tau_0 u+\tau_0^2 u^2) 
&+(d-\lambda_1\tau_0)(\lambda_1\tau_0-n)\Lambda \\
&\quad+(d-\lambda_1\tau_0)(\lambda_1\tau_0-n)\tau_0 u+\tau_0 u (\lambda_1\tau_0-n)^2 &= 0, \\
\lambda_1\tau_0^2 u^2 
&+\tau_0 u \Bigl[2\lambda_1\Lambda+(d-\lambda_1\tau_0)(\lambda_1\tau_0-n)+(\lambda_1\tau_0-n)^2\Bigr] \\
&\quad+[\lambda_1\Lambda^2+(d-\lambda_1\tau_0)(\lambda_1\tau_0-n)\Lambda] &= 0.
\end{align*}
Finally from \eqref{b_star} and the definition of $L(a^*,b^*)$, 
$$b^* = \dfrac{L(a^*,b^*)}{\tau_0^2
\lambda_1}=\dfrac{\tau_0(nb^*+(a^*)^2)+\Lambda}{\tau_0^2
\lambda_1},$$
and solving for $b^*$, we obtain
$$b^* = \dfrac{\Lambda+\tau_0(a^*)^2}{\tau_0^2\lambda_1-n\tau_0}.$$
\end{proof}
\section{Details of the analysis of CAVI dynamics for $k\in\mathbb{N}$}\label{gen_k_details}
In this appendix, we prove a result stronger than \autoref{main_theorem_bpca}. We provide an upper bound on the symmetric KL for the CAVI iterates which implies an upper bound on their KL divergence to the target distributions.
\begin{proof}[Proof of \autoref{main_theorem_bpca}]
Let $r_0>0$ be fixed, and we pick some $(q_{\textbf{W}},q_{\textbf{Z}})\in \mathcal{B}^*_{\textbf{W}}(r_0)\times \mathcal{B}^*_{\textbf{Z}}(r_0)$. Let us denote the quantities: $A_1=\|\mu_{\textbf{Z}}-\mu^*_{\textbf{Z}}\|, A_2=\|\mu_{\textbf{W}}-\mu^*_{\textbf{W}}\|,B_1=\|\Sigma_{\textbf{Z}}-\Sigma^*_{\textbf{Z}}\|,B_2=\|\Sigma_{\textbf{W}}-\Sigma^*_{\textbf{W}}\|,C_1=\|\mu_{\textbf{Z}}\|+\|\mu^*_{\textbf{Z}}\|,C_2=\|\mu_{\textbf{W}}\|+\|\mu^*_{\textbf{W}}\|$. Recall the generalized correlation in \eqref{gen_correlation_first_appear} as:
$$
\text{GCorr}_{1/2}(r_0):=\sup\limits_{\substack{q_{\textbf{W}}\in \mathcal{B}^*_{\textbf{W}}(r_0)\setminus \{q^*_\textbf{W}\},\\q_{\textbf{Z}}\in \mathcal{B}^*_{\textbf{Z}}(r_0)\setminus \{q^*_\textbf{Z}\}}}
\dfrac{|\Delta_{q^*}(q_{\textbf{W}},q_{\textbf{Z}})|}
{\sqrt{D_{\text{KL},1/2}\left(q_{\textbf{W}}\|q^*_{\textbf{W}}\right)
D_{\text{KL},1/2}\left(q_{\textbf{Z}}\|q^*_{\textbf{Z}}\right)}}.
$$
Following \cite{bhattacharya2023convergence}, to establish \autoref{main_theorem_bpca}, it suffices to show the existence of $r_0 >0$ such that $\text{GCorr}_{1/2}(r_0)\in (0,2)$. Using \autoref{delta_upperbound} , we have that 
$$\text{GCorr}_{1/2}(r_0)\leqslant \rho_0,$$
where 
$$\rho_0=\dfrac{ \left(\tau_0\|\textbf{X}\|+\dfrac{C_1C_2\tau_0}{2}\right)A_1A_2+(nd\tau_0B_1B_2+dC_1\tau_0 B_2A_1+ nC_2\tau_0 A_2B_1)/2}{\sqrt{Q_1Q_2}}$$
with $Q_1=D_{\text{KL},1/2}\left(q_{\textbf{W}}\|q^*_{\textbf{W}}\right)$ and $Q_2=D_{\text{KL},1/2}\left(q_{\textbf{Z}}\|q^*_{\textbf{Z}}\right)$.
From \autoref{KL_lowerbound} we have that
\begin{align*}
Q_1 &\geqslant \dfrac{dc_3}{\|\Sigma^*_{\textbf{W}}\|^2 e^{1+2(r_0/d)}}\|\Sigma_{\textbf{W}}-\Sigma^*_{\textbf{W}}\|^2+\dfrac{c_2}{\|\Sigma^*_{\textbf{W}}\|}\|\mu_{\textbf{W}}-\mu^*_{\textbf{W}}\|^2\\
&= L_{\textbf{W}^*}^2 B_2^2 +  M_{\textbf{W}^*}^2 A_2^2,
\end{align*}
where
$$L_{\textbf{W}^*}^2 =  \dfrac{dc_3}{\|\Sigma^*_{\textbf{W}}\|^2 e^{1+2(r_0/d)}}\quad{\text{and}}\quad M_{\textbf{W}^*}^2=\dfrac{c_2}{\|\Sigma^*_{\textbf{W}}\|}.$$
Analogously using the same lemmas we obtain 
$$Q_2\geqslant L_{\textbf{Z}^*}^2 B_1^2 +  M_{\textbf{Z}^*}^2 A_1^2,$$
where
$$L_{\textbf{Z}^*}^2 =  \dfrac{nc_3}{\|\Sigma^*_{\textbf{Z}}\|^2 e^{1+2(r_0/n)}}\quad{\text{and}}\quad M_{\textbf{Z}^*}^2=\dfrac{c_2'}{\|\Sigma^*_{\textbf{Z}}\|}.$$
Using Cauchy-Schwarz inequality, we therefore have that
$$\sqrt{Q_1}\geqslant \dfrac{1}{\sqrt{2}}(L_{\textbf{W}^*} B_2+  M_{\textbf{W}^*} A_2)\quad{\text{and}}\quad \sqrt{Q_2}\geqslant \dfrac{1}{\sqrt{2}}(L_{\textbf{Z}^*} B_1 +  M_{\textbf{Z}^*} A_1),$$
Consequently,
$$\sqrt{Q_1Q_2}\geqslant \dfrac{1}{2}\left( M_{\textbf{W}^*}M_{\textbf{Z}^*}A_1A_2+L_{\textbf{W}^*}L_{\textbf{Z}^*}B_1B_2+L_{\textbf{W}^*}M_{\textbf{Z}^*}B_2A_1+M_{\textbf{W}^*}L_{\textbf{Z}^*}A_2B_1\right).$$
Thus we have that
\begin{align*}
\rho_0&=\dfrac{\left(\tau_0\|\textbf{X}\|+\dfrac{C_1C_2\tau_0}{2}\right)A_1A_2+(nd\tau_0B_1B_2+dC_1\tau_0 B_2A_1+ nC_2\tau_0 A_2B_1)/2}{\sqrt{Q_1Q_2}}\\
&\leqslant
2\dfrac{ \left(\tau_0\|\textbf{X}\|+\dfrac{C_1C_2\tau_0}{2}\right)A_1A_2+(nd\tau_0B_1B_2+dC_1\tau_0 B_2A_1+ nC_2\tau_0 A_2B_1)/2}{ M_{\textbf{W}^*}M_{\textbf{Z}^*}A_1A_2+L_{\textbf{W}^*}L_{\textbf{Z}^*}B_1B_2+L_{\textbf{W}^*}M_{\textbf{Z}^*}B_2A_1+M_{\textbf{W}^*}L_{\textbf{Z}^*}A_2B_1}\\
&\leqslant 2 \max\left\{\dfrac{\tau_0\|\textbf{X}\|+\dfrac{\tau_0  C_1C_2}{2}}{ M_{\textbf{W}^*}M_{\textbf{Z}^*}},\dfrac{nd\tau_0}{2 L_{\textbf{W}^*}L_{\textbf{Z}^*}},\dfrac{dC_1\tau_0}{ 2L_{\textbf{W}^*}M_{\textbf{Z}^*}},\dfrac{nC_2\tau_0}{2 M_{\textbf{W}^*}L_{\textbf{Z}^*}}\right\}\quad\text{(by \autoref{algeraic_lemma})},
\end{align*}
where all the quantities $C_1,C_2,L_{\textbf{W}^*},L_{\textbf{Z}^*},M_{\textbf{W}^*},M_{\textbf{Z}^*}$ depend on $r_0$. For the condition as stated in \cite{bhattacharya2023convergence} (see Theorem 3.3) to be satisfied we need to be able to find an $r_0>0$ such that the following 
$$\max\left\{\dfrac{\tau_0\|\textbf{X}\|+\dfrac{\tau_0  C_1C_2}{2}}{ M_{\textbf{W}^*}M_{\textbf{Z}^*}},\dfrac{nd\tau_0}{2 L_{\textbf{W}^*}L_{\textbf{Z}^*}},\dfrac{dC_1\tau_0}{ 2L_{\textbf{W}^*}M_{\textbf{Z}^*}},\dfrac{nC_2\tau_0}{2 M_{\textbf{W}^*}L_{\textbf{Z}^*}}\right\}< 1 $$
holds. Denoting $\gamma_0 = (1+e^{-1})/4$, if $r_0$ is small enough, and in particular if $r_0\to 0$, then we observe that each term in the max remains bounded above by its corresponding counterpart in the condition in \eqref{GC}, 
since $C_1\leqslant 2\|\mu^*_{\textbf{Z}}\|,C_2\leqslant 2\|\mu^*_{\textbf{W}}\|,L^2_{\textbf{Z}^*}\to nc_3/(e\|\Sigma^*_{\textbf{Z}}\|^2),\ L^2_{\textbf{W}^*}\to dc_3/(e\|\Sigma^*_{\textbf{W}}\|^2),\ M^2_{\textbf{W}^*}\to \gamma_0/\|\Sigma^*_{\textbf{W}}\|$ and $M^2_{\textbf{Z}^*}\to \gamma_0/\|\Sigma^*_{\textbf{Z}}\|$ by \autoref{control_lemma} (after plugging back the values $c_2,\ c_2'$ and $c_3$ from \autoref{KL_lowerbound}). Thus, it is sufficient to require that $(\mu^*_{\textbf{W}},\Sigma^*_{\textbf{W}},\mu^*_{\textbf{Z}},\Sigma^*_{\textbf{Z}})\in \mathcal{I}$ since we can then select $r_0>0$ such that $\text{GCorr}_{1/2}(r_0)\in(0,2)$. Finally, we choose $r^*>0$ be sufficiently small such that $q^{(0)}_\textbf{Z}\in\mathcal{B}^*_{\textbf{Z}} (r^*)$ implies $D_\text{KL,1/2}(q^{(0)}_{\textbf{Z}}\|q^*_{\textbf{Z}})\leqslant r_0/2$ (this is possible because of \autoref{neurips_paper_remark}).
\end{proof}
\begin{Remark}\label{neurips_paper_remark}
As shown in \cite{bhattacharya2023convergence}, above proof yields $D_{\text{KL},1/2}(q^{(t)}_{\textbf{W}}\|\ q^*_{\textbf{W}})\leqslant C'_{\textbf{W}}\kappa^t$ and $D_{\text{KL},1/2}(q^{(t)}_{\textbf{Z}}\|\ q^*_{\textbf{Z}})\leqslant C'_{\textbf{Z}}\kappa^t$ for some constants $C'_\textbf{W},\ C'_\textbf{Z}>0$, which is a stronger result than the exponential convergence of KL as stated in \autoref{main_theorem_bpca}. The key reason why we can state \autoref{main_theorem_bpca} entirely in terms of KL divergence is due a recent work \citep{zhang2023properties}. In particular, since \( q^{(0)}_\mathbf{Z} \) and \( q^*_\mathbf{Z} \) are matrix normal distributions with a special covariance structure that imposes row independence, the KL divergence decomposes as a sum over independent rows each of which is a multivariate normal. This tensorization property of KL implies that controlling \( \mathrm{KL}(q^*_\mathbf{Z} \,\|\, q^{(0)}_\mathbf{Z}) \) is sufficient to bound the reverse \( \mathrm{KL}(q^{(0)}_\mathbf{Z} \,\|\, q^*_\mathbf{Z}) \), and hence the symmetrized KL required in the hypothesis of Theorem 4 in \cite{bhattacharya2023convergence}.

\end{Remark}
\begin{proof}[Proof of \autoref{parameter}]
Let $r_0>0$ such that $\text{GCorr}_{1/2}(r_0)\in(0,2)$ which exists as \autoref{A1} and \autoref{A2} are satisifed. Suppose, $r^*$ be sufficiently small such that $q^{(0)}_\textbf{Z}\in\mathcal{B}^*_{\textbf{Z}} (r^*)$ implies $D_\text{KL,1/2}(q^{(0)}_{\textbf{Z}}\|q^*_{\textbf{Z}})\leqslant r_0/2$ (this is possible because of \autoref{neurips_paper_remark}). Following \cite{bhattacharya2023convergence}, it can be shown that $q^{(t)}_\textbf{Z}\in\mathcal{B}^*_{\textbf{Z}} (r_0)$ for all $t\geqslant 1$. Recall \autoref{KL_lowerbound} and set
$$m_{\textbf{Z}}=\min \left\{\dfrac{nc_3}{\|\Sigma^*_{\textbf{Z}}\|^2 e^{1+(2r_0/n)}}, \dfrac{c_2'}{\|\Sigma^*_{\textbf{Z}}\|}\right\}.$$
Using the lower bounds in \autoref{KL_lowerbound} and \autoref{neurips_paper_remark}, we have that
$$m_{\textbf{Z}}\|\Sigma^{(t+1)}_{\textbf{Z}}-\Sigma^*_{\textbf{Z}}\|^2 \leqslant D_{\text{KL},1/2}\left(q^{(t+1)}_{\textbf{Z}}\|\ q^*_{\textbf{Z}}\right)\leqslant C'_{\textbf{Z}}\kappa^t,$$
and 
$$m_{\textbf{Z}}\|\mu^{(t+1)}_{\textbf{Z}}-\mu^*_{\textbf{Z}}\|^2 \leqslant D_{\text{KL},1/2}\left(q^{(t+1)}_{\textbf{Z}}\|\ q^*_{\textbf{Z}}\right)\leqslant C'_{\textbf{Z}}\kappa^t.$$
Therefore,
$$\max\{\|\mu^{(t+1)}_{\textbf{Z}}-\mu^*_{\textbf{Z}}\|,\ \|\Sigma^{(t+1)}_{\textbf{Z}}-\Sigma^*_{\textbf{Z}}\|\}\leqslant A_{\textbf{Z}}\kappa^{t/2},$$
where $A_{\textbf{Z}}=\sqrt{C'_{\textbf{Z}}/m_{\textbf{Z}}}$, a positive constant. The proof of the other inequality is similar. 
\end{proof}
\section{A primer on coercive functions and proof of \autoref{non_void}}
\begin{Definition}
    A function \( f: \mathbb{R}^n \to \mathbb{R} \) is said to be \emph{coercive} if 
    \[
    \|x\| \to \infty \quad \Rightarrow \quad f(x) \to \infty,
    \]
    i.e., \( f(x) \) grows to infinity whenever the norm \( \|x\| \) of the input vector tends to infinity.
\end{Definition}

We now state a standard result without proof:

\begin{Theorem}
    A coercive function on a closed subset \( C \subset \mathbb{R}^n \) attains a global minimum.
\end{Theorem}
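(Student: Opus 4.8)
The plan is to reduce this to the classical extreme value theorem by showing that the infimum is attained on a bounded—hence compact—subset of $C$, with coercivity serving precisely to rule out minimizing sequences escaping to infinity. I would assume throughout that $f$ is continuous (at minimum, lower semicontinuous), as is implicit in the standard statement; without some such regularity the conclusion fails. I also take $C\neq\varnothing$, since otherwise the statement is vacuous.

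First I would fix an arbitrary point $x_0\in C$ and set $M=f(x_0)$. By the coercivity of $f$, there exists $R>0$ such that $f(x)>M$ whenever $\|x\|>R$; after enlarging $R$ if necessary, I may also assume $\|x_0\|\leqslant R$, so that $x_0\in K$, where $K:=C\cap\overline{B}(0,R)$ and $\overline{B}(0,R)$ denotes the closed ball of radius $R$. The set $K$ is the intersection of the closed set $C$ with the closed bounded ball, hence closed and bounded, and therefore compact by the Heine--Borel theorem.

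Next I would invoke the extreme value theorem: a continuous function on a nonempty compact set attains its minimum. Thus there exists $x^*\in K$ with $f(x^*)\leqslant f(x)$ for all $x\in K$; in particular $f(x^*)\leqslant f(x_0)=M$. To upgrade this to a global minimum over all of $C$, I would split into two cases. If $x\in C$ with $\|x\|\leqslant R$, then $x\in K$ and so $f(x)\geqslant f(x^*)$. If instead $x\in C$ with $\|x\|>R$, then by the choice of $R$ we have $f(x)>M\geqslant f(x^*)$. In either case $f(x)\geqslant f(x^*)$, so $x^*$ is a global minimizer of $f$ on $C$.

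This argument is entirely routine, so I do not anticipate a genuine obstacle; the only substantive idea is recognizing that coercivity confines the search for a minimizer to a compact sublevel region, after which compactness does all the work. The one subtlety worth flagging is the implicit continuity hypothesis, which the bare statement omits but which is indispensable for applying the extreme value theorem.
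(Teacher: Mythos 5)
Your proof is correct and is the standard argument: coercivity confines any minimizer to a compact sublevel region, and the extreme value theorem finishes the job. The paper itself states this result explicitly without proof (calling it standard), so there is nothing to compare against; note also that your flagging of the implicit continuity hypothesis is consistent with how the paper applies the theorem, namely to the restriction $\Psi_0\rvert_{\mathcal{M}_{\varepsilon}}$, which it verifies to be continuous and coercive on the closed set $\mathcal{M}_{\varepsilon}$.
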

The above result will be one of the key ingredients in the following proof. 
\begin{proof}[Proof of \autoref{non_void}]
We recall that if $q_\textbf{W}= \mathcal{N}(\mu_{\textbf{W}},I_d\otimes \Sigma_{\textbf{W}})$ and $q_\textbf{Z}= \mathcal{N}(\mu_{\textbf{Z}},I_n\otimes \Sigma_{\textbf{Z}})$, then the loss function $\Psi$ is such that
$$\Psi \left(\mu_{\textbf{W}},\Sigma_{\textbf{W}},\mu_{\textbf{Z}},\Sigma_{\textbf{Z}}\right)= \text{KL}\left(q_{\textbf{W}}q_{\textbf{Z}}\| \pi\right) -\log p(\textbf{X}).$$
The function $\Psi$ corresponds to $-\text{ELBO}(q)$ (recall \eqref{elbo_def}), where
$$\text{ELBO}(q)=\mathbb{E}_q[\log p(\textbf{W},\textbf{Z},\textbf{X})]-\mathbb{E}_q[\log q]\quad \text{and}\ q= q_{\textbf{W}}q_{\textbf{Z}}.$$
It can be deduced that (see \autoref{elbo_derivation}) 
\begin{align*}
\Psi\left(\mu_{\textbf{W}}, \Sigma_{\textbf{W}}, \mu_{\textbf{Z}}, \Sigma_{\textbf{Z}}\right) 
=& -\tau_0 \text{tr}\left(\mu_{\textbf{W}} \mu_{\textbf{Z}}' \textbf{X}\right)
+ \frac{\tau_0}{2} \text{tr}\left(\Gamma_{\textbf{W}} \Gamma_{\textbf{Z}}\right) + \frac{d}{2} \text{tr}\left(\Lambda \Sigma_{\textbf{W}}\right) 
+ \frac{1}{2} \text{tr}\left(\mu_{\textbf{W}} \Lambda \mu_{\textbf{W}}'\right) \\
& + \frac{1}{2} \text{tr}\left(\Gamma_{\textbf{Z}}\right) 
- \frac{d}{2} \log \det\left(\Sigma_{\textbf{W}}\right) 
- \frac{n}{2} \log \det\left(\Sigma_{\textbf{Z}}\right) + \text{cst},
\end{align*}
where 
$$\Gamma_{\textbf{W}}=d\Sigma_{\textbf{W}}+\mu_{\textbf{W}}'\mu_{\textbf{W}}\text{  and  }\Gamma_{\textbf{Z}}=n\Sigma_{\textbf{Z}}+\mu_{\textbf{Z}}'\mu_{\textbf{Z}}.$$
Plugging in the above values and then completing the square one obtains
\begin{align}
\Psi\left(\mu_{\textbf{W}}, \Sigma_{\textbf{W}}, \mu_{\textbf{Z}}, \Sigma_{\textbf{Z}}\right) &= \frac{\tau_0}{2} \left\|\textbf{X}- \mu_{\textbf{Z}} \mu_{\textbf{W}}' \right\|^2 + \frac{1}{2} \operatorname{tr}\left( \mu_{\textbf{W}}' \Lambda \mu_{\textbf{W}} \right) + \frac{1}{2} \operatorname{tr}\left( \mu_{\textbf{Z}}' \mu_{\textbf{Z}} \right) \notag \\
&\quad + \frac{d}{2} \operatorname{tr}\left( \Lambda\, \Sigma_{\textbf{W}} \right) + \frac{n}{2} \operatorname{tr}\left( \Sigma_{\textbf{Z}} \right) + \frac{\tau_0 d n}{2} \operatorname{tr}\left( \Sigma_{\textbf{W}}\, \Sigma_{\textbf{Z}} \right) \notag \\
&\quad + \frac{\tau_0 d}{2} \operatorname{tr}\left( \Sigma_{\textbf{W}}\, \mu_{\textbf{Z}}' \mu_{\textbf{Z}} \right) + \frac{\tau_0 n}{2} \operatorname{tr}\left( \mu_{\textbf{W}}' \mu_{\textbf{W}}\, \Sigma_{\textbf{Z}} \right) \notag \\
&\quad - \frac{d}{2} \log \det\left( \Sigma_{\textbf{W}} \right) - \frac{n}{2} \log \det\left( \Sigma_{\textbf{Z}} \right)+ \text{cst}. \label{eq:loss_function}
\end{align}

Ignoring constant term if we let
\begin{align*} 
\Psi_0\left(\mu_{\textbf{W}}, \Sigma_{\textbf{W}}, \mu_{\textbf{Z}}, \Sigma_{\textbf{Z}}\right) &= \Psi\left(\mu_{\textbf{W}}, \Sigma_{\textbf{W}}, \mu_{\textbf{Z}}, \Sigma_{\textbf{Z}}\right)-\text{cst}\\
&\geqslant \frac{\tau_0}{2} \left\|\textbf{X}- \mu_{\textbf{Z}} \mu_{\textbf{W}}' \right\|^2 + \frac{1}{2} \lambda_{\text{min}}\left(\Lambda\right) \|\mu_{\textbf{W}}\|^2 
+ \frac{1}{2} \|\mu_{\textbf{Z}}\|^2 + \frac{d}{2} \operatorname{tr}\left( \Lambda\, \Sigma_{\textbf{W}} \right) + \frac{n}{2} \operatorname{tr}\left( \Sigma_{\textbf{Z}} \right) \\
&\quad  + \frac{\tau_0 d n}{2} \operatorname{tr}\left( \Sigma_{\textbf{W}}\, \Sigma_{\textbf{Z}} \right) - \frac{d}{2} \log \det\left( \Sigma_{\textbf{W}} \right) - \frac{n}{2} \log \det\left( \Sigma_{\textbf{Z}} \right),
\end{align*}
then it is enough to show that $\Psi_0$ attains a global minima on the set $\mathbb{R}^{d\times k}\times \mathbb{S}^k_{+}\times\mathbb{R}^{n\times k}\times\mathbb{S}^k_{+}$. We have that
\begin{align}
\nonumber
\Psi_0\left(\mu_{\textbf{W}}, \Sigma_{\textbf{W}}, \mu_{\textbf{Z}}, \Sigma_{\textbf{Z}}\right)
\geqslant & \frac{1}{2} \lambda_{\text{min}}\left(\Lambda\right) \|\mu_{\textbf{W}}\|^2 
+ \frac{1}{2} \|\mu_{\textbf{Z}}\|^2 \\
\nonumber
& + \frac{1}{2} \left(d \lambda_{\text{min}}\left(\Lambda\right)\text{tr}\left(\Sigma_{\textbf{W}}\right)
- d\log \det \left(\Sigma_{\textbf{W}}\right) \right) \\
& + \frac{1}{2} \left( n \text{tr}\left(\Sigma_{\textbf{Z}}\right) 
- n\log \det \left(\Sigma_{\textbf{Z}}\right) \right).\label{psi_0}
\end{align}
Let us define the following closed subset for a fixed parameter $\varepsilon >0$ as
$$\mathcal{M}_{\varepsilon} = \left\{(\mu_{\textbf{W}},\Sigma_{\textbf{W}},\mu_{\textbf{Z}},\Sigma_{\textbf{Z}}): \|\Sigma_{\textbf{W}}\|_{\text{op}}\geqslant \varepsilon \text{ and }\|\Sigma_{\textbf{Z}}\|_{\text{op}}\geqslant \varepsilon\right\}\subset \mathbb{R}^{d\times k}\times \mathbb{S}^k_{+}\times\mathbb{R}^{n\times k}\times\mathbb{S}^k_{+},$$
where $\|\cdot\|_{\text{op}}$ represents the operator norm. It is clear from \eqref{psi_0} that the restriction $\Psi_0 \rvert_{\mathcal{M}_{\varepsilon}}$
 is a continuous and coercive function on the closed subspace $\mathcal{M}_{\varepsilon}$. Therefore, $\Psi_0$ attains a global minima on the set $\mathcal{M}_{\varepsilon}$. We call this minima $m_{\varepsilon}$. Since $\Psi_{0}\to\infty$, if either $\|\Sigma_\textbf{W}\|$ or $\|\Sigma_\textbf{Z}\|$ approaches to $0$, therefore, we can select $\varepsilon^* <\varepsilon $ such that $\Psi_{0}(x) > m_{\varepsilon}$ for all $x\in \mathcal{M}^{\text{c}}_{\varepsilon^*}.$ If we let $m_{\varepsilon^*}$ denote the global minima on the closed subspace $\mathcal{M}_{\varepsilon^*}$, then $m_{\varepsilon^*} \leqslant m_{\varepsilon}$ as $\mathcal{M}_{\varepsilon}\subset\mathcal{M}_{\varepsilon^*}$, and consequently $m_{\varepsilon^*}$ is a global minima of $\Psi_0$ on $\mathbb{R}^{d\times k}\times \mathbb{S}^k_{+}\times\mathbb{R}^{n\times k}\times\mathbb{S}^k_{+}$.
\end{proof}
\section{The Matrix Normal Distribution and its properties}\label{ap1}
We follow the exposition in \cite{gupta1999matrix}. A matrix normal distribution with mean parameter $\mu \in \mathbb{R}^{d \times n}$ and covariance structure $\Sigma^{(c)}\otimes \Sigma^{(r)}$ is defined through the following probability density function (PDF):
\begin{align}
\nonumber
f(\textbf{U};\mu,\Sigma^{(c)}\otimes\Sigma^{(r)})
&=(2\pi)^{-dn/2}|\Sigma^{(c)}|^{-n/2}|\Sigma^{(r)}|^{-d/2}\\
\nonumber
&\quad \exp\left(-\dfrac{1}{2}\text{tr}\left({\Sigma^{(c)}}^{-1}(\textbf{U}-\mu)({\Sigma^{(r)}}^{-1})'(\textbf{U}-\mu)'\right)\right),
\end{align}
where $\Sigma^{(c)}$ and $\Sigma^{(r)}$ are two symmetric positive-definite matrices that denote the covariance across columns and rows of \textbf{U}, respectively. We use the notation $\textbf{U}\sim\mathcal{N}(\mu, \Sigma^{(c)} \otimes \Sigma^{(r)})$. 

We now spend some words to elucidate the density function above. The vectorization operator `vec' transforms a matrix $\textbf{U}\in\mathbb{R}^{d\times n}$ into  the column consisting of the column vectors of $\textbf{U}$, i.e.  $\text{vec}(\textbf{U})=(u_{.1}'\ldots,u_{.n}')'$, where $u_{.j}$ is the $j$-th column of the matrix $\textbf{U}$. The mean and covariance of the matrix \textbf{U} is defined to be the mean and covariance of this random vector $\text{vec}(\textbf{U})$, which is a multivariate normal in our case and we write $\text{vec}(\textbf{U})\sim \mathcal{N}_{dn}(\text{vec}(\mu),\Sigma^{(c)}\otimes\Sigma^{(r)})$. 
The notation significantly simplifies if the columns of $\{u_{.j}\}_{j=1}^n$ are $n$ i.i.d. observations with a shared covariance matrix $\Sigma_U$, and in that case $\text{cov}(\textbf{U})=\Sigma_{U}\otimes I_n$. In such cases, a key result that is used repeatedly is provided by the following proposition (see \cite{vsmidl2006variational}). 
\begin{Proposition}\label{lem}
Let \textbf{U} be a given $d\times n$ matrix valued random variable. If $\textbf{U}\sim\mathcal{N}(\mu,I_d\otimes \Sigma),$ then $\textbf{U}'\sim \mathcal{N}(\mu',\Sigma\otimes I_d)$ and the second non-central moments are given by 
$$\mathbb{E}(\textbf{U}'\Pi_1\textbf{U})= \text{tr}\left(\Pi_1 \right)\Sigma+\mu'\Pi_1\mu,$$
$$\mathbb{E}(\textbf{U}\Pi_2\textbf{U}')= \text{tr}\left(\Pi_2\Sigma\right)I_d+\mu\Pi_2\mu',$$
where $\Pi_1, \ \Pi_2$ are $d\times d$ and $n\times n$ symmetric positive definite matrices, respectively. 
\end{Proposition}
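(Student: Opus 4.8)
The plan is to reduce both moment identities to the single entrywise second-moment formula implied by the hypothesis, and then verify them by a direct index computation. First I would make explicit the probabilistic content of $\mathbf{U}\sim\mathcal{N}(\mu,I_d\otimes\Sigma)$: reading off the exponent of the matrix-normal density recalled at the start of this appendix, this says exactly that the $d$ rows of $\mathbf{U}$ are mutually independent, each being an $n$-dimensional normal vector with covariance $\Sigma$. Writing $\mathbf{U}_{ra}$ for the entry in row $r$ and column $a$, this is equivalent to
$$\mathbb{E}[\mathbf{U}_{ra}\mathbf{U}_{sb}]=\delta_{rs}\,\Sigma_{ab}+\mu_{ra}\mu_{sb},\qquad r,s\in\{1,\dots,d\},\ a,b\in\{1,\dots,n\}.$$
This formula is the engine of the whole argument.

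The first claim $\mathbf{U}'\sim\mathcal{N}(\mu',\Sigma\otimes I_d)$ I would obtain immediately from the same density: transposing $\mathbf{U}$, transposing the mean, and swapping the two Kronecker factors leaves the density invariant, since the trace in the exponent is unchanged under this operation and the two determinantal prefactors merely exchange roles. This is a structural identity and needs no computation.

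For the two moment formulas I would work entrywise. For $\mathbb{E}(\mathbf{U}'\Pi_1\mathbf{U})$ I expand its $(a,b)$ entry as $\sum_{r,s}(\Pi_1)_{rs}\,\mathbb{E}[\mathbf{U}_{ra}\mathbf{U}_{sb}]$, insert the formula above, and split the sum: the covariance part contracts through $\sum_r(\Pi_1)_{rr}=\mathrm{tr}(\Pi_1)$ into $\mathrm{tr}(\Pi_1)\Sigma_{ab}$, while the mean part reassembles as $(\mu'\Pi_1\mu)_{ab}$. The computation for $\mathbb{E}(\mathbf{U}\Pi_2\mathbf{U}')$ is the mirror image: expanding the $(r,s)$ entry and using the Kronecker delta yields $\delta_{rs}\sum_{a,b}(\Pi_2)_{ab}\Sigma_{ab}$, which equals $\delta_{rs}\,\mathrm{tr}(\Pi_2\Sigma)$ by symmetry of $\Sigma$, and the remaining terms reassemble as $(\mu\Pi_2\mu')_{rs}$.

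I do not expect any genuine analytic obstacle: the statement is a bookkeeping exercise and could alternatively be cited from the matrix-variate literature \citep{gupta1999matrix}. The only point demanding care---and the one I would pin down at the very start---is the correct reading of the notation $I_d\otimes\Sigma$, namely that it encodes independent rows with a common $n\times n$ row-covariance $\Sigma$ rather than the transposed structure. Getting this identification right is precisely what makes the index manipulations land on the stated right-hand sides; everything downstream is routine.
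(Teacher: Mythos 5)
Your proof is correct. Note that the paper itself does not prove this proposition---it is quoted from the matrix-variate literature (\cite{vsmidl2006variational}, \cite{gupta1999matrix})---so your entrywise verification supplies a self-contained argument where the paper offers only a citation. The one point that genuinely requires care is the one you isolate at the outset: under the paper's density convention, $I_d\otimes\Sigma$ for a $d\times n$ matrix means independent rows with common $n\times n$ covariance $\Sigma$, which is exactly the second-moment formula $\mathbb{E}[\mathbf{U}_{ra}\mathbf{U}_{sb}]=\delta_{rs}\Sigma_{ab}+\mu_{ra}\mu_{sb}$ you use as the engine; both index contractions (using $\sum_r(\Pi_1)_{rr}=\mathrm{tr}(\Pi_1)$ for the first identity and the symmetry of $\Sigma$ to identify $\sum_{a,b}(\Pi_2)_{ab}\Sigma_{ab}$ with $\mathrm{tr}(\Pi_2\Sigma)$ for the second) check out, as does the transpose claim via invariance of the trace in the exponent and of the determinantal prefactors.
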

For two $d\times k$ matrix normal distributions $q_{\textbf{W}}=\mathcal{N}(\mu_{\textbf{W}},I_d\otimes \Sigma_\textbf{W})$ and $q^*_{\textbf{W}}=\mathcal{N}(\mu^*_{\textbf{W}},I_d\otimes \Sigma^*_\textbf{W})$, we define the quantity:
$$\mathcal{J}(\Sigma_\textbf{W},
\Sigma_\textbf{W}^*)=d\left(\text{tr}\left([\Sigma^*_{\textbf{W}}]^{-1}\Sigma_{\textbf{W}}\right)-k-\log\text{det}\left([\Sigma^*_{\textbf{W}}]^{-1}\Sigma_{\textbf{W}}\right)\right).$$
We now record the following expression of KL--divergence between $q_{\textbf{W}}$ and $q^*_{\textbf{W}}$:
\begin{equation}\label{matrix_kl_def}
\text{KL}\left(q_{\textbf{W}}\|q^*_{\textbf{W}}\right)= \dfrac{1}{2}\left(\mathcal{J}(\Sigma_\textbf{W},\Sigma_\textbf{W}^*)+\left(\text{vec}(\mu_{\textbf{W}})-\text{vec}(\mu^*_{\textbf{W}})\right)'\left(I_d\otimes \Sigma^*_{\textbf{W}}\right)^{-1}\left(\text{vec}(\mu_{\textbf{W}})-\text{vec}(\mu^*_{\textbf{W}})\right) \right).
\end{equation}
\section{Details of the derivation of log-posterior}\label{ap2}
\noindent In this appendix, our goal is to derive the log-posterior $\log \pi(\textbf{W},\textbf{Z})$. We recall the BPCA model with normal error terms in terms of matrix-valued random variables as:
\begin{equation}\label{model}
\textbf{X}=\textbf{Z}\textbf{W}'+\textbf{E},
\end{equation}
where the rows of the $n\times d$ data matrix $\mathbf{X}$ are observations $x_i$, the $n \times k$ matrix $\mathbf{Z}$ contains latent variables $z_i$, and $\mathbf{E}$ is an $n \times d$ noise matrix whose rows are scaled Gaussian noise vectors $\tau_0^{-1/2} \epsilon_i$ for $i=1,2,\ldots,n$. The loading matrix $\mathbf{W}$ has size $d \times k$, and we fix the noise precision $\tau_0 > 0$ for simplicity. The likelihood and priors are given by (see \cite{bishop1}):
\[
p(\textbf{X}|\textbf{W},\textbf{Z})=\mathcal{N}\left(\textbf{X};\textbf{Z}\textbf{W}',\tau_0^{-1}I_n\otimes I_d\right), \quad
p(\textbf{W})=\mathcal{N}(\textbf{W};\textbf{0},I_d\otimes \Lambda^{-1}), \quad
p(\textbf{Z})=\mathcal{N}(\textbf{Z};\textbf{0},I_n\otimes I_k),
\]
where $\Lambda = \text{diag}(\sigma_1^{-2}, \ldots, \sigma_k^{-2})$ is a diagonal matrix of precision hyperparameters. Assume that, $\Lambda\not\propto I_k$. In the above equation $\mathcal{N}\left(\textbf{X};\textbf{Z}\textbf{W}',\tau_0^{-1}I_n\otimes I_d\right)$ denotes a matrix normal density and an interpretation follows in \autoref{ap1}. The posterior density is thus:
\begin{equation}\label{joint}
\pi(\textbf{W},\textbf{Z})\propto p(\textbf{X}|\textbf{W},\textbf{Z})p(\textbf{W})p(\textbf{Z}).
\end{equation}
Therefore, given the hyper-parameter $\Lambda$, the logarithm of the posterior can be obtained as:
\begin{align}
\nonumber
\log \pi(\textbf{W},\textbf{Z})&=\log p(\textbf{X}|\textbf{W},\textbf{Z})+\log p(\textbf{W})+\log p(\textbf{Z})+\text{cst}\\
\nonumber
&=\dfrac{dn}{2}\log\tau_0-\dfrac{1}{2}\tau_0\text{tr}\left((\textbf{X}-\textbf{Z}\textbf{W}')(\textbf{X}-\textbf{Z}\textbf{W}')'\right)-\dfrac{1}{2}\text{tr}(\textbf{W}\Lambda\textbf{W}')-\dfrac{1}{2}\text{tr}(\textbf{Z}\textbf{Z}')+\text{cst}\\
\nonumber
&=-\dfrac{1}{2}\tau_0\text{tr}\left(\textbf{X}\textbf{X}'-\textbf{X}\textbf{W}\textbf{Z}'-\textbf{Z}\textbf{W}'\textbf{X}'+\textbf{Z}\textbf{W}'\textbf{W}\textbf{Z}'\right)-\dfrac{1}{2}\text{tr}(\textbf{W}\Lambda\textbf{W}') -\dfrac{1}{2}\text{tr}(\textbf{Z}\textbf{Z}')+\text{cst}\\
\nonumber
&=\tau_0\text{tr}\left(\textbf{W}\textbf{Z}'\textbf{X}\right)-\dfrac{1}{2}\tau_0\text{tr}\left(\textbf{W}\textbf{Z}'\textbf{Z}\textbf{W}'\right)-\dfrac{1}{2}\text{tr}(\textbf{W}\Lambda\textbf{W}') -\dfrac{1}{2}\text{tr}(\textbf{Z}\textbf{Z}')+\text{cst},
\end{align}
where at the final step we used properties of the trace operator and the constant term does not depend on the parameters $\textbf{W},$ $\textbf{Z}$.
\section{Details of the derivation of ELBO}\label{elbo_derivation}
Let $q=(\mu_\textbf{W},\Sigma_{\textbf{W}},\mu_\textbf{Z},\Sigma_{\textbf{Z}})\in \mathbb{R}^{d}\times \mathbb{S}_k^+\times \mathbb{R}^{n}\times \mathbb{S}_k^+$ be given. To derive an explicit expression of $\text{ELBO}(q)$, we first recall the definition 
$$\text{ELBO}(q)=\mathbb{E}_q[\log p(\textbf{W},\textbf{Z},\textbf{X})]-\mathbb{E}_q[\log q],$$
where $q_\textbf{W}=\mathcal{N}( \mu_\textbf{W},I_d\otimes \Sigma_\textbf{W})$, $q_\textbf{Z}=\mathcal{N}(\mu_\textbf{Z}, I_n\otimes \Sigma_\textbf{Z})$ and $q= q_\textbf{W}q_\textbf{Z}$. Expanding we have that
$$\text{ELBO}(q)= \mathbb{E}_{q}[\log \pi(\textbf{W}, \textbf{Z})]+ \mathbb{E}_{q}[-\log q_\textbf{W}]+ \mathbb{E}_{q}[-\log q_\textbf{Z}]+\text{cst}.$$
The entropy terms in the above expressions are $d/2\log\det(\Sigma_{\textbf{W}})$ and $n/2\log\det(\Sigma_{\textbf{Z}})$, respectively. Using \eqref{elbo_technical} in the proof of \autoref{delta_upperbound}, we conclude that 
\begin{align*}
\text{ELBO}(q) &= \tau_0\text{tr}\left(\mu_{\textbf{W}}\mu_{\textbf{Z}}'\textbf{X}\right) 
-\dfrac{\tau_0}{2}\text{tr}\left(\Gamma_\textbf{W}\Gamma_{\textbf{Z}}\right) 
-\dfrac{d}{2}\text{tr}\left(\Lambda \Sigma_{\textbf{W}}\right) 
-\dfrac{1}{2}\text{tr}\left(\mu_\textbf{W}\Lambda\mu_{\textbf{W}}'\right) \\
&\quad -\dfrac{1}{2}\text{tr}\left(\Gamma_{\textbf{Z}}\right) 
+\dfrac{d}{2}\log\det(\Sigma_{\textbf{W}}) 
+\dfrac{n}{2}\log\det(\Sigma_{\textbf{Z}}) + \text{cst},
\end{align*}
where 
$$\Gamma_{\textbf{W}}=d\Sigma_{\textbf{W}}+\mu_{\textbf{W}}'\mu_{\textbf{W}}\text{  and  }\Gamma_{\textbf{Z}}=n\Sigma_{\textbf{Z}}+\mu_{\textbf{Z}}'\mu_{\textbf{Z}}.$$
\section{Details of the derivation of \autoref{algo1}}\label{ap3}
\subsection{Identification of $q^{(t+1)}_\textbf{W}$}\label{apb2} 
We have that
\begin{align}
\nonumber
q^{(t+1)}_{\textbf{W}}&\propto \exp\left(\int q_{\textbf{Z}}^{(t)}(\textbf{Z})\log p(\textbf{W},\textbf{Z},\textbf{X})\ \d\textbf{Z}\right)\\ 
\nonumber
&\propto\exp\Bigg[-\dfrac{1}{2}\text{tr}\left(-2\textbf{W}\int q_{\textbf{Z}}^{(t)}(\textbf{Z})(\tau_0\textbf{Z}'\textbf{X})\ \d\textbf{Z}\right)\\
\nonumber
&\quad-\dfrac{1}{2}\text{tr}\left(\textbf{W}\left(\int q_{\textbf{Z}}^{(t)}(\textbf{Z})(\tau_0\textbf{Z}'\textbf{Z})\ \d\textbf{Z}\right)\textbf{W}'\right)-\dfrac{1}{2}\text{tr}(\textbf{W}\Lambda\textbf{W}')\Bigg]\\
\nonumber
&= \exp\Bigg[-\dfrac{1}{2}\text{tr}\left(-2\textbf{W}\int q_\mathbf{X}^{(t)}(\textbf{Z})(\tau_0\textbf{Z}'\textbf{X})\ \d\textbf{Z}\right)\\
\nonumber
&\quad -\dfrac{1}{2}\text{tr}\left(\textbf{W}\left(\int q_{\textbf{Z}}^{(t)}(\textbf{Z})(\tau_0\textbf{Z}'\textbf{Z})\ \d\textbf{Z}+\Lambda\right)\textbf{W}'\right)\Bigg].
\end{align}
Comparing the above exponentiated quadratic form in $\textbf{W}$, we infer that $q^{(t+1)}_{\textbf{W}}$ is a matrix normal distribution of the form $\mathcal{N}(\mu^{(t+1)}_{\textbf{W}},I_{d}\otimes \Sigma^{(t+1)}_{\textbf{W}}),$ where the parameters are given by
\begin{align}
\nonumber
\Sigma^{(t+1)}_{\textbf{W}}&=(\tau_0(n\Sigma^{(t)}_{\textbf{Z}}+(\mu^{(t)}_{\textbf{Z}})'\mu^{(t)}_{\textbf{Z}})+\Lambda)^{-1},
\end{align}
using \autoref{lem} and independence between \textbf{Z} and $\tau$, and comparing the coefficient of \textbf{W} inside the trace factor, we obtain
\begin{align}
\nonumber
\mu^{(t+1)}_{\textbf{W}}&=(\tau_0\textbf{X}'\mu_{\textbf{Z}}^{(t)})\Sigma^{(t+1)}_{\textbf{W}}.
\end{align}
The parameters $\mu_\textbf{Z}^{(t)}$ and $\Sigma_\textbf{Z}^{(t)}$ are defined below.
\subsection{Identification of $q^{(t+1)}_{\textbf{Z}}$}
We have that
\begin{align}
\nonumber
q^{(t+1)}_{\textbf{Z}}&\propto \exp\left(\int q_\mathbf{W}^{(t+1)}(\textbf{W})\log p(\textbf{W},\textbf{Z},\textbf{X})\ \d\textbf{W}\right)\\ 
\nonumber
&\propto\exp\Bigg[-\dfrac{1}{2}\text{tr}\left(-2\int q_\mathbf{W}^{(t+1)}(\textbf{W})(\tau_0\textbf{W}\textbf{Z}'\textbf{X})\ \d\textbf{W}\right)\\
\nonumber
&\quad-\dfrac{1}{2}\text{tr}\left(\textbf{Z}\left(\int q_\mathbf{W}^{(t+1)}(\textbf{W})(\tau_0\textbf{W}'\textbf{W})\ \d\textbf{W}\right)\textbf{Z}'\right)-\dfrac{1}{2}\text{tr}(\textbf{Z}\textbf{Z}')\Bigg]\\
\nonumber
&=\exp\Bigg[-\dfrac{1}{2}\text{tr}\left(-2\int q_\mathbf{W}^{(t+1)}(\textbf{W})(\tau_0\textbf{W}\textbf{Z}'\textbf{X})\ \d\textbf{W}\right)\\
\nonumber
&\quad-\dfrac{1}{2}\text{tr}\left(\textbf{Z}\left(\int q_\mathbf{W}^{(t+1)}(\textbf{W})(\tau_0\textbf{W}'\textbf{W})\ \d\textbf{W}+I_k\right)\textbf{Z}'\right)\Bigg],
\end{align}
where we used $\text{tr}(MN)=\text{tr}(NM)$ for matrices $M\in\mathbb{R}^{m\times n}$ and $N\in\mathbb{R}^{n\times m}$. As before, comparing the above exponentiated quadratic form in $\textbf{Z}$ we infer that $q^{(t+1)}_{\textbf{Z}}$ is a matrix normal distribution of the form $\mathcal{N}(\mu^{(t+1)}_{\textbf{Z}},I_{n}\otimes \Sigma^{(t+1)}_{\textbf{Z}}),$ where the parameters are given by
\begin{align}
\nonumber
\Sigma^{(t+1)}_{\textbf{Z}}&=(\tau_0(d\Sigma^{(t+1)}_{\textbf{W}}+(\mu^{(t+1)}_{\textbf{W}})'\mu^{(t+1)}_{\textbf{W}})+I_k)^{-1},
\end{align}
using \autoref{lem} and independence between \textbf{W} and $\tau$, and comparing the coefficient of \textbf{Z} inside the trace factor, we obtain
\begin{align}
\nonumber
\mu^{(t+1)}_{\textbf{Z}}&=
(\tau_0\textbf{X}\mu_{\textbf{W}}^{(t+1)})\Sigma^{(t+1)}_{\textbf{Z}}.
\end{align}
\section{Auxiliary Lemmas}
\begin{Lemma}\label{rate_lemma}
    Let $\{x_n\}\subset \mathbb{R}^\ell$ be a sequence of vectors such that for all $n\in\mathbb{N}$, we have $\|x_n-x\|\leqslant cr^n$  for some positive constant $c$ and $r\in(0,1)$ fixed. Then we have that
    $$\left\|\dfrac{x_n}{\|x_n\|}-\dfrac{x}{\|x\|}\right\|\leqslant \dfrac{2cr^n}{\|x\|},$$
where $x\neq 0$.
\end{Lemma}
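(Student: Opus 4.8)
The plan is to control the difference between the two normalized vectors by a standard add-and-subtract decomposition, thereby reducing everything to the hypothesis $\|x_n - x\| \leqslant cr^n$. First I would insert the intermediate vector $x_n/\|x\|$ and write
$$\dfrac{x_n}{\|x_n\|} - \dfrac{x}{\|x\|} = \left(\dfrac{x_n}{\|x_n\|} - \dfrac{x_n}{\|x\|}\right) + \left(\dfrac{x_n}{\|x\|} - \dfrac{x}{\|x\|}\right).$$
The second parenthesized term is simply $(x_n - x)/\|x\|$, so its norm is bounded directly by $\|x_n - x\|/\|x\| \leqslant cr^n/\|x\|$.

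For the first term, I would factor out $x_n$ to obtain $x_n\bigl(1/\|x_n\| - 1/\|x\|\bigr)$. Taking norms and simplifying gives
$$\|x_n\| \left|\dfrac{1}{\|x_n\|} - \dfrac{1}{\|x\|}\right| = \left|1 - \dfrac{\|x_n\|}{\|x\|}\right| = \dfrac{\bigl|\,\|x\| - \|x_n\|\,\bigr|}{\|x\|}.$$
The key step is then the reverse triangle inequality $\bigl|\,\|x\| - \|x_n\|\,\bigr| \leqslant \|x - x_n\| \leqslant cr^n$, which bounds this first term by $cr^n/\|x\|$ as well.

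Combining the two estimates through the triangle inequality yields the claimed bound $\|x_n/\|x_n\| - x/\|x\|\| \leqslant 2cr^n/\|x\|$. The only point requiring a word of care is that the normalization $x_n/\|x_n\|$ must be well-defined, i.e.\ $x_n \neq 0$; this is implicit in the statement and in fact follows automatically for all large $n$ from $\|x_n - x\| \to 0$ together with $x \neq 0$. I do not expect any substantive obstacle here: the result is a routine consequence of the triangle and reverse-triangle inequalities, so the proof should be short and self-contained.
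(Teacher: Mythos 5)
Your proposal is correct and is essentially the same argument as the paper's: your add-and-subtract decomposition via the intermediate vector $x_n/\|x\|$ is algebraically identical to the paper's single-fraction identity $\frac{x_n}{\|x_n\|}-\frac{x}{\|x\|}=\frac{x_n(\|x\|-\|x_n\|)+\|x_n\|(x_n-x)}{\|x_n\|\,\|x\|}$, and both conclude with the reverse triangle inequality. Your remark about $x_n\neq 0$ is a fair point of care that the paper leaves implicit.
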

\begin{proof}
    \begin{align*}
\left\|\frac{x_n}{\|x_n\|}-\frac{x}{\|x\|}\right\|
&=\left\|\frac{x_n\left(\|x\|-\|x_n\|\right)+\|x_n\|\left(x_n-x\right)}{\|x_n\|\cdot\|x\|}\right\|\\[1ex]
&\leqslant \frac{1}{\|x\|}|\|x_n\|-\|x\||+\frac{1}{\|x\|}\|x_n-x\|\\[1ex]
&\leqslant\frac{2}{\|x\|}\|x_n-x\|.
\end{align*}

\end{proof}
\begin{Lemma}\label{delta_upperbound}
Let $q_{\textbf{W}},q_{\textbf{Z}}, q^*_{\textbf{W}},q^*_{\textbf{Z}}$ denote the matrix normal distributions  $\mathcal{N}( \mu_\textbf{W}, I_d\otimes \Sigma_\textbf{W}),\mathcal{N}( \mu_\textbf{Z}, I_n\otimes \Sigma_\textbf{Z})$, and $\mathcal{N}( \mu^*_{\textbf{W}}, I_d\otimes \Sigma^*_\textbf{W}),\mathcal{N}( \mu^*_{\textbf{Z}},I_n\otimes \Sigma^*_{\textbf{Z}})$. Let us denote the quantities $A_1=\|\mu_{\textbf{Z}}-\mu^*_{\textbf{Z}}\|, A_2=\|\mu_{\textbf{W}}-\mu^*_{\textbf{W}}\|,B_1=\|\Sigma_{\textbf{Z}}-\Sigma^*_{\textbf{Z}}\|,B_2=\|\Sigma_{\textbf{W}}-\Sigma^*_{\textbf{W}}\|,C_1=\|\mu_{\textbf{Z}}\|+\|\mu^*_{\textbf{Z}}\|,C_2=\|\mu_{\textbf{W}}\|+\|\mu^*_{\textbf{W}}\|$. We have that
    $$|\Delta_{q^*}(q_{\textbf{W}},q_{\textbf{Z}})|\leqslant  \left(\tau_0\|\textbf{X}\|+\dfrac{C_1C_2\tau_0}{2}\right)A_1A_2+(nd\tau_0B_1B_2+dC_1\tau_0 B_2A_1+ nC_2\tau_0 A_2B_1)/2,$$
where the quantity $\Delta_{q^*}(q_\textbf{W},q_\textbf{Z})$ is defined in \eqref{covariance}.
\end{Lemma}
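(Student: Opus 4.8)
The plan is to first establish the closed-form expression for $\Delta_{q^*}(q_\textbf{W},q_\textbf{Z})$ announced in \autoref{genk_subseq}, and then to bound each of its two resulting traces using elementary matrix-norm inequalities.

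First I would substitute the log-posterior derived in \autoref{ap2}, namely $\log\pi(\textbf{W},\textbf{Z})=\tau_0\text{tr}(\textbf{W}\textbf{Z}'\textbf{X})-\tfrac{\tau_0}{2}\text{tr}(\textbf{W}\textbf{Z}'\textbf{Z}\textbf{W}')-\tfrac12\text{tr}(\textbf{W}\Lambda\textbf{W}')-\tfrac12\text{tr}(\textbf{Z}\textbf{Z}')+\text{cst}$, into the definition \eqref{covariance}. The crucial observation is that $\int(q_\textbf{W}-q^*_\textbf{W})\,\d\textbf{W}=0$ and $\int(q_\textbf{Z}-q^*_\textbf{Z})\,\d\textbf{Z}=0$, so every summand depending on $\textbf{W}$ alone or on $\textbf{Z}$ alone (and the constant) integrates to zero against the product difference $(q_\textbf{W}-q^*_\textbf{W})(q_\textbf{Z}-q^*_\textbf{Z})$. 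Only the two bilinear interaction terms survive. For $\tau_0\text{tr}(\textbf{W}\textbf{Z}'\textbf{X})$, bilinearity and Fubini let me factor the double integral as $\tau_0\text{tr}[(\int(q_\textbf{W}-q^*_\textbf{W})\textbf{W}\,\d\textbf{W})(\int(q_\textbf{Z}-q^*_\textbf{Z})\textbf{Z}'\,\d\textbf{Z})\textbf{X}]=\tau_0\text{tr}[(\mu_\textbf{W}-\mu^*_\textbf{W})(\mu_\textbf{Z}'-[\mu^*_\textbf{Z}]')\textbf{X}]$, using that the matrix-normal means are $\mu_\textbf{W},\mu^*_\textbf{W},\mu_\textbf{Z},\mu^*_\textbf{Z}$. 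For the second term I would use the cyclic property to write $\text{tr}(\textbf{W}\textbf{Z}'\textbf{Z}\textbf{W}')=\text{tr}(\textbf{W}'\textbf{W}\,\textbf{Z}'\textbf{Z})$, which separates the dependence on the two blocks; applying \autoref{lem} with $\Pi_1=I_d$ and $\Pi_1=I_n$ gives $\mathbb{E}_{q_\textbf{W}}[\textbf{W}'\textbf{W}]=\Gamma_\textbf{W}$ and $\mathbb{E}_{q_\textbf{Z}}[\textbf{Z}'\textbf{Z}]=\Gamma_\textbf{Z}$, producing $-\tfrac{\tau_0}{2}\text{tr}[(\Gamma_\textbf{W}-\Gamma^*_\textbf{W})(\Gamma_\textbf{Z}-\Gamma^*_\textbf{Z})]$. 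This moment computation is exactly the one reused in the ELBO derivation of \autoref{elbo_derivation}.

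With the closed form in hand, I would bound each trace via the Cauchy--Schwarz inequality $|\text{tr}(MN)|\leqslant\|M\|\,\|N\|$ together with the submultiplicativity $\|MN\|\leqslant\|M\|\,\|N\|$ of the Frobenius norm. The first trace is then at most $A_2\cdot A_1\|\textbf{X}\|$, giving $\tau_0\|\textbf{X}\|A_1A_2$. For the second trace the key algebraic step is to control $\|\Gamma_\textbf{W}-\Gamma^*_\textbf{W}\|$: writing $\mu_\textbf{W}'\mu_\textbf{W}-[\mu^*_\textbf{W}]'\mu^*_\textbf{W}=\mu_\textbf{W}'(\mu_\textbf{W}-\mu^*_\textbf{W})+(\mu_\textbf{W}-\mu^*_\textbf{W})'\mu^*_\textbf{W}$ and combining the triangle and submultiplicativity inequalities with the $d(\Sigma_\textbf{W}-\Sigma^*_\textbf{W})$ part yields $\|\Gamma_\textbf{W}-\Gamma^*_\textbf{W}\|\leqslant dB_2+C_2A_2$, and symmetrically $\|\Gamma_\textbf{Z}-\Gamma^*_\textbf{Z}\|\leqslant nB_1+C_1A_1$.

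Finally I would multiply these two estimates, expand $(dB_2+C_2A_2)(nB_1+C_1A_1)=ndB_1B_2+dC_1B_2A_1+nC_2A_2B_1+C_1C_2A_1A_2$, scale by $\tau_0/2$, add the first-term bound $\tau_0\|\textbf{X}\|A_1A_2$, and collect the coefficients of $A_1A_2$ to land exactly on the stated inequality. The main obstacle is the careful derivation of the closed form in the first step, in particular justifying the vanishing of the marginal terms and applying \autoref{lem} correctly to the quadratic moments; once the closed form is secured, the remaining norm estimates are routine.
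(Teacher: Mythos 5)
Your proposal is correct and follows essentially the same route as the paper: reduce $\Delta_{q^*}$ to the two surviving bilinear traces $\tau_0\operatorname{tr}\left(\left(\mu_{\textbf{W}}-\mu^*_{\textbf{W}}\right)\left(\mu_{\textbf{Z}}'-[\mu^*_{\textbf{Z}}]'\right)\textbf{X}\right)-\frac{\tau_0}{2}\operatorname{tr}\left(\left(\Gamma_{\textbf{W}}-\Gamma_{\textbf{W}}^*\right)\left(\Gamma_{\textbf{Z}}-\Gamma^*_{\textbf{Z}}\right)\right)$ using \autoref{lem} for the second moments, then apply \autoref{trace_lemma}, submultiplicativity, and the triangle-inequality bounds $\|\Gamma_{\textbf{W}}-\Gamma^*_{\textbf{W}}\|\leqslant dB_2+C_2A_2$ and $\|\Gamma_{\textbf{Z}}-\Gamma^*_{\textbf{Z}}\|\leqslant nB_1+C_1A_1$. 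The only (immaterial) difference is that you justify the cancellation of the single-block terms directly from $\int(q-q^*)=0$, whereas the paper organizes the same cancellation as an alternating sum of four $\delta(\cdot,\cdot)$ functionals.
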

\begin{proof}
For any two non-negative integrable functions $f(\textbf{W}),\ g(\textbf{Z})$, we define the following functional
$$\delta\left(f,g\right)=\iint f(\textbf{W})g(\textbf{Z})\log \pi(\textbf{W},\textbf{Z})\ \d\textbf{W} \ \d\textbf{Z},$$
provided the quantity is finite. By definition, we have that
\begin{align}
\Delta_{q^*}(q_{\textbf{W}},q_{\textbf{Z}})\nonumber
&=\iint \left(q_\textbf{W}(\textbf{W})-q^*_{\textbf{W}}(\textbf{W})\right)\left(q_\textbf{Z}(\textbf{Z})-q^*_{\textbf{Z}}(\textbf{Z})\right)\log \pi(\textbf{W},\textbf{Z}) \ \d\textbf{W} \ \d\textbf{Z}\\ 
\label{delta}
&=\delta\left(q_{\textbf{W}},q_{\textbf{Z}}\right)-\delta\left(q^*_{\textbf{W}},q_{\textbf{Z}}\right)-\delta\left(q_{\textbf{W}},q^*_{\textbf{Z}}\right)+\delta\left(q^*_{\textbf{W}},q^*_{\textbf{Z}}\right).
\end{align}
 Let us now focus on
\begin{align}
\nonumber
\delta\left(q_{\textbf{W}},q_{\textbf{Z}}\right)&= \mathbb{E}[\log p(\textbf{X}|\textbf{W},\textbf{Z})]+\mathbb{E}[\log p(\textbf{W})]+ \mathbb{E}[\log p(\textbf{Z})]+\text{cst}\\ 
\nonumber
&=\tau_0\text{tr}\left(\mu_{\textbf{W}}\mu_{\textbf{Z}}'\textbf{X}\right)-\dfrac{\tau_0}{2}\mathbb{E}\left[\text{tr}\left(\textbf{W}\textbf{Z}'\textbf{Z}\textbf{W}\right)\right]-\dfrac{1}{2} \mathbb{E}[\text{tr}(\textbf{W}\Lambda\textbf{W}')]-\dfrac{1}{2}\mathbb{E}\left[\text{tr}\left(\textbf{Z}\textbf{Z}'\right)\right]+ \text{cst}.
\end{align}
Detailed derivation of the log-posterior can be found in \autoref{ap2}. To compute the the expectations of the above quadratic forms, let us define the following quantities for convenience
$$\Gamma_{\textbf{W}}=d\Sigma_{\textbf{W}}+\mu_{\textbf{W}}'\mu_{\textbf{W}}\text{  and  }\Gamma_{\textbf{Z}}=n\Sigma_{\textbf{Z}}+\mu_{\textbf{Z}}'\mu_{\textbf{Z}}.$$
Now from \autoref{expectation_lemmas} it follows that
$$\mathbb{E}[\text{tr}\left(\textbf{W}\textbf{Z}'\textbf{Z}\textbf{W}'\right)]= \text{tr}\left(\Gamma_\textbf{W}\Gamma_{\textbf{Z}}\right),
$$
$$
   \mathbb{E}[\text{tr}(\textbf{W}\Lambda\textbf{W}')]=d\text{tr}\left(\Lambda \Sigma_{\textbf{W}}\right)+\text{tr}\left(\mu_\textbf{W}\Lambda\mu_{\textbf{W}}'\right).
$$
Therefore, we have that
\begin{equation}\label{elbo_technical}
\mathbb{E}[\log \pi(\textbf{W}, \textbf{Z})]=\tau_0\text{tr}\left(\mu_{\textbf{W}}\mu_{\textbf{Z}}'\textbf{X}\right)-\dfrac{\tau_0}{2}\text{tr}\left(\Gamma_\textbf{W}\Gamma_{\textbf{Z}}\right)-\dfrac{d}{2}\text{tr}\left(\Lambda \Sigma_{\textbf{W}}\right)-\dfrac{1}{2}\text{tr}\left(\mu_\textbf{W}\Lambda\mu_{\textbf{W}}'\right)-\dfrac{1}{2}\text{tr}\left(\Gamma_{\textbf{Z}}\right)+\text{cst}.
\end{equation}
Combining everything in \eqref{delta} (note that only the cross-terms survive),
\begin{align*}
\Delta_{q^*}(q_{\textbf{W}},q_{\textbf{Z}}) 
&= \delta\left(q_{\textbf{W}},q_{\textbf{Z}}\right)-\delta\left(q^*_{\textbf{W}},q_{\textbf{Z}}\right)-\delta\left(q_{\textbf{W}},q^*_{\textbf{Z}}\right)+\delta\left(q^*_{\textbf{W}},q^*_{\textbf{Z}}\right) \\  
&=\tau_0\text{tr}\left(\left(\mu_{\textbf{W}}\mu_{\textbf{Z}}'-\mu^*_{\textbf{W}}\mu_{\textbf{Z}}'-\mu_{\textbf{W}}[\mu^*_{\textbf{Z}}]'+\mu^*_{\textbf{W}}[\mu^*_{\textbf{Z}}]'\right)\textbf{X}\right) \\ 
&\quad -\dfrac{\tau_0}{2}\text{tr}\left(\left(\Gamma_{\textbf{W}}\Gamma_{\textbf{Z}}'-\Gamma_{\textbf{W}}^*\Gamma_{\textbf{Z}}'-\Gamma_{\textbf{W}}[\Gamma_{\textbf{Z}}^*]'+\Gamma_{\textbf{W}}^*[\Gamma_{\textbf{Z}}^*]'\right)\right) \\ 
&= \tau_0 \text{tr}\left(\left(\mu_{\textbf{W}}-\mu^*_{\textbf{W}}\right)\left(\mu'_{\textbf{Z}}-[\mu^*_{\textbf{Z}}]'\right)\textbf{X}\right) \\ 
&\quad -\dfrac{\tau_0}{2}\text{tr}\left(\left(\Gamma_{\textbf{W}}-\Gamma_{\textbf{W}}^*\right)\left(\Gamma'_{\textbf{Z}}-[\Gamma^*_{\textbf{Z}}]'\right)\right)
\end{align*}
Using \autoref{trace_lemma} and the submultiplicativity of the Frobenius norm, we get that
\begin{equation}\label{delta_technical}
|\Delta_{q^*}(q_{\textbf{W}},q_{\textbf{Z}})|\leqslant \tau_0 \|\textbf{X}\|.\|\mu_{\textbf{W}}-\mu^*_{\textbf{W}}\|.\|\mu_{\textbf{Z}}-\mu^*_{\textbf{Z}}\|+(\tau_0/2)\|\Gamma_{\textbf{W}}-\Gamma^*_{\textbf{W}}\|.\|\Gamma_{\textbf{Z}}-\Gamma^*_{\textbf{Z}}\|.
\end{equation}
We can simplify the right side further. Note that by triangle inequality
\begin{align}
\|\Gamma_{\textbf{W}}-\Gamma^*_{\textbf{W}}\|\nonumber
&\leqslant d\|\Sigma_{\textbf{W}}-\Sigma^*_{\textbf{W}}\|+\|\mu'_{\textbf{W}}\mu_{\textbf{W}}-[\mu^*_{\textbf{W}}]'\mu^*_{\textbf{W}}\|\\ 
\nonumber
&\leqslant d\|\Sigma_{\textbf{W}}-\Sigma^*_{\textbf{W}}\|+(\|\mu_{\textbf{W}}\|+\|\mu^*_{\textbf{W}}\|)\cdot\|\mu_{\textbf{W}}-\mu^*_{\textbf{W}}\|,
\end{align}
since $\mu'_{\textbf{W}}\mu_{\textbf{W}}-[\mu^*_{\textbf{W}}]'\mu^*_{\textbf{W}}=\mu'_{\textbf{W}}(\mu_{\textbf{W}}-\mu^*_{\textbf{W}})+(\mu'_{\textbf{W}}-[\mu^*_{\textbf{W}}]')\mu^*_{\textbf{W}}$, and analogously,
\begin{align}
\|\Gamma_{\textbf{Z}}-\Gamma^*_{\textbf{Z}}\|\nonumber
&\leqslant n\|\Sigma_{\textbf{Z}}-\Sigma^*_{\textbf{Z}}\|+\|\mu'_{\textbf{Z}}\mu_{\textbf{Z}}-\mu'_{\textbf{Z}^*}\mu^*_{\textbf{Z}}\|\\ 
\nonumber
&\leqslant n\|\Sigma_{\textbf{Z}}-\Sigma^*_{\textbf{Z}}\|+(\|\mu_{\textbf{Z}}\|+\|\mu^*_{\textbf{Z}}\|)\cdot\|\mu_{\textbf{Z}}-\mu^*_{\textbf{Z}}\|.
\end{align}
Combining everything in \eqref{delta_technical} we obtain
\begin{align}
\nonumber
|\Delta_{q^*}(q_{\textbf{W}},q_{\textbf{Z}})|
&\leqslant \tau_0\|\textbf{X}\|A_1A_2+(\tau_0/2)\left(dB_2+C_2A_2\right)\left(nB_1+C_1A_1\right)\\ 
\nonumber
&=  \tau_0\|\textbf{X}\| A_1A_2+ (C_1C_2\tau_0A_1A_2+nd\tau_0B_1B_2+dC_1\tau_0 B_2A_1+ nC_2\tau_0 A_2B_1)/2\\
\nonumber
&=  \left(\tau_0\|\textbf{X}\|+\dfrac{C_1C_2\tau_0}{2}\right)A_1A_2+(nd\tau_0B_1B_2+dC_1\tau_0 B_2A_1+ nC_2\tau_0 A_2B_1)/2.
\end{align}
\end{proof}
\begin{Lemma}\label{KL_lowerbound}
Let $r_0>0$ be given. For any $(q_{\textbf{W}},q_{\textbf{Z}})\in \mathcal{B}^*_{\textbf{W}}(r_0)\times \mathcal{B}^*_{\textbf{Z}}(r_0)$, we have the following lower bounds
$$D_{\text{KL},1/2}\left(q_{\textbf{W}}\|q^*_{\textbf{W}}\right)\geqslant \dfrac{dc_3}{\|\Sigma^*_{\textbf{W}}\|^2 e^{1+2(r_0/d)}}\|\Sigma_{\textbf{W}}-\Sigma^*_{\textbf{W}}\|^2+\dfrac{c_2}{\|\Sigma^*_{\textbf{W}}\|}\|\mu_{\textbf{W}}-\mu^*_{\textbf{W}}\|^2,$$
$$D_{\text{KL},1/2}\left(q_{\textbf{Z}}\|q^*_{\textbf{Z}}\right)\geqslant  \dfrac{nc_3}{\|\Sigma^*_{\textbf{Z}}\|^2 e^{1+2(r_0/n)}}\|\Sigma_{\textbf{Z}}-\Sigma^*_{\textbf{Z}}\|^2+\dfrac{c_2'}{\|\Sigma^*_{\textbf{Z}}\|}\|\mu_{\textbf{Z}}-\mu^*_{\textbf{Z}}\|^2, $$
where $c_2=1/4(1+\exp{\left(-(1+(2r_0/d))\right)}),\ c_2'=1/4(1+\exp{\left(-(1+(2r_0/n))\right)})$ and $c_3=1/2$. 
\end{Lemma}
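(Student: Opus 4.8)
The plan is to work entirely from the closed-form expression \eqref{matrix_kl_def} for the KL divergence between two matrix normals and to symmetrize it. Writing $D_{\text{KL},1/2}(q_{\textbf{W}}\|q^*_{\textbf{W}}) = \tfrac12\bigl(\text{KL}(q_{\textbf{W}}\|q^*_{\textbf{W}}) + \text{KL}(q^*_{\textbf{W}}\|q_{\textbf{W}})\bigr)$, I would split each KL into a mean part and a covariance part (the $\mathcal{J}$ term). The key algebraic observation is that upon symmetrizing, the two $\log\det$ contributions cancel, leaving a covariance part equal to $\tfrac{d}{4}\bigl(\text{tr}([\Sigma^*_{\textbf{W}}]^{-1}\Sigma_{\textbf{W}}) + \text{tr}(\Sigma_{\textbf{W}}^{-1}\Sigma^*_{\textbf{W}}) - 2k\bigr)$, and a mean part equal to the quadratic form $\tfrac14\,\text{vec}(\mu_{\textbf{W}}-\mu^*_{\textbf{W}})'\bigl(I_d\otimes([\Sigma^*_{\textbf{W}}]^{-1}+\Sigma_{\textbf{W}}^{-1})\bigr)\text{vec}(\mu_{\textbf{W}}-\mu^*_{\textbf{W}})$. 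I would bound these two pieces separately, and throughout pass to the generalized eigenvalues of $[\Sigma^*_{\textbf{W}}]^{-1}\Sigma_{\textbf{W}}$, which diagonalize both the $\mathcal{J}$ terms and the relevant norm estimates simultaneously.

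For the mean part, I would lower-bound the smallest eigenvalue of $[\Sigma^*_{\textbf{W}}]^{-1}+\Sigma_{\textbf{W}}^{-1}$ by $\tfrac{1}{\|\Sigma^*_{\textbf{W}}\|}+\tfrac{1}{\|\Sigma_{\textbf{W}}\|}$, which reduces the quadratic form to $\tfrac14\bigl(\tfrac{1}{\|\Sigma^*_{\textbf{W}}\|}+\tfrac{1}{\|\Sigma_{\textbf{W}}\|}\bigr)\|\mu_{\textbf{W}}-\mu^*_{\textbf{W}}\|^2$. The missing ingredient is an upper bound on $\|\Sigma_{\textbf{W}}\|$, and this is exactly where membership in the KL ball $\mathcal{B}^*_{\textbf{W}}(r_0)=\{q_{\textbf{W}}:\text{KL}(q^*_{\textbf{W}}\|q_{\textbf{W}})\leqslant r_0\}$ is used. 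Extracting the covariance part of $\text{KL}(q^*_{\textbf{W}}\|q_{\textbf{W}})$ gives, for the eigenvalues $\nu_i$ of $\Sigma_{\textbf{W}}^{-1}\Sigma^*_{\textbf{W}}$, the constraint $\sum_i(\nu_i-1-\log\nu_i)\leqslant 2r_0/d$; since each summand is nonnegative, the scalar inequality $-1-\log\nu\leqslant \nu-1-\log\nu$ forces $\nu_i\geqslant e^{-(1+2r_0/d)}$, hence $\|\Sigma_{\textbf{W}}\|\leqslant \|\Sigma^*_{\textbf{W}}\|\,e^{1+2r_0/d}$. Substituting yields precisely the factor $c_2=\tfrac14\bigl(1+e^{-(1+2r_0/d)}\bigr)$.

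For the covariance part, denoting by $\mu_i$ the eigenvalues of $[\Sigma^*_{\textbf{W}}]^{-1}\Sigma_{\textbf{W}}$, I would rewrite $\tfrac{d}{4}\sum_i(\mu_i+\mu_i^{-1}-2)=\tfrac{d}{4}\sum_i \tfrac{(\mu_i-1)^2}{\mu_i}$, which is valid for all $\mu_i>0$ and is the natural quantity to compare against $\|\Sigma_{\textbf{W}}-\Sigma^*_{\textbf{W}}\|^2$. Using the congruence $\Sigma_{\textbf{W}}-\Sigma^*_{\textbf{W}}=[\Sigma^*_{\textbf{W}}]^{1/2}(N-I)[\Sigma^*_{\textbf{W}}]^{1/2}$ with $N=[\Sigma^*_{\textbf{W}}]^{-1/2}\Sigma_{\textbf{W}}[\Sigma^*_{\textbf{W}}]^{-1/2}$ and submultiplicativity gives $\|\Sigma_{\textbf{W}}-\Sigma^*_{\textbf{W}}\|^2\leqslant \|\Sigma^*_{\textbf{W}}\|^2\sum_i(\mu_i-1)^2$. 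Combining this with $\mu_i\leqslant \mu_{\max}\leqslant e^{1+2r_0/d}$ (obtained from the same KL-ball constraint, since $\mu_{\max}=1/\nu_{\min}$) converts the covariance part into the claimed lower bound $\tfrac{dc_3}{\|\Sigma^*_{\textbf{W}}\|^2 e^{1+2(r_0/d)}}\|\Sigma_{\textbf{W}}-\Sigma^*_{\textbf{W}}\|^2$. The bound for $\textbf{Z}$ is identical after replacing $d$ by $n$.

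I expect the main obstacle to be the eigenvalue bookkeeping: namely, converting membership in the KL ball into the two-sided spectral control $e^{-(1+2r_0/d)}\leqslant \mu_i\leqslant e^{1+2r_0/d}$ through a careful scalar analysis of $x-1-\log x$, and then translating the spectral quantities back into the Frobenius norms $\|\Sigma_{\textbf{W}}-\Sigma^*_{\textbf{W}}\|$ and $\|\mu_{\textbf{W}}-\mu^*_{\textbf{W}}\|$ appearing in the statement without degrading the constants. The remaining steps are routine matrix-algebra manipulations.
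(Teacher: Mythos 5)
Your proposal follows essentially the same route as the paper's proof: symmetrize the closed-form KL in \eqref{matrix_kl_def} so the $\log\det$ terms cancel, lower-bound the mean part by the smallest eigenvalue of $[\Sigma^*_{\textbf{W}}]^{-1}+\Sigma_{\textbf{W}}^{-1}$ together with the KL-ball bound $\|\Sigma_{\textbf{W}}\|_{\text{op}}\leqslant\|\Sigma^*_{\textbf{W}}\|e^{1+2r_0/d}$ (the paper's \autoref{subset_inclusion}), and lower-bound the symmetrized trace term by a Frobenius norm. The only methodological difference is cosmetic: where you diagonalize via the generalized eigenvalues and the congruence $\Sigma_{\textbf{W}}-\Sigma^*_{\textbf{W}}=[\Sigma^*_{\textbf{W}}]^{1/2}(N-I)[\Sigma^*_{\textbf{W}}]^{1/2}$, the paper packages the same estimate as \autoref{florians_lemma}, via the identity $\operatorname{tr}(\textbf{A}^{-1}\textbf{B})+\operatorname{tr}(\textbf{B}^{-1}\textbf{A})-2k=\operatorname{tr}\left(\textbf{A}^{-1}(\textbf{B}-\textbf{A})\textbf{B}^{-1}(\textbf{B}-\textbf{A})\right)$; both yield the same denominator $\|\Sigma^*_{\textbf{W}}\|^2e^{1+2(r_0/d)}$. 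One substantive remark: your careful bookkeeping gives the covariance part the prefactor $\tfrac{d}{4}$, so your argument delivers $c_3=1/4$ rather than the stated $c_3=1/2$. This is not a gap on your side --- the paper's own proof writes the symmetrized covariance term as $\tfrac12\left(\mathcal{J}(\Sigma_{\textbf{W}},\Sigma^*_{\textbf{W}})+\mathcal{J}(\Sigma^*_{\textbf{W}},\Sigma_{\textbf{W}})\right)$ where the definition of $D_{\text{KL},1/2}$ and \eqref{matrix_kl_def} actually give $\tfrac14$ of that sum, so the stated $c_3=1/2$ rests on a factor-of-two slip and your $1/4$ is the correct constant. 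The discrepancy only rescales the constants in \eqref{GC} and does not affect anything qualitatively.
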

\begin{proof}
We only prove the first lower bound as the other follows analogously. Let us write $\mathcal{P}_{\textbf{W}^*}=I_d\otimes \Sigma^*_{\textbf{W}}$. Therefore, by definition of KL--divergence between two matrix normal variables in \eqref{matrix_kl_def}, we have that
\begin{align*}
\text{KL}\left(q_{\textbf{W}}\|q^*_{\textbf{W}}\right)&= \dfrac{1}{2}\left(\mathcal{J}(\Sigma_\textbf{W}, \Sigma_\textbf{W}^*)+\left(\text{vec}(\mu_{\textbf{W}})-\text{vec}(\mu^*_{\textbf{W}})\right)'\mathcal{P}^{-1}_{\textbf{W}^*}\left(\text{vec}(\mu_{\textbf{W}})-\text{vec}(\mu^*_{\textbf{W}})\right) \right)\\
&\geqslant \dfrac{1}{2}\mathcal{J}(\Sigma_\textbf{W}, \Sigma_\textbf{W}^*)+\dfrac{1}{2} \lambda_{\text{min}}\left(\mathcal{P}^{-1}_{\textbf{W}^*}\right)\|\left(\text{vec}(\mu_{\textbf{W}})-\text{vec}(\mu^*_{\textbf{W}})\right)\|_2^2\\
&= \dfrac{1}{2}\mathcal{J}(\Sigma_\textbf{W}, \Sigma_\textbf{W}^*)+\dfrac{1}{2\|\Sigma^*_{\textbf{W}}\|}_{\text{op}}\|\mu_{\textbf{W}}-\mu^*_{\textbf{W}}\|^2.
\end{align*}
 Analogous inequality also holds for $\text{KL}\left(q^*_{\textbf{W}}\|q_{\textbf{W}}\right)$. Thus we arrive at the following follower bound
\begin{align*}
D_{\text{KL},1/2}\left(q_{\textbf{W}}\|q^*_{\textbf{W}}\right)&= \dfrac{1}{2}\left[\text{KL}\left(q_{\textbf{W}}\|q^*_{\textbf{W}}\right)+\text{KL}\left(q^*_{\textbf{W}}\|q_{\textbf{W}}\right)\right]\\
&\geqslant \dfrac{1}{2}\left(\mathcal{J}(\Sigma_\textbf{W}, \Sigma_\textbf{W}^*)+\mathcal{J}(\Sigma^*_\textbf{W},\Sigma_\textbf{W})\right)+\dfrac{1}{4}\left(\dfrac{1}{\|\Sigma^*_{\textbf{W}}\|_{\text{op}}}+\dfrac{1}{\|\Sigma_{\textbf{W}}\|_{\text{op}}}\right)\|\mu_{\textbf{W}}-\mu_{\textbf{W}}\|^2\\
&\geqslant \dfrac{1}{2}\left(\mathcal{J}(\Sigma_\textbf{W}, \Sigma_\textbf{W}^*)+\mathcal{J}(\Sigma^*_\textbf{W},\Sigma_\textbf{W})\right)\\
&\quad +\dfrac{1}{4}\left(\dfrac{1}{\|\Sigma^*_{\textbf{W}}\|}+\dfrac{1}{\|\Sigma^*_{\textbf{W}}\|e^{1+2(r_0/d)}}\right)\|\mu_{\textbf{W}}-\mu_{\textbf{W}}\|^2 \quad\text{(by \autoref{psd_norm_lemma} and \autoref{subset_inclusion})}\\
&= \dfrac{1}{2}\left(\mathcal{J}(\Sigma_\textbf{W}, \Sigma_\textbf{W}^*)+\mathcal{J}(\Sigma^*_\textbf{W},\Sigma_\textbf{W})\right) +\dfrac{c_2}{\|\Sigma^*_{\textbf{W}}\|}\|\mu_{\textbf{W}}-\mu^*_{\textbf{W}}\|^2.
\end{align*}
To finish the proof we focus on the term $\mathcal{J}(\Sigma_\textbf{W}, \Sigma_\textbf{W}^*)$. We have that
\begin{align*}
\dfrac{1}{2}\left(\mathcal{J}(\Sigma_\textbf{W}, \Sigma_\textbf{W}^*)+\mathcal{J}(\Sigma^*_\textbf{W},\Sigma_\textbf{W})\right)&= \dfrac{d}{2}\left(\text{tr}\left([\Sigma^*_{\textbf{W}}]^{-1}\Sigma_{\textbf{W}}\right)+\text{tr}\left(\Sigma^{-1}_{\textbf{W}}\Sigma^*_{\textbf{W}}\right)-2k\right)\\
&\geqslant  \dfrac{d}{2\|\Sigma^*_{\textbf{W}}\|_{\text{op}}\|\Sigma_{\textbf{W}}\|_{\text{op}}}\|\Sigma_{\textbf{W}}-\Sigma^*_{\textbf{W}}\|^2\quad\text{(by \autoref{florians_lemma})}\\
&\geqslant  \dfrac{dc_3}{\|\Sigma^*_{\textbf{W}}\|^2 e^{1+2(r_0/d)}}\|\Sigma_{\textbf{W}}-\Sigma^*_{\textbf{W}}\|^2\quad\text{(by \autoref{psd_norm_lemma} and \autoref{control_lemma})}
\end{align*}
The second part of the Lemma follows analogously. 
\end{proof}
\begin{Lemma}\label{subset_inclusion}
For every $r_0>0$, the following inclusions hold
$$\mathcal{B}^*_{\textbf{W}}\left(r_0\right)\subseteq \left\{(\mu_{\textbf{W}},\Sigma_{\textbf{W}}): \|\Sigma_{\textbf{W}}\|_{\text{op}} \leqslant \|\Sigma^*_{\textbf{W}}\|e^{1+2(r_0/d)},\ \|\mu_\textbf{W}-\mu^*_{\textbf{W}}\|^2\leqslant 2r_0 e^{(1+2(r_0/d))}\|\Sigma^*_{\textbf{W}}\|_{\text{op}}\right\},$$
$$\mathcal{B}^*_{\textbf{Z}}\left(r_0\right)\subseteq \left\{(\mu_{\textbf{Z}},\Sigma_{\textbf{Z}}): \|\Sigma_{\textbf{Z}}\|_{\text{op}} \leqslant \|\Sigma^*_{\textbf{Z}}\|e^{1+2(r_0/n)},\ \|\mu_\textbf{Z}-\mu^*_{\textbf{Z}}\|^2\leqslant 2r_0 e^{(1+2(r_0/n))}\|\Sigma^*_{\textbf{Z}}\|_{\text{op}}\right\}.$$
\end{Lemma}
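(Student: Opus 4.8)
The plan is to fix an arbitrary $q_{\textbf{W}}\in\mathcal{B}^*_{\textbf{W}}(r_0)$, that is, a pair with $\text{KL}(q^*_{\textbf{W}}\|q_{\textbf{W}})\leqslant r_0$, and to read the matrix-normal KL formula \eqref{matrix_kl_def} in the \emph{reverse} direction so that the covariance of the second argument, $\Sigma_{\textbf{W}}$, appears. Both summands of $\text{KL}(q^*_{\textbf{W}}\|q_{\textbf{W}})$ are non-negative, so the hypothesis $\text{KL}(q^*_{\textbf{W}}\|q_{\textbf{W}})\leqslant r_0$ splits into the two independent constraints $\mathcal{J}(\Sigma^*_{\textbf{W}},\Sigma_{\textbf{W}})\leqslant 2r_0$ and $(\text{vec}(\mu_{\textbf{W}})-\text{vec}(\mu^*_{\textbf{W}}))'(I_d\otimes\Sigma_{\textbf{W}})^{-1}(\text{vec}(\mu_{\textbf{W}})-\text{vec}(\mu^*_{\textbf{W}}))\leqslant 2r_0$. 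I would treat the covariance constraint first and then feed its conclusion into the mean constraint.

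For the covariance bound, note that $\Sigma^{-1}_{\textbf{W}}\Sigma^*_{\textbf{W}}$ is similar to the symmetric positive-definite matrix $\Sigma^{-1/2}_{\textbf{W}}\Sigma^*_{\textbf{W}}\Sigma^{-1/2}_{\textbf{W}}$, so its eigenvalues $\theta_1,\dots,\theta_k$ are strictly positive and $\mathcal{J}(\Sigma^*_{\textbf{W}},\Sigma_{\textbf{W}})=d\sum_{i=1}^k(\theta_i-1-\log\theta_i)$. Since each term $g(\theta_i):=\theta_i-1-\log\theta_i$ is non-negative, the constraint forces $g(\theta_i)\leqslant 2r_0/d$ for every $i$, in particular for $\theta_{\min}$. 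The scalar inequality $\theta-1\geqslant -1$ then gives $-\log\theta_{\min}\leqslant 1+2r_0/d$, i.e.\ $\theta_{\min}\geqslant e^{-(1+2r_0/d)}$. Because $\Sigma^{-1/2}_{\textbf{W}}\Sigma^*_{\textbf{W}}\Sigma^{-1/2}_{\textbf{W}}\succeq\theta_{\min}I$ yields $\Sigma_{\textbf{W}}\preceq\theta_{\min}^{-1}\Sigma^*_{\textbf{W}}$, we conclude $\|\Sigma_{\textbf{W}}\|_{\text{op}}\leqslant\theta_{\min}^{-1}\|\Sigma^*_{\textbf{W}}\|_{\text{op}}\leqslant e^{1+2r_0/d}\|\Sigma^*_{\textbf{W}}\|$, the last inequality using $\|\cdot\|_{\text{op}}\leqslant\|\cdot\|$.

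For the mean bound, the smallest eigenvalue of $(I_d\otimes\Sigma_{\textbf{W}})^{-1}$ is $\|\Sigma_{\textbf{W}}\|_{\text{op}}^{-1}$, and vectorization is an isometry, so the Mahalanobis constraint gives $\|\mu_{\textbf{W}}-\mu^*_{\textbf{W}}\|^2\leqslant 2r_0\|\Sigma_{\textbf{W}}\|_{\text{op}}$. Substituting the covariance estimate just obtained produces $\|\mu_{\textbf{W}}-\mu^*_{\textbf{W}}\|^2\leqslant 2r_0 e^{1+2r_0/d}\|\Sigma^*_{\textbf{W}}\|_{\text{op}}$, which is the second half of the $\textbf{W}$ inclusion. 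The $\textbf{Z}$ inclusion is obtained verbatim after replacing $d$ by $n$ and $\Sigma^*_{\textbf{W}}$ by $\Sigma^*_{\textbf{Z}}$.

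The step I expect to be the crux is the scalar analysis of $g(\theta)=\theta-1-\log\theta$: one must first justify the per-eigenvalue inequality (which rests on the positivity of the $\theta_i$, hence on the similarity-to-symmetric argument and the non-negativity of each $g(\theta_i)$) and then turn a bound on $g(\theta_{\min})$ into the clean exponential lower bound $\theta_{\min}\geqslant e^{-(1+2r_0/d)}$ via the elementary estimate $\theta-1\geqslant -1$. The remaining ingredients — the semidefinite domination $\Sigma_{\textbf{W}}\preceq\theta_{\min}^{-1}\Sigma^*_{\textbf{W}}$ and the comparison of operator and Frobenius norms — are routine.
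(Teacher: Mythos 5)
Your proposal is correct and follows essentially the same route as the paper: split $\text{KL}(q^*_{\textbf{W}}\|q_{\textbf{W}})$ into its non-negative trace/log-det and Mahalanobis parts, bound the eigenvalues of the relative covariance below by $e^{-(1+2r_0/d)}$, convert that into the operator-norm bound on $\Sigma_{\textbf{W}}$, and feed it into the mean constraint. The only (harmless) differences are that you obtain $\theta_{\min}\geqslant e^{-(1+2r_0/d)}$ directly from $\theta-1-\log\theta\leqslant 2r_0/d$ and $\theta>0$ rather than via the paper's \autoref{psi_bound}, and you pass from the eigenvalue bound to $\|\Sigma_{\textbf{W}}\|_{\text{op}}\leqslant\theta_{\min}^{-1}\|\Sigma^*_{\textbf{W}}\|_{\text{op}}$ by the semidefinite domination $\Sigma_{\textbf{W}}\preceq\theta_{\min}^{-1}\Sigma^*_{\textbf{W}}$ rather than via \autoref{minimax_lemma}; both substitutions are valid and give the identical constants.
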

\begin{proof}
We only prove the first inclusion. Recall the expression for KL divergence for matrix normal distributions in \eqref{matrix_kl_def}. The term $\mathcal{J}(\Sigma_{\textbf{W}},\Sigma^*_{\textbf{W}})$ can be re-written in the following way (see \cite{bhattacharya2023convergence}):
$$\text{KL}\left(q^*_{\textbf{W}}\|q_{\textbf{W}}\right)=\dfrac{d}{2}\sum\limits_{j=1}^k \psi\left(\lambda_j(\Delta)\right) + \dfrac{1}{2}\left(\text{vec}(\mu_{\textbf{W}})-\text{vec}(\mu^*_{\textbf{W}})\right)'(I_d\otimes \Sigma_{\textbf{W}})^{-1}\left(\text{vec}(\mu_{\textbf{W}})-\text{vec}(\mu^*_{\textbf{W}})\right),$$
where $\psi(x) = x-\log x-1$, $\Delta = \Sigma_{\textbf{W}}^{-1/2}\Sigma^*_{\textbf{W}}\Sigma_{\textbf{W}}^{-1/2}$ and $\{\lambda_{j}(\Delta)\}_{j=1}^d$ denote the eigenvalues in the sorted order. Therefore, $\text{KL}\left(q^*_{\textbf{W}}\|q_{\textbf{W}}\right)\leqslant r_0$ implies
$$\psi\left(\lambda_j(\Delta)\right)\leqslant 2r_0/d \quad\text{for }j=1,\ldots,d,$$
$$\left(\text{vec}(\mu_{\textbf{W}})-\text{vec}(\mu^*_{\textbf{W}})\right)'(I_d\otimes \Sigma_{\textbf{W}})^{-1}\left(\text{vec}(\mu_{\textbf{W}})-\text{vec}(\mu^*_{\textbf{W}})\right)\leqslant 2r_0.$$
We have $\psi(x)\geqslant 0$ for every $x\in(0,\infty)$ with equality if and only if $x=1$. Since $\lim_{x\to\infty}\psi(x)=\lim_{x\to 0}\psi(x)=\infty$, from \autoref{psi_bound} it follows that for any $c>0$, there exists an interval $[\ell_c,u_c]$ containing $1$ such that $\psi(x)\leqslant c$ if and only if $x\in [\ell_c, u_c]$. Using the same lemma, we also have that $\ell_c\in (e^{-(1+c)},e^{-c}).$ Thus $\psi\left(\lambda_j(\Delta)\right)\leqslant 2r_0/d$ implies $\lambda_j(\Delta)\geqslant e^{-(1+2(r_0/d))}$ for all $j$. Note that $\lambda_j(\Delta)=\lambda_j(\Sigma^{-1}_{\textbf{W}}\Sigma_{\textbf{W}^{*}})$ holds as $\textbf{A}\textbf{B}$ and $\textbf{B}\textbf{A}$ have same set of positive eigen values. Now by \autoref{minimax_lemma} we have that
$$e^{-(1+2(r_0/d))} \leqslant \lambda_{\text{min}}(\Sigma^{-1}_{\textbf{W}}\Sigma_{\textbf{W}^{*}})\leqslant \lambda_{\text{min}}(\Sigma^{-1}_{\textbf{W}})\lambda_{\text{max}}(\Sigma^*_{\textbf{W}}).$$
Also, the other condition implies
$$\lambda_{\text{min}}(\Sigma^{-1}_{\textbf{W}})\|\mu_{\textbf{W}}-\mu^*_{\textbf{W}}\|^2\leqslant 2r_0\implies \|\mu_{\textbf{W}}-\mu^*_{\textbf{W}}\|^2\leqslant 2r_0 e^{(1+2(r_0/d))}\lambda_{\text{max}}(\Sigma^*_{\textbf{W}}).$$
\begin{align*}
\|\Sigma_\textbf{W}\|_{\text{op}}&=\lambda_{\text{max}}\left(\Sigma_{\textbf{W}}\right)\\
&= \dfrac{1}{\lambda_{\text{min}}\left(\Sigma^{-1}_{\textbf{W}}\right)}\\
&\leqslant  \|\Sigma^*_{\textbf{W}}\|_{\text{op}}e^{1+2(r_0/d)}\\
&\leqslant \|\Sigma^*_{\textbf{W}}\|e^{1+2(r_0/d)}\quad\text{(using \autoref{psd_norm_lemma})}.
\end{align*}
The upper bounds on $\|\Sigma_{\textbf{Z}}\|_{\text{op}}$ and $\|\mu_{\textbf{Z}}-\mu^*_{\textbf{Z}}\|^2$ follow analogously.
\end{proof}
\begin{Lemma}\label{control_lemma}
Let $r_0>0$ be given. For any $(q_{\textbf{W}},q_{\textbf{Z}})\in \mathcal{B}^*_{\textbf{W}}(r_0)\times \mathcal{B}^*_{\textbf{Z}}(r_0)$, we have the following inequalities
$$\|\mu_{\textbf{W}}\|\leqslant \|\mu^*_{\textbf{W}}\|+\|\Sigma_{\textbf{W}}\|^{1/2}_{\text{op}}G_1(r_0) \text{ and }\|\mu_{\textbf{Z}}\|\leqslant \|\mu^*_{\textbf{Z}}\|+\|\Sigma_{\textbf{Z}}\|^{1/2}_{\text{op}}G_2(r_0),$$
where $G_1,G_2$ are strictly positive functions such that $\lim\limits_{r\to 0+}G_1(r)=\lim\limits_{r\to 0+}G_2(r)=0$.
\end{Lemma}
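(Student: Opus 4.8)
The plan is to exploit the membership $q_{\textbf{W}}\in\mathcal{B}^*_{\textbf{W}}(r_0)$, which by definition means $\text{KL}(q^*_{\textbf{W}}\|q_{\textbf{W}})\leqslant r_0$, and to extract from this single KL bound a control on $\|\mu_{\textbf{W}}-\mu^*_{\textbf{W}}\|$ scaled by $\|\Sigma_{\textbf{W}}\|^{1/2}_{\text{op}}$. The key structural observation is that the KL divergence between two matrix normal distributions splits, as in \eqref{matrix_kl_def}, into a pure covariance part $\tfrac12\mathcal{J}(\Sigma^*_\textbf{W},\Sigma_\textbf{W})$ and a mean-discrepancy quadratic form, and that in the direction $\text{KL}(q^*_{\textbf{W}}\|q_{\textbf{W}})$ the covariance appearing in the quadratic form is that of $q_{\textbf{W}}$, namely $I_d\otimes\Sigma_{\textbf{W}}$.

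First I would recall, as in the proof of \autoref{subset_inclusion}, that $\mathcal{J}(\Sigma^*_\textbf{W},\Sigma_\textbf{W})=d\sum_{j}\psi\big(\lambda_j(\Sigma_{\textbf{W}}^{-1}\Sigma^*_{\textbf{W}})\big)$ with $\psi(x)=x-\log x-1\geqslant 0$, so the covariance part is non-negative and may be discarded. This yields
$$r_0\geqslant \text{KL}(q^*_{\textbf{W}}\|q_{\textbf{W}})\geqslant \tfrac12\big(\text{vec}(\mu_{\textbf{W}})-\text{vec}(\mu^*_{\textbf{W}})\big)'(I_d\otimes\Sigma_{\textbf{W}})^{-1}\big(\text{vec}(\mu_{\textbf{W}})-\text{vec}(\mu^*_{\textbf{W}})\big).$$
Next I would lower-bound this quadratic form by $\lambda_{\min}\big((I_d\otimes\Sigma_{\textbf{W}})^{-1}\big)\,\|\mu_{\textbf{W}}-\mu^*_{\textbf{W}}\|^2$, using that $\|\text{vec}(\cdot)\|_2$ equals the Frobenius norm $\|\cdot\|$, and then use that the eigenvalues of $I_d\otimes\Sigma_{\textbf{W}}$ are exactly those of $\Sigma_{\textbf{W}}$ (each with multiplicity $d$), so that $\lambda_{\min}\big((I_d\otimes\Sigma_{\textbf{W}})^{-1}\big)=1/\lambda_{\max}(\Sigma_{\textbf{W}})=1/\|\Sigma_{\textbf{W}}\|_{\text{op}}$.

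Combining these gives $\|\mu_{\textbf{W}}-\mu^*_{\textbf{W}}\|^2\leqslant 2r_0\,\|\Sigma_{\textbf{W}}\|_{\text{op}}$, whence by the triangle inequality $\|\mu_{\textbf{W}}\|\leqslant\|\mu^*_{\textbf{W}}\|+\sqrt{2r_0}\,\|\Sigma_{\textbf{W}}\|^{1/2}_{\text{op}}$. Taking $G_1(r_0)=G_2(r_0)=\sqrt{2r_0}$ produces strictly positive functions that vanish as $r_0\to 0^+$, as required, and the bound for $\textbf{Z}$ follows by the identical argument with $d$ replaced by $n$ and $\Sigma_{\textbf{W}}$ by $\Sigma_{\textbf{Z}}$. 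There is no serious obstacle here; the only point requiring care is that the estimate must be expressed through $\|\Sigma_{\textbf{W}}\|_{\text{op}}$ rather than $\|\Sigma^*_{\textbf{W}}\|_{\text{op}}$. This is exactly why the asymmetric direction $\text{KL}(q^*_{\textbf{W}}\|q_{\textbf{W}})$ (and not its reverse) must be used: it places $\Sigma_{\textbf{W}}$, the covariance of the variable point, inside the mean-discrepancy quadratic form, so that dropping the non-negative covariance term introduces no dependence on $\Sigma^*_{\textbf{W}}$.
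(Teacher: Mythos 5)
Your proof is correct and follows essentially the same route as the paper: the paper's proof of \autoref{control_lemma} is just the triangle inequality combined with \autoref{subset_inclusion}, whose own proof rests on exactly the decomposition of $\text{KL}(q^*_{\textbf{W}}\|q_{\textbf{W}})$ into the non-negative $\mathcal{J}$-term plus the mean quadratic form weighted by $(I_d\otimes\Sigma_{\textbf{W}})^{-1}$, followed by the eigenvalue bound $\lambda_{\min}((I_d\otimes\Sigma_{\textbf{W}})^{-1})=1/\|\Sigma_{\textbf{W}}\|_{\text{op}}$ that you use. Your choice $G_1(r_0)=G_2(r_0)=\sqrt{2r_0}$ is a slightly sharper constant than the paper's $G_1(r)=\sqrt{2r\,e^{1+2r/d}}$ (you stop at the intermediate bound in $\|\Sigma_{\textbf{W}}\|_{\text{op}}$ rather than converting to $\|\Sigma^*_{\textbf{W}}\|_{\text{op}}$ and back), and it is equally valid since the lemma only requires $G_1,G_2>0$ with $G_1(r),G_2(r)\to 0$ as $r\to 0^+$.
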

\begin{proof}
 The claims follow from \autoref{subset_inclusion} and triangle inequality:
 $$\|\mu_{\textbf{W}}\|\leqslant \|\mu^*_{\textbf{W}}\|+\|\mu_{\textbf{W}}-\mu^*_{\textbf{W}}\|\leqslant \|\mu^*_{\textbf{W}}\|+\|\Sigma_{\textbf{W}}\|^{1/2}_{\text{op}}G_1(r_0),$$
$$\|\mu_{\textbf{Z}}\|\leqslant \|\mu^*_{\textbf{Z}}\|+\|\mu_{\textbf{Z}}-\mu^*_{\textbf{Z}}\|\leqslant \|\mu^*_{\textbf{Z}}\|+\|\Sigma_{\textbf{Z}}\|^{1/2}_{\text{op}}G_2(r_0),$$
with $G_1(r)= \sqrt{ 2r e^{(1+2(r/d))}}$ and $G_2(r)= \sqrt{ 2r e^{(1+2(r/n))}}.$
 \end{proof}
\begin{Lemma}\label{minimax_lemma}
Let $\textbf{A}$ and $\textbf{B}$ are two $k\times k$ symmetric positive definite matrices. We have that 
$$\lambda_{\text{min}}(\textbf{A}\textbf{B})\leqslant \lambda_{\text{min}}(\textbf{A})\lambda_{\text{max}}(\textbf{B}).$$
\end{Lemma}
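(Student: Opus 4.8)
The plan is to reduce the problem to a symmetric one and then to exhibit a single well-chosen test vector in a Rayleigh quotient. First I would observe that although $\textbf{A}\textbf{B}$ is not symmetric in general, it is similar to the symmetric positive definite matrix $\textbf{A}^{1/2}\textbf{B}\textbf{A}^{1/2}$ via the identity $\textbf{A}^{-1/2}(\textbf{A}\textbf{B})\textbf{A}^{1/2}=\textbf{A}^{1/2}\textbf{B}\textbf{A}^{1/2}$, where $\textbf{A}^{1/2}$ denotes the symmetric positive definite square root of $\textbf{A}$. In particular all eigenvalues of $\textbf{A}\textbf{B}$ are real and positive, and
$$\lambda_{\text{min}}(\textbf{A}\textbf{B})=\lambda_{\text{min}}\left(\textbf{A}^{1/2}\textbf{B}\textbf{A}^{1/2}\right),$$
so it suffices to bound the smallest eigenvalue of the symmetric matrix on the right.

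Next I would invoke the Courant--Fischer variational characterization of the smallest eigenvalue of a symmetric matrix, namely $\lambda_{\text{min}}(\textbf{A}^{1/2}\textbf{B}\textbf{A}^{1/2})=\min_{x\neq 0}(x'\textbf{A}^{1/2}\textbf{B}\textbf{A}^{1/2}x)/(x'x)$. Since this is a minimum, any fixed choice of $x$ produces an upper bound. The decisive step is to choose $x$ cleverly: let $v$ be a unit eigenvector of $\textbf{A}$ associated with $\lambda_{\text{min}}(\textbf{A})$, and take $x=\textbf{A}^{-1/2}v=\lambda_{\text{min}}(\textbf{A})^{-1/2}v$. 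With this choice one has $\textbf{A}^{1/2}x=v$, so the numerator equals $v'\textbf{B}v\leqslant\lambda_{\text{max}}(\textbf{B})$, while the denominator is $x'x=\lambda_{\text{min}}(\textbf{A})^{-1}$.

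Combining these two facts then yields
$$\lambda_{\text{min}}(\textbf{A}\textbf{B})\leqslant\frac{v'\textbf{B}v}{\lambda_{\text{min}}(\textbf{A})^{-1}}\leqslant\lambda_{\text{min}}(\textbf{A})\,\lambda_{\text{max}}(\textbf{B}),$$
which is exactly the claimed inequality. The only subtlety, and the step I would be most careful about, is the choice of the test vector. A naive estimate such as $x'\textbf{A}^{1/2}\textbf{B}\textbf{A}^{1/2}x\leqslant\lambda_{\text{max}}(\textbf{B})\,x'\textbf{A}x$ followed by $x'\textbf{A}x\leqslant\lambda_{\text{max}}(\textbf{A})\,x'x$ would only give the weaker bound $\lambda_{\text{max}}(\textbf{A})\lambda_{\text{max}}(\textbf{B})$. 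Aligning the test vector with the \emph{minimal} eigendirection of $\textbf{A}$ is precisely what converts the spurious $\lambda_{\text{max}}(\textbf{A})$ into the desired $\lambda_{\text{min}}(\textbf{A})$, and this is the heart of the argument.
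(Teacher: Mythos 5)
Your proof is correct and follows essentially the same route as the paper's: symmetrize the product, apply the Courant--Fischer characterization of $\lambda_{\text{min}}$, and plug in a test vector aligned with the minimal eigendirection of $\textbf{A}$. The only difference is that you conjugate to $\textbf{A}^{1/2}\textbf{B}\textbf{A}^{1/2}$ while the paper uses $\textbf{B}^{1/2}\textbf{A}\textbf{B}^{1/2}$ with the test subspace $\mathrm{span}(\textbf{B}^{-1/2}v_1)$; your choice makes the test vector a scalar multiple of $v$ and renders the computation marginally cleaner, but the underlying idea is identical.
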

\begin{proof}
Note that the matrices $\mathbf{B}^{1/2},\mathbf{B}^{1/2}\mathbf{A}\mathbf{B}^{1/2}$ are symmetric. By Courant-Fischer-Weyl minimax principle we have that
\begin{align*}
\lambda_{\text{min}}(\mathbf{A}\mathbf{B}) 
&= \lambda_{\text{min}}(\mathbf{B}^{1/2}\mathbf{A}\mathbf{B}^{1/2}) \\
&= \min\limits_{\substack{\mathcal{F}\subset \mathbb{R}^k\\\text{dim}(\mathcal{F})=1}}
\left(\max_{x\in\mathcal{F}\setminus\{0\}}
\dfrac{\langle\textbf{B}^{1/2}\textbf{A}\textbf{B}^{1/2}x,x\rangle}{\langle x,x \rangle}\right) \\
&= \min\limits_{\substack{\mathcal{F}\subset \mathbb{R}^k\\\text{dim}(\mathcal{F})=1}}
\left(\max_{x\in\mathcal{F}\setminus\{0\}}
\dfrac{\langle\textbf{B}^{1/2}\textbf{A}\textbf{B}^{1/2}x,x\rangle}{\langle \textbf{B}^{1/2}x,\textbf{B}^{1/2}x \rangle}
\dfrac{\langle\textbf{B}x,x\rangle}{\langle x,x \rangle}\right).
\end{align*}

From above we readily obtain
$$\lambda_{\text{min}}(\mathbf{A}\mathbf{B})\leqslant \lambda_{\text{max}}(\textbf{B})\min\limits_{\substack{\mathcal{F}\subset \mathbb{R}^k\\\text{dim}(\mathcal{F})=1}}\left(\max_{x\in\mathcal{F}\setminus\{0\}}\dfrac{\langle\textbf{B}^{1/2}\textbf{A}\textbf{B}^{1/2}x,x\rangle}{\langle \textbf{B}^{1/2}x,\textbf{B}^{1/2}x \rangle}\right).$$
Let $\mathcal{F}=\text{span}(\textbf{B}^{-1/2}v_1)$ where $v_1$ is an unit norm eigen vector of \textbf{A} which corresponds to $\lambda_{\text{min}}(\textbf{A})$. Evaluating on this subspace yields
\begin{align*}
\lambda_{\text{min}}(\mathbf{A}\mathbf{B})&\leqslant\lambda_{\text{max}}(\textbf{B})\left(\max_{\ell\in\mathbb{R}\setminus\{0\}}\dfrac{\langle\textbf{B}^{1/2}\textbf{A}\ell v_1,\textbf{B}^{-1/2}\ell v_1\rangle}{\langle \ell v_1,\ell v_1\rangle}\right)\\
&= \lambda_{\text{max}}(\textbf{B})\left(\dfrac{\langle\textbf{B}^{1/2}\textbf{A} v_1,\textbf{B}^{-1/2} v_1\rangle}{\langle  v_1,v_1\rangle}\right)\\
&= \lambda_{\text{max}}(\textbf{B})\lambda_{\text{min}}(\textbf{A})\left(\dfrac{\langle\textbf{B}^{1/2}v_1,\textbf{B}^{-1/2} v_1\rangle}{\langle  v_1,v_1\rangle}\right)\\
&= \lambda_{\text{min}}(\textbf{A})\lambda_{\text{max}}(\textbf{B}).
\end{align*}
\end{proof}
\begin{Lemma}\label{florians_lemma}
For two $k\times k$ symmetric positive definite matrices $\textbf{A}$ and $\textbf{B}$ we have that
$$\text{tr}\left(\textbf{A}^{-1}\textbf{B}\right)+\text{tr}\left(\textbf{B}^{-1}\textbf{A}\right)-2k\geqslant \dfrac{1}{\lambda_{\text{max}}(\textbf{A})\lambda_{\text{max}}(\textbf{B})}\|\textbf{A}-\textbf{B}\|^2.$$
\end{Lemma}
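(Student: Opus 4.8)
The plan is to collapse the left-hand side into a single trace of a manifestly nonnegative matrix and then strip off the two operator norms one at a time using elementary positive-semidefinite (PSD) trace inequalities. The starting point is an algebraic factorization. Writing $\textbf{D}=\textbf{A}-\textbf{B}$ (which is symmetric), I would verify the identity
$$\textbf{A}^{-1}\textbf{B}+\textbf{B}^{-1}\textbf{A}-2I_k=\textbf{A}^{-1}\textbf{D}\,\textbf{B}^{-1}\textbf{D}.$$
This follows by noting $\textbf{A}^{-1}\textbf{B}-I_k=-\textbf{A}^{-1}\textbf{D}$ and $\textbf{B}^{-1}\textbf{A}-I_k=\textbf{B}^{-1}\textbf{D}$, so the left-hand side equals $(\textbf{B}^{-1}-\textbf{A}^{-1})\textbf{D}$, and then using the resolvent-type identity $\textbf{B}^{-1}-\textbf{A}^{-1}=\textbf{A}^{-1}\textbf{D}\,\textbf{B}^{-1}$. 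Taking the trace of this identity shows that the left-hand side of the lemma is exactly $\text{tr}\!\left(\textbf{A}^{-1}\textbf{D}\,\textbf{B}^{-1}\textbf{D}\right)$.

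Next I would peel off the eigenvalue factors. Since $\textbf{B}$ is positive definite, $\textbf{B}^{-1}\succeq \lambda_{\text{max}}(\textbf{B})^{-1}I_k$, so the matrix $\textbf{D}\big(\textbf{B}^{-1}-\lambda_{\text{max}}(\textbf{B})^{-1}I_k\big)\textbf{D}$ is PSD (conjugation by the symmetric $\textbf{D}$ preserves the PSD ordering). Pairing it with the PSD matrix $\textbf{A}^{-1}$ and using that the trace of a product of two PSD matrices is nonnegative gives
$$\text{tr}\!\left(\textbf{A}^{-1}\textbf{D}\,\textbf{B}^{-1}\textbf{D}\right)\geqslant \frac{1}{\lambda_{\text{max}}(\textbf{B})}\,\text{tr}\!\left(\textbf{A}^{-1}\textbf{D}^2\right).$$
Repeating the argument with $\textbf{A}^{-1}\succeq \lambda_{\text{max}}(\textbf{A})^{-1}I_k$ against the PSD matrix $\textbf{D}^2$ yields $\text{tr}(\textbf{A}^{-1}\textbf{D}^2)\geqslant \lambda_{\text{max}}(\textbf{A})^{-1}\text{tr}(\textbf{D}^2)$. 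Finally, because $\textbf{D}$ is symmetric, $\text{tr}(\textbf{D}^2)=\|\textbf{A}-\textbf{B}\|^2$ in Frobenius norm, and chaining the two bounds produces exactly the claimed inequality.

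The only genuinely creative step is spotting the factorization identity; once it is in hand, the rest is routine and relies solely on the standard facts that $\mathbf{M}\succeq cI_k$ implies $\textbf{D}(\mathbf{M}-cI_k)\textbf{D}\succeq 0$ for symmetric $\textbf{D}$, and that $\text{tr}(\mathbf{P}\mathbf{Q})\geqslant 0$ whenever $\mathbf{P},\mathbf{Q}\succeq 0$ (seen via $\text{tr}(\mathbf{P}\mathbf{Q})=\text{tr}(\mathbf{P}^{1/2}\mathbf{Q}\mathbf{P}^{1/2})$). I therefore expect no real obstacle beyond correctly ordering the two ``peeling'' steps so that at each stage one factor is PSD and the other is a PSD difference.
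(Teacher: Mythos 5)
Your proposal is correct and follows essentially the same route as the paper: both reduce the left-hand side to $\text{tr}\left(\textbf{A}^{-1}\textbf{D}\,\textbf{B}^{-1}\textbf{D}\right)$ with $\textbf{D}=\textbf{A}-\textbf{B}$ via the same resolvent-type factorization, and then extract $\lambda_{\text{max}}(\textbf{A})^{-1}$ and $\lambda_{\text{max}}(\textbf{B})^{-1}$ one at a time using the nonnegativity of the trace of a product of PSD matrices (the paper's \autoref{lower_bound_on_trace_product} is exactly your peeling step). The only difference is the order in which the two factors are peeled off, which is immaterial.
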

\begin{proof}
Let us denote
$$\phi(\textbf{A},\textbf{B})= \text{tr}\left(\textbf{A}^{-1}\textbf{B}\right)+\text{tr}\left(\textbf{B}^{-1}\textbf{A}\right)-2k.$$
We have that
\begin{align*}
\phi(\textbf{A},\textbf{B}) &= \text{tr}(\textbf{A}^{-1}\textbf{B}-I_k) 
+ \text{tr}(\textbf{B}^{-1}\textbf{A}-I_k)  
= \text{tr}(\textbf{A}^{-1}(\textbf{B}-\textbf{A}))  
+ \text{tr}(\textbf{B}^{-1}(\textbf{A}-\textbf{B})) \\
&\hspace{51mm}= \text{tr}\left((\textbf{A}^{-1}-\textbf{B}^{-1})(\textbf{B}-\textbf{A})\right),
\end{align*}
but note that since
$$\textbf{A}^{-1}-\textbf{B}^{-1}= \textbf{A}^{-1}\textbf{B}\textbf{B}^{-1}-\textbf{A}^{-1}\textbf{A}\textbf{B}^{-1}=\textbf{A}^{-1}(\textbf{B}-\textbf{A})\textbf{B}^{-1},$$
we have that 
$$\phi(\textbf{A},\textbf{B})= \text{tr}\left(\textbf{A}^{-1}(\textbf{B}-\textbf{A})\textbf{B}^{-1}(\textbf{B}-\textbf{A})\right).$$
Observing that $\textbf{M}=(\textbf{B}-\textbf{A})\textbf{B}^{-1}(\textbf{B}-\textbf{A})=(\textbf{B}-\textbf{A})'\textbf{B}^{-1}(\textbf{B}-\textbf{A})$, we clearly have that \textbf{M} is a positive definite (and symmetric) matrix. Thus, we have that
\begin{align*}
\phi(\textbf{A},\textbf{B}) 
&\geqslant \dfrac{1}{\lambda_{\text{max}}(\textbf{A})}
\text{tr}\left((\textbf{B}-\textbf{A})\textbf{B}^{-1}(\textbf{B}-\textbf{A})\right) \\
&= \dfrac{1}{\lambda_{\text{max}}(\textbf{A})}
\text{tr}\left(\textbf{B}^{-1}(\textbf{B}-\textbf{A})^2\right) \\
&\geqslant \dfrac{1}{\lambda_{\text{max}}(\textbf{A})\lambda_{\text{max}}(\textbf{B})}
\text{tr}\left((\textbf{B}-\textbf{A})^2\right)\quad(\text{using \autoref{lower_bound_on_trace_product}}).
\end{align*}
The conclusion follows since $\textbf{B}-\textbf{A}$ is symmetric, and therefore $\text{tr}((\textbf{B}-\textbf{A})^2)=\text{tr}((\textbf{B}-\textbf{A})(\textbf{B}-\textbf{A})')=\|\textbf{B}-\textbf{A}\|^2$.
\end{proof}
\begin{Lemma}\label{lower_bound_on_trace_product}
For two $k\times k$ symmetric positive definite matrices \textbf{A} and \textbf{B} we have that
$$\text{tr}\left(\textbf{A}\textbf{B}\right)\geqslant \lambda_{\text{min}}\left(\textbf{A}\right)\text{tr}\left(\textbf{B}\right).$$
\end{Lemma}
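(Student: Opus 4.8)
The plan is to exploit the spectral decomposition of $\textbf{B}$ together with the variational characterization of $\lambda_{\text{min}}(\textbf{A})$. First I would write $\textbf{B}=\sum_{i=1}^k\lambda_i(\textbf{B})\,v_iv_i'$, where $\{v_i\}_{i=1}^k$ is an orthonormal basis of eigenvectors of $\textbf{B}$ and the $\lambda_i(\textbf{B})$ are its eigenvalues, which are \emph{strictly positive} because $\textbf{B}$ is positive definite. Using linearity and the cyclic property of the trace, the quantity of interest becomes
$$\text{tr}(\textbf{A}\textbf{B})=\sum_{i=1}^k\lambda_i(\textbf{B})\,\text{tr}(\textbf{A}v_iv_i')=\sum_{i=1}^k\lambda_i(\textbf{B})\,v_i'\textbf{A}v_i.$$

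Next I would bound each Rayleigh quotient from below. Since $\textbf{A}$ is symmetric, the definition of its smallest eigenvalue (equivalently, the Courant--Fischer--Weyl principle, already invoked in \autoref{minimax_lemma}) gives $v_i'\textbf{A}v_i\geqslant\lambda_{\text{min}}(\textbf{A})\|v_i\|^2=\lambda_{\text{min}}(\textbf{A})$ for every $i$, the $v_i$ being unit vectors. Because each weight $\lambda_i(\textbf{B})$ is strictly positive, I may multiply these inequalities by $\lambda_i(\textbf{B})$ and sum over $i$ without reversing the sense of the inequality, obtaining
$$\text{tr}(\textbf{A}\textbf{B})\geqslant\lambda_{\text{min}}(\textbf{A})\sum_{i=1}^k\lambda_i(\textbf{B})=\lambda_{\text{min}}(\textbf{A})\,\text{tr}(\textbf{B}),$$
where the last equality is simply the fact that the trace equals the sum of the eigenvalues. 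This completes the argument.

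An equivalent, coordinate-free route is to observe that $\textbf{A}-\lambda_{\text{min}}(\textbf{A})I_k$ is positive semidefinite by the definition of the smallest eigenvalue, and that the trace of the product of two positive semidefinite matrices is nonnegative (which follows from $\text{tr}(\textbf{C}\textbf{D})=\text{tr}(\textbf{C}^{1/2}\textbf{D}\textbf{C}^{1/2})\geqslant 0$ whenever $\textbf{C},\textbf{D}\succeq 0$); applying this with $\textbf{C}=\textbf{A}-\lambda_{\text{min}}(\textbf{A})I_k$ and $\textbf{D}=\textbf{B}$ yields $\text{tr}\big((\textbf{A}-\lambda_{\text{min}}(\textbf{A})I_k)\textbf{B}\big)\geqslant 0$, i.e.\ the claim at once. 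There is no genuine obstacle in this lemma: the only point that must not be overlooked is that the positivity of the eigenvalues of $\textbf{B}$ is precisely what licenses pulling $\lambda_{\text{min}}(\textbf{A})$ out of the weighted sum, and the inequality would fail if $\textbf{B}$ were merely symmetric with eigenvalues of mixed sign.
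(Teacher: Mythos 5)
Your proof is correct. Your primary argument --- diagonalizing $\textbf{B}$ as $\sum_{i}\lambda_i(\textbf{B})v_iv_i'$ and bounding each Rayleigh quotient $v_i'\textbf{A}v_i$ from below by $\lambda_{\text{min}}(\textbf{A})$ --- is a slightly different route from the paper's: the paper instead observes that $\textbf{C}=\textbf{A}-\lambda_{\text{min}}(\textbf{A})I_k$ is positive semidefinite, factors $\textbf{C}=\textbf{U}\textbf{U}'$ and $\textbf{B}=\textbf{V}\textbf{V}'$, and concludes $\text{tr}(\textbf{C}\textbf{B})=\text{tr}\left((\textbf{U}'\textbf{V})(\textbf{U}'\textbf{V})'\right)\geqslant 0$, which is exactly the ``coordinate-free route'' you sketch in your closing paragraph. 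The two arguments are of comparable length and strength: your eigenbasis version makes explicit where the sign of the spectrum of $\textbf{B}$ is used (and, as your computation shows, only positive semidefiniteness of $\textbf{B}$ is actually needed), while the paper's factorization version avoids choosing a basis and instead reuses the standard fact that the trace of a product of two positive semidefinite matrices is nonnegative. Either argument would serve equally well here; since you also supply the paper's argument verbatim as your alternative, there is nothing missing.
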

\begin{proof}
The matrix \(\textbf{C} = \textbf{A}-\lambda_{\text{min}}(\textbf{A})I_k\) is positive semi definite. Thus, there exist matrices \(\textbf{U}, \textbf{V}\) such that \(\textbf{C} = \textbf{U}\textbf{U}'\) and \(\textbf{B} = \textbf{V}\textbf{V}'\), and consequently,
$$\text{tr}(\textbf{C}\textbf{B}) = \text{tr}(\textbf{U}\textbf{U}'\textbf{V}\textbf{V}') = \text{tr}(\textbf{U}'\textbf{V}\textbf{V}'\textbf{U}) = \text{tr}\left((\textbf{U}'\textbf{V})(\textbf{U}'\textbf{V})'\right)\geqslant 0.$$
Therefore,
$$\text{tr}\left((\textbf{A}-\lambda_{\text{min}}(\textbf{A})I_k)\textbf{B}\right)\geqslant 0 \implies \text{tr}\left(\textbf{A}\textbf{B}\right)\geqslant \lambda_{\text{min}}\left(\textbf{A}\right)\text{tr}\left(\textbf{B}\right).$$
\end{proof}
\begin{Lemma}\label{trace_lemma}
    For any two matrices \(\textbf{A}\) and \(\textbf{B}\) compatible for multiplication such that \(\textbf{AB}\) is a $d\times d$ square matrix, we have that
    $$\text{tr}(\textbf{AB})\leqslant \|\textbf{A}\|\cdot\|\textbf{B}\|.$$
\end{Lemma}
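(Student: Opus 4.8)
The plan is to recognize the trace as a Frobenius inner product and then invoke the Cauchy--Schwarz inequality. Writing $\textbf{A}$ as a $d \times m$ matrix and $\textbf{B}$ as an $m \times d$ matrix (so that $\textbf{AB}$ is $d \times d$), I would expand the trace entrywise as $\text{tr}(\textbf{AB}) = \sum_{i,k} A_{ik} B_{ki}$. The right-hand side is exactly the Frobenius inner product $\langle \textbf{A}', \textbf{B}\rangle = \text{tr}\left((\textbf{A}')'\textbf{B}\right)$ between the two $m \times d$ matrices $\textbf{A}'$ and $\textbf{B}$, since both run over the same index pairs $(i,k)$.

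First I would apply the Cauchy--Schwarz inequality for this inner product to obtain $|\text{tr}(\textbf{AB})| \leqslant \|\textbf{A}'\| \cdot \|\textbf{B}\|$, where $\|\cdot\|$ denotes the Frobenius norm. Next, since the Frobenius norm is invariant under transposition, i.e. $\|\textbf{A}'\| = \|\textbf{A}\|$ as both equal $\left(\sum_{i,k} A_{ik}^2\right)^{1/2}$, the bound becomes $|\text{tr}(\textbf{AB})| \leqslant \|\textbf{A}\| \cdot \|\textbf{B}\|$. The claimed inequality follows at once because $\text{tr}(\textbf{AB}) \leqslant |\text{tr}(\textbf{AB})|$.

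I do not anticipate any genuine obstacle here: the statement is a one-line consequence of Cauchy--Schwarz once the trace is identified with the Frobenius inner product. The only points to keep track of are the matching of dimensions, namely which entries pair up in the sum $\sum_{i,k} A_{ik} B_{ki}$, and the fact that transposition leaves the Frobenius norm unchanged; both are entirely routine.
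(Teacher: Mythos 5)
Your proof is correct and follows essentially the same route as the paper: both identify $\text{tr}(\textbf{AB})$ with a Frobenius inner product (you pair $\textbf{A}'$ with $\textbf{B}$, the paper pairs $\textbf{A}$ with $\textbf{B}'$, which is the same sum) and conclude by Cauchy--Schwarz together with the transpose-invariance of the Frobenius norm. No gaps.
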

\begin{proof}
Note that if \textbf{A} and \textbf{B} were square matrices the result would have been a simple application of Cauchy-Schwarz inequality. Suppose the sizes of the matrices \textbf{A} and \textbf{B} are $d\times n$ and $n\times d$, respectively. Let $\textbf{A}=(A_{ij})$ and $\textbf{B}=(B_{ij})$. We have that
\begin{align*}
|\text{tr}(\textbf{AB})|&=|\langle\textbf{A},\textbf{B}'\rangle_{F}|\quad\left(\langle \cdot,\cdot\rangle_F \text{ is the Frobenious inner product}\right)\\
&\leqslant \|\textbf{A}\|\cdot\|\textbf{B}'\|\quad(\text{Cauchy-Schwarz for inner product spaces})\\
&=\|\textbf{A}\|\cdot\|\textbf{B}\|.
\end{align*}
\end{proof}
\begin{Lemma}\label{expectation_lemmas}
We have that 
$$\mathbb{E}[\text{tr}\left(\textbf{W}\textbf{Z}'\textbf{Z}\textbf{W}'\right)]= \text{tr}\left(\Gamma_{\textbf{W}}\Gamma_\textbf{Z}\right)\quad\text{and}\quad\mathbb{E}[\textbf{W}\Lambda\textbf{W}']=d\text{tr}\left(\Lambda \Sigma_{\textbf{W}}\right)+\text{tr}\left(\mu_\textbf{W}\Lambda\mu_{\textbf{W}}'\right),$$
where the independent random variables $\textbf{W}$ and $\textbf{Z}$ are such that $\textbf{W}\sim \mathcal{N}( \mu_\textbf{W},I_d\otimes \Sigma_{\textbf{W}})$ and $\textbf{Z}\sim \mathcal{N}( \mu_\textbf{Z}, I_n\otimes \Sigma_{\textbf{Z}})$.
\end{Lemma}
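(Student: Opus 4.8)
The plan is to derive both identities from the two second non-central moment formulas recorded in \autoref{lem}, namely $\mathbb{E}(\textbf{U}'\Pi_1\textbf{U})=\text{tr}(\Pi_1)\Sigma+\mu'\Pi_1\mu$ and $\mathbb{E}(\textbf{U}\Pi_2\textbf{U}')=\text{tr}(\Pi_2\Sigma)I_d+\mu\Pi_2\mu'$, specialized to $\textbf{W}$ and $\textbf{Z}$. The second identity is immediate: since $\Lambda$ is a fixed symmetric positive definite $k\times k$ matrix, applying the second moment formula to $\textbf{W}\sim\mathcal{N}(\mu_\textbf{W},I_d\otimes\Sigma_\textbf{W})$ with $\Pi_2=\Lambda$ gives $\mathbb{E}(\textbf{W}\Lambda\textbf{W}')=\text{tr}(\Lambda\Sigma_\textbf{W})I_d+\mu_\textbf{W}\Lambda\mu_\textbf{W}'$, and taking the trace of both sides (with $\text{tr}(I_d)=d$) yields $\mathbb{E}[\text{tr}(\textbf{W}\Lambda\textbf{W}')]=d\,\text{tr}(\Lambda\Sigma_\textbf{W})+\text{tr}(\mu_\textbf{W}\Lambda\mu_\textbf{W}')$, which is the claimed expression (understood with the trace, as it is used in \autoref{delta_upperbound}).

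For the first identity, the difficulty is that the inner factor $\textbf{Z}'\textbf{Z}$ is itself random, so one cannot simply set $\Pi_2=\textbf{Z}'\textbf{Z}$ in the deterministic moment formula. The remedy is to exploit the independence of $\textbf{W}$ and $\textbf{Z}$ and proceed by iterated expectation. First I would condition on $\textbf{Z}$: with the symmetric positive semidefinite $k\times k$ matrix $\textbf{Z}'\textbf{Z}$ held fixed, the second moment formula applied to $\textbf{W}$ gives $\mathbb{E}_\textbf{W}[\textbf{W}(\textbf{Z}'\textbf{Z})\textbf{W}'\mid\textbf{Z}]=\text{tr}(\textbf{Z}'\textbf{Z}\,\Sigma_\textbf{W})I_d+\mu_\textbf{W}(\textbf{Z}'\textbf{Z})\mu_\textbf{W}'$. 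Taking the trace and using cyclicity, namely $\text{tr}(\mu_\textbf{W}(\textbf{Z}'\textbf{Z})\mu_\textbf{W}')=\text{tr}(\textbf{Z}'\textbf{Z}\,\mu_\textbf{W}'\mu_\textbf{W})$, collapses this to $\text{tr}\big(\textbf{Z}'\textbf{Z}(d\Sigma_\textbf{W}+\mu_\textbf{W}'\mu_\textbf{W})\big)=\text{tr}(\textbf{Z}'\textbf{Z}\,\Gamma_\textbf{W})$.

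It then remains to take the outer expectation over $\textbf{Z}$. Pulling the trace outside the expectation gives $\text{tr}(\mathbb{E}[\textbf{Z}'\textbf{Z}]\,\Gamma_\textbf{W})$, and the first moment formula of \autoref{lem} applied to $\textbf{Z}\sim\mathcal{N}(\mu_\textbf{Z},I_n\otimes\Sigma_\textbf{Z})$ with $\Pi_1=I_n$ gives $\mathbb{E}[\textbf{Z}'\textbf{Z}]=\text{tr}(I_n)\Sigma_\textbf{Z}+\mu_\textbf{Z}'\mu_\textbf{Z}=n\Sigma_\textbf{Z}+\mu_\textbf{Z}'\mu_\textbf{Z}=\Gamma_\textbf{Z}$. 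Hence $\mathbb{E}[\text{tr}(\textbf{W}\textbf{Z}'\textbf{Z}\textbf{W}')]=\text{tr}(\Gamma_\textbf{Z}\Gamma_\textbf{W})=\text{tr}(\Gamma_\textbf{W}\Gamma_\textbf{Z})$, completing the argument.

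The only genuine subtlety, and thus the step I expect to be the main obstacle, is the legitimacy of freezing the random matrix $\textbf{Z}'\textbf{Z}$ when invoking the moment formula. This is justified by independence, so that the conditional law of $\textbf{W}$ given $\textbf{Z}$ equals its marginal law, together with Fubini--Tonelli, which applies because all integrands are, up to sign, sums of products of Gaussian moments and hence absolutely integrable. One also notes that although \autoref{lem} is stated for positive definite $\Pi$, the identity extends to any symmetric $\Pi$ (in particular to the possibly singular $\textbf{Z}'\textbf{Z}$) by linearity in $\Pi$, or by a standard continuity argument. Everything else is routine bookkeeping with the cyclic property of the trace.
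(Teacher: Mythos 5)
Your proposal is correct and follows essentially the same route as the paper: both identities are obtained from the second-moment formulas of \autoref{lem} via iterated expectation and cyclicity of the trace, the only cosmetic difference being that you condition on $\textbf{Z}$ and apply the moment formula to $\textbf{W}$, whereas the paper first rewrites $\text{tr}(\textbf{W}\textbf{Z}'\textbf{Z}\textbf{W}')=\text{tr}(\textbf{Z}\textbf{W}'\textbf{W}\textbf{Z}')$ and conditions on $\textbf{W}$. Your added remark about extending \autoref{lem} from positive definite to general symmetric $\Pi$ (needed since the inner random matrix is only guaranteed positive semidefinite) is a point the paper glosses over, and is handled correctly by your linearity/continuity argument.
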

\begin{proof}
Recall the following quantities for convenience
$$\Gamma_{\textbf{W}}=d\Sigma_{\textbf{W}}+\mu_{\textbf{W}}'\mu_{\textbf{W}}\text{  and  }\Gamma_{\textbf{Z}}=n\Sigma_{\textbf{Z}}+\mu_{\textbf{Z}}'\mu_{\textbf{Z}}.$$
We compute the above expectations using \autoref{lem} as follows:
\begin{align*}
  \mathbb{E}[\text{tr}\left(\textbf{W}\textbf{Z}'\textbf{Z}\textbf{W}'\right)]&=  \mathbb{E}[\text{tr}\left(\textbf{Z}\textbf{W}'\textbf{W}\textbf{Z}'\right)]\\
  &=\mathbb{E}\left[\text{tr}\left(\mathbb{E}\left[\textbf{Z}\textbf{W}'\textbf{W}\textbf{Z}'\right|\textbf{W}]\right)\right]\\
&=\mathbb{E}\left[\text{tr}\left(\text{tr}\left(\textbf{W}'\textbf{W}\Sigma_\textbf{Z}\right)I_n+\mu_{\textbf{Z}}\left(\textbf{W}'\textbf{W}\right)\mu_{\textbf{Z}}'\right)\right]\\
&=\mathbb{E}\left[n\text{tr}\left(\textbf{W}'\textbf{W}\Sigma_\textbf{Z}\right)+\text{tr}\left(\mu_{\textbf{Z}}\left(\textbf{W}'\textbf{W}\right)\mu_{\textbf{Z}}'\right)\right]\\
&= n\text{tr}\left(\mathbb{E}\left[\textbf{W}'\textbf{W}\right]\Sigma_\textbf{Z}\right)+\text{tr}\left(\mu_{\textbf{Z}}\mathbb{E}\left[\textbf{W}'\textbf{W}\right]\mu_{\textbf{Z}}'\right)\\
&= n\text{tr}\left(\Gamma_{\textbf{W}}\Sigma_\textbf{Z}\right)+\text{tr}\left(\mu_{\textbf{Z}}\Gamma_{\textbf{W}}\mu_{\textbf{Z}}'\right)\\
&=n\text{tr}\left(\Gamma_{\textbf{W}}\Sigma_\textbf{Z}\right)+\text{tr}\left(\Gamma_{\textbf{W}}\mu_{\textbf{Z}}'\mu_{\textbf{Z}}\right)\\
&= \text{tr}\left(\Gamma_W\left(n\Sigma_\textbf{Z}+\mu_{\textbf{Z}}'\mu_\textbf{Z}\right)\right)\\
&= \text{tr}\left(\Gamma_\textbf{W}\Gamma_{\textbf{Z}}\right).
\end{align*}
\begin{align*}
  \mathbb{E}[\text{tr}(\textbf{W}\Lambda\textbf{W}')]&= \text{tr}\left(\mathbb{E}\left[\textbf{W}\Lambda\textbf{W}'\right]\right)\\
  &=\text{tr}\left[\text{tr}\left(\Lambda \Sigma_{\textbf{W}}\right)I_d+\mu_\textbf{W}\Lambda\mu_{\textbf{W}}'\right]\\
   &=d\text{tr}\left(\Lambda \Sigma_{\textbf{W}}\right)+\text{tr}\left(\mu_\textbf{W}\Lambda\mu_{\textbf{W}}'\right).
\end{align*}  
\end{proof}
\begin{Lemma}\label{algeraic_lemma}
 For positive real numbers $a_1,a_2,a_3,a_4$ and $b_1,b_2,b_3,b_4$, we have the following inequality
 $$\dfrac{a_1+a_2+a_3+a_4}{b_1+b_2+b_3+b_4}\leqslant \max\left\{\dfrac{a_1}{b_1},\dfrac{a_2}{b_2},\dfrac{a_3}{b_3},\dfrac{a_4}{b_4}\right\}.$$
\end{Lemma}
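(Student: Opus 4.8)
The plan is to prove this by the elementary ``mediant'' argument: bound each numerator term by the largest ratio times its own denominator term, then sum. First I would set $M = \max\{a_1/b_1, a_2/b_2, a_3/b_3, a_4/b_4\}$, which is well defined precisely because all the $b_i$ are strictly positive. By the definition of the maximum, for each index $i \in \{1,2,3,4\}$ we have $a_i/b_i \leqslant M$; multiplying through by $b_i > 0$ (which preserves the direction of the inequality) gives $a_i \leqslant M b_i$.

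Next I would simply add these four inequalities to obtain $a_1 + a_2 + a_3 + a_4 \leqslant M(b_1 + b_2 + b_3 + b_4)$. Since $b_1 + b_2 + b_3 + b_4$ is a sum of strictly positive numbers, it is itself strictly positive, so dividing both sides by it again preserves the inequality and yields exactly $\frac{a_1+a_2+a_3+a_4}{b_1+b_2+b_3+b_4} \leqslant M$, which is the claim.

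There is no genuine obstacle here: the whole argument rests only on the positivity of the $b_i$, which is what legitimizes the sign-preserving multiplication and division. The single point deserving minor care is to invoke this positivity explicitly (both for each $b_i$ and for their sum) so that no inequality is inadvertently reversed. I would also note that the same reasoning applies verbatim to any finite collection $a_1,\ldots,a_m$ and $b_1,\ldots,b_m$ of positive reals, giving the general weighted-mediant bound; only the case $m=4$ is required in the application to controlling $\rho_0$ within the proof of \autoref{main_theorem_bpca}.
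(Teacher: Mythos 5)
Your proof is correct. It is, however, a genuinely different argument from the one in the paper: you give the direct ``mediant'' proof, setting $M=\max_i a_i/b_i$, deducing $a_i\leqslant M b_i$ from positivity of each $b_i$, summing, and dividing by the (positive) sum of the $b_i$. The paper instead first establishes the two-term case $\frac{a_1'+a_2'}{b_1'+b_2'}\leqslant\max\{a_1'/b_1',a_2'/b_2'\}$ by a case split on whether $b_1'/b_2'\leqslant a_1'/a_2'$, and then obtains the four-term statement by applying this binary inequality twice, once to the pairs $(a_1+a_2,\,b_1+b_2)$ and $(a_3+a_4,\,b_3+b_4)$ and once within each pair. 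Your route is shorter, avoids the case analysis entirely, and --- as you note --- extends verbatim to any finite number of terms, whereas the paper's recursive grouping would need an induction to reach general $m$. The paper's approach isolates the two-term inequality as the basic building block, which is a matter of exposition rather than of substance; for the application (bounding $\rho_0$ in the proof of \autoref{main_theorem_bpca}) only $m=4$ is needed, so both proofs serve equally well.
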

\begin{proof}
    We first prove the following inequality for positive real numbers $a_1',a_2',b_1',b_2'$ 
    $$\dfrac{a_1'+a_2'}{b_1'+b_2'}\leqslant \max\left\{\dfrac{a_1'}{b_1'},\dfrac{a_2'}{b_2'}\right\}.$$
There are two possible scenarios: either 1) $b_1'/b_2' \leqslant a_1'/a_2'$, or 2) $b_1'/b_2' > a_1'/a_2'$. It can be checked for the first case $(a_1'+a_2')/(b_1'+b_2')\leqslant a_1'/b_1'$, and for the second case $(a_1'+a_2')/(b_1'+b_2')< a_2'/b_2'$ hold.
Let, $a_1' =a_1+a_2 ,a_2' = a_3+a_4$ and $b_1'= b_1+b_2,b_2' =b_3+b_4$. We have the left hand side is equal to
\begin{align*}
\dfrac{a_1'+a_2'}{b_1'+b_2'}&\leqslant\max\left\{\dfrac{a_1'}{b_1'},\dfrac{a_2'}{b_2'}\right\}\\
&= \max\left\{\dfrac{a_1+a_2}{b_1+b_2},\dfrac{a_3+a_4}{b_3+b_4}\right\}\\
&\leqslant \max\left\{\max\left\{\dfrac{a_1}{b_1},\dfrac{a_2}{b_2}\right\},\max\left\{\dfrac{a_3}{b_3},\dfrac{a_4}{b_4}\right\}\right\}\\
&= \max\left\{\dfrac{a_1}{b_1},\dfrac{a_2}{b_2},\dfrac{a_3}{b_3},\dfrac{a_4}{b_4}\right\}.
\end{align*}
\end{proof}
\begin{Lemma}\label{psi_bound}
Let $\psi:(0,\infty)\to (0,\infty)$ be the function defined by $\psi(x)=x-1-\log x$. Let $c>0$ be given, there exist real numbers $\ell_c, u_c$ such that $\ell_c<1<u_c$ and $[\ell_c,u_c]=\{x:\psi(x)\leqslant c\}$. Additionally,  $e^{-(1+c)}\leqslant\ell_c \leqslant e^{-c}.$ 
\end{Lemma}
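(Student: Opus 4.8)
The plan is to exploit the unimodal (convex) shape of $\psi$. First I would compute $\psi'(x) = 1 - 1/x$, which shows that $\psi$ is strictly decreasing on $(0,1)$, attains its global minimum $\psi(1) = 0$ at $x = 1$, and is strictly increasing on $(1,\infty)$. Since $-\log x \to +\infty$ as $x \to 0^+$ and $x \to +\infty$ as $x \to \infty$, both branches of $\psi$ diverge to $+\infty$ at the endpoints of $(0,\infty)$.

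Given $c > 0$, on the branch $(0,1)$ the function $\psi$ is continuous and strictly decreasing from $+\infty$ down to $0$, so by the intermediate value theorem there is a unique $\ell_c \in (0,1)$ with $\psi(\ell_c) = c$. Symmetrically, on $(1,\infty)$ there is a unique $u_c \in (1,\infty)$ with $\psi(u_c) = c$. Strict monotonicity on each side then gives $\psi(x) \leqslant c$ precisely when $\ell_c \leqslant x \leqslant u_c$, which establishes $[\ell_c, u_c] = \{x : \psi(x) \leqslant c\}$ together with $\ell_c < 1 < u_c$.

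For the quantitative bounds on $\ell_c$, I would rewrite the defining equation $\psi(\ell_c) = c$, namely $\ell_c - 1 - \log \ell_c = c$, as the self-consistent identity $\ell_c = e^{\ell_c - 1 - c}$. Because $\ell_c \in (0,1)$, the exponent $\ell_c - 1 - c$ lies strictly between $-(1+c)$ and $-c$; applying the increasing map $x \mapsto e^x$ then yields $e^{-(1+c)} < \ell_c < e^{-c}$, which gives the claimed inequalities $e^{-(1+c)} \leqslant \ell_c \leqslant e^{-c}$.

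The argument is entirely elementary calculus, so there is no genuine obstacle. The only step requiring a moment's thought is the passage to the fixed-point relation $\ell_c = e^{\ell_c - 1 - c}$: its usefulness stems from the fact that we already know $\ell_c \in (0,1)$, which lets the exponent be sandwiched and the bounds extracted without ever solving the transcendental equation explicitly.
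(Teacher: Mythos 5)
Your proof is correct and follows essentially the same route as the paper: monotonicity of $\psi$ on $(0,1)$ and $(1,\infty)$ plus divergence at the endpoints gives the interval characterization, and the fixed-point identity $\ell_c = e^{\ell_c - 1 - c}$ (the paper writes it as $\ell_c = e^{\ell_c}e^{-(1+c)}$) combined with $\ell_c \in (0,1)$ yields the sandwich $e^{-(1+c)} \leqslant \ell_c \leqslant e^{-c}$. No gaps.
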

\begin{proof}
The function $\psi$ decreases monotonically on $(0,1)$ and then increases monotonically on $(1,\infty)$. $\psi(t)=0$ if and only if $t=1$. Therefore, the first part of the claim follows.
Since it must be the case that $\psi(\ell_c)=c$, we have that
$$\log\left(\ell_c\right)-\ell_c=-(1+c)\implies \ell_ce^{-\ell_c}=e^{-(c+1)}\implies \ell_c = e^{\ell_c}e^{-(1+c)}\in (e^{-(1+c)},e^{-c}).$$
\end{proof}
\begin{Lemma}\label{psd_norm_lemma}
For a $d\times d$ positive definite symmetric matrix $\textbf{A}$ we have the following inequality
$$\dfrac{1}{\sqrt{d}}\|\textbf{A}\|\leqslant \|\textbf{A}\|_{\text{op}}\leqslant \|\textbf{A}\|,$$
and the lower and upper bounds cannot be improved.
\end{Lemma}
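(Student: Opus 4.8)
The plan is to reduce everything to the eigenvalues of $\textbf{A}$ via the spectral theorem. Since $\textbf{A}$ is symmetric positive definite, I would write $\textbf{A}=U\,\text{diag}(\lambda_1,\ldots,\lambda_d)\,U'$ for an orthogonal matrix $U$ and eigenvalues $\lambda_1\geqslant\cdots\geqslant\lambda_d>0$. The Frobenius norm is invariant under multiplication by orthogonal matrices, so $\|\textbf{A}\|^2=\text{tr}(\textbf{A}'\textbf{A})=\sum_{i=1}^d\lambda_i^2$, while the operator norm of a symmetric positive definite matrix is its largest eigenvalue, $\|\textbf{A}\|_{\text{op}}=\lambda_1=\lambda_{\text{max}}(\textbf{A})$. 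This turns both inequalities into elementary statements about the nonnegative numbers $\lambda_1^2,\ldots,\lambda_d^2$.

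For the upper bound, I would use that $\lambda_1^2\leqslant\sum_{i=1}^d\lambda_i^2$ because every term is nonnegative, which gives $\|\textbf{A}\|_{\text{op}}=\lambda_1\leqslant\bigl(\sum_i\lambda_i^2\bigr)^{1/2}=\|\textbf{A}\|$. For the lower bound, since $\lambda_i\leqslant\lambda_1$ for all $i$, I have $\sum_{i=1}^d\lambda_i^2\leqslant d\,\lambda_1^2$, hence $\|\textbf{A}\|=\bigl(\sum_i\lambda_i^2\bigr)^{1/2}\leqslant\sqrt{d}\,\lambda_1=\sqrt{d}\,\|\textbf{A}\|_{\text{op}}$, which rearranges to $\frac{1}{\sqrt{d}}\|\textbf{A}\|\leqslant\|\textbf{A}\|_{\text{op}}$.

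For sharpness, I would exhibit explicit matrices. The lower constant $1/\sqrt{d}$ is attained with equality by $\textbf{A}=I_d$, for which $\|\textbf{A}\|=\sqrt{d}$ and $\|\textbf{A}\|_{\text{op}}=1$, so it cannot be increased. The upper constant $1$ cannot be decreased: taking $\textbf{A}_\varepsilon=\text{diag}(1,\varepsilon,\ldots,\varepsilon)$, which is positive definite for $\varepsilon>0$, one has $\|\textbf{A}_\varepsilon\|_{\text{op}}=1$ and $\|\textbf{A}_\varepsilon\|=\sqrt{1+(d-1)\varepsilon^2}\to 1$ as $\varepsilon\to 0^+$, so the ratio $\|\textbf{A}_\varepsilon\|_{\text{op}}/\|\textbf{A}_\varepsilon\|\to 1$. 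There is no genuine obstacle here; the only point requiring mild care is the sharpness of the upper bound, since a strictly positive definite matrix never achieves equality exactly (equality forces a rank-one matrix), so the claim that the constant cannot be improved must be argued through the limiting family $\textbf{A}_\varepsilon$ rather than by a single extremal matrix.
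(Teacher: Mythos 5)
Your proof is correct and complete: the paper actually omits this proof entirely (marking it ``Skipped. Elementary.''), and your spectral-theorem reduction to $\|\textbf{A}\|^2=\sum_i\lambda_i^2$ and $\|\textbf{A}\|_{\text{op}}=\lambda_{\max}$ is exactly the standard argument that fills the gap. Your sharpness discussion is also careful in the right place, noting that the upper constant is only attained in the limit since equality would force a rank-one matrix, which is impossible for a positive definite matrix when $d\geqslant 2$.
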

\begin{proof}
Skipped. Elementary.
\end{proof}
\section{Numerical experiments regarding the convergence of CAVI dynamics when $k=1$}\label{numerical_k1}
We begin by simulating an original dataset $\textbf{X}$ with $n = 100$, $d = 10$, $k = 1$, and $\tau_0 = 100$. All experiments in this appendix and elsewhere were performed using a Mac M1 computer with 16 GB of memory via Jupyter Notebooks. Each row of $\textbf{X}$ is generated independently from the model in \eqref{model}, where  

\[
\textbf{X} \sim \mathcal{N}(\textbf{Z}_0\textbf{W}'_0, \tau_0^{-1}I_{100} \otimes I_{10})
\]

with $\textbf{W}_0 = (1,\ldots,1)'$ and $\textbf{Z}_0 \sim \mathcal{N}(0, I_{100} \otimes 1)$.

We observe that the normalized sequence ${\mu}^{(t)}_{\textbf{Z}}/\|{\mu}^{(t)}_{\textbf{Z}}\|$ rapidly converges to $\mu_1$, typically within the first five iterations, as suggested by \autoref{convergence_of_mu_z_fixed_update} and illustrated in \autoref{fig:convergence_direction} when we run \autoref{algo1} with $\mu_\textbf{Z}^{(0)}$ initialized randomly. Next, we investigate the convergence of the scaling components.

Once the dataset $\textbf{X}$ is prepared, we proceed with the implementation of \autoref{algo1}. Now, we initialize $\mu^{(0)}_{\textbf{Z}}= a^{(0)}\mu_1$, where the parameter $a^{(0)}$ is an arbitrary strictly positive real number. Additionally, we set $\Sigma^{(0)}_{\textbf{Z}} = b^{(0)}$, where $b^{(0)}$ is also strictly positive. The hyper-parameter is set as $\Lambda = 1$, with a tolerance level of $\varepsilon = 10^{-15}$. Our objective is to verify whether the sequence $(a^{(t)}, b^{(t)})$ converges in $\mathbb{R}^2$ as we iterate \autoref{algo1}.  

\textbf{Observation:} For all positive values of $a^{(0)}$, we observe that the sequence $a^{(t)}$ converges to the fixed value $a^* = 10.039223865837567$ after $T$ iterations, where $T$ depends on the initial values $(a^{(0)},b^{(0)})$. The corresponding sequence $b^{(t)}$ converges to $b^* = 0.0009184540276287452$ (see \autoref{fig:ab_scaling_convergence}). It can be numerically verified that the recorded value $(a^*, b^*)$ is indeed a fixed point of $\Phi$. This verification is straightforward since all other parameters are known, and $\lambda_1 = 1098.453$ is easily computed from the data matrix $\textbf{X}$. Moreover, $\lambda_1$ is significantly larger than the largest root $\beta = 0.09998$ of the quadratic equation  

\[
(\lambda_1)^2 \tau_0^2 - \lambda_1 (\Lambda + (d + n) \tau_0) + d n = 0,
\]

implying that the polynomial $P$ has a unique strictly positive root given by $(a^*)^2$.  

It is also noteworthy that for any initialization $\mu^{(0)}_{\textbf{Z}}$, not necessarily in the proposed format $\mu^{(0)}_{\textbf{Z}}= a^{(0)}\mu_1$, we observe convergence of the sequences $\|\mu^{(t)}_{\textbf{Z}}\|$ and $\Sigma^{(t)}_{\textbf{Z}}$ to $a^*$ and $b^*$, respectively. Finally, we verified that $\|\mu^{(t)}_{\textbf{W}}\|$ and $\Sigma^{(t)}_{\textbf{W}}$ also converged, and

\[
(\mu^*_{\textbf{W}},\Sigma^*_{\textbf{W}}) = F(\mu^*_{\textbf{Z}},\Sigma^*_{\textbf{Z}}).
\]
\begin{Remark}
From the theory of discrete dynamical systems, we numerically verified that $(a^*,b^*)$ is \textit{stable} and \textit{attracting} since both the eigenvalues of the Jacobian $D\Phi$ evaluated at this point are strictly less than one in absolute value. Theoretically, this means apart from some pathological choices, for most initializations $(a^{(0)},b^{(0)})$, the iterates $(a^{(t)},b^{(t)})$ will converge to $(a^*,b^*)$, which is consistent with our numerical findings. 
\end{Remark}
\section{Local behavior of the loss function $\Psi$ near $q^*$}\label{ap_localbehaviour}
\begin{Proposition}[Rotational ambiguity]\label{balls_dont_intersect}
Suppose the hyper-parameter $\Lambda=\lambda I_k$ for some $\lambda >0$ and $q^*=(\mu^*_{\textbf{W}},\Sigma^*_{\textbf{W}},\mu^*_{\textbf{Z}},\Sigma^*_{\textbf{Z}})\in \mathcal{M}$ be a solution to the minimization problem in \eqref{problem}. Let $\mathbf{R}$ be any $k\times k$ orthogonal matrix. If we set $q^*{\mathbf{R}} =(\mu^*_{\textbf{W}}\mathbf{R},\mathbf{R}'\Sigma^*_{\textbf{W}}\mathbf{R},\mu^*_{\textbf{Z}}\mathbf{R},\mathbf{R}'\Sigma^*_{\textbf{Z}}\mathbf{R})$, then $q^*\textbf{R}$ is also a solution to the minimization problem in \eqref{problem}. Consequently, the Hessian of the loss function $\Psi$ is singular if $k\geqslant 2$.
\end{Proposition}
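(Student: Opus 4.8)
The plan is to read off the explicit form of $\Psi$ in \eqref{eq:loss_function} and show that, when $\Lambda=\lambda I_k$, the right‑multiplication $q^*\mapsto q^*\mathbf{R}$ leaves every summand unchanged. First I would record the substitution rules $\mu_{\textbf{W}}\mapsto \mu_{\textbf{W}}\mathbf{R}$, $\Sigma_{\textbf{W}}\mapsto \mathbf{R}'\Sigma_{\textbf{W}}\mathbf{R}$, $\mu_{\textbf{Z}}\mapsto \mu_{\textbf{Z}}\mathbf{R}$, $\Sigma_{\textbf{Z}}\mapsto \mathbf{R}'\Sigma_{\textbf{Z}}\mathbf{R}$, noting that $\mathbf{R}'\Sigma^*\mathbf{R}$ stays symmetric positive definite so $q^*\mathbf{R}$ is a legitimate point of the parameter space. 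Then I would verify invariance term by term using $\mathbf{R}\mathbf{R}'=\mathbf{R}'\mathbf{R}=I_k$ and the cyclicity of the trace: the reconstruction term is invariant because $\mu_{\textbf{Z}}\mathbf{R}(\mu_{\textbf{W}}\mathbf{R})'=\mu_{\textbf{Z}}\mu_{\textbf{W}}'$; the terms $\operatorname{tr}(\mu_{\textbf{Z}}'\mu_{\textbf{Z}})$, $\operatorname{tr}(\Sigma_{\textbf{Z}})$, $\operatorname{tr}(\Sigma_{\textbf{W}}\Sigma_{\textbf{Z}})$, $\operatorname{tr}(\Sigma_{\textbf{W}}\mu_{\textbf{Z}}'\mu_{\textbf{Z}})$, $\operatorname{tr}(\mu_{\textbf{W}}'\mu_{\textbf{W}}\Sigma_{\textbf{Z}})$ are invariant by inserting $\mathbf{R}\mathbf{R}'$ and cycling; and the log‑determinant terms are invariant since $\det(\mathbf{R}'\Sigma\mathbf{R})=\det(\Sigma)$. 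The only summands where orthogonal invariance could break are those carrying $\Lambda$, namely $\tfrac12\operatorname{tr}(\Lambda\mu_{\textbf{W}}'\mu_{\textbf{W}})$ and $\tfrac{d}{2}\operatorname{tr}(\Lambda\Sigma_{\textbf{W}})$, and this is exactly where the hypothesis enters: with $\Lambda=\lambda I_k$ we get $\operatorname{tr}(\lambda I_k\,\mathbf{R}'\Sigma_{\textbf{W}}\mathbf{R})=\lambda\operatorname{tr}(\Sigma_{\textbf{W}})$ and $\operatorname{tr}(\lambda I_k\,\mathbf{R}'\mu_{\textbf{W}}'\mu_{\textbf{W}}\mathbf{R})=\lambda\operatorname{tr}(\mu_{\textbf{W}}'\mu_{\textbf{W}})$, whereas a non‑scalar $\Lambda$ fails to commute with $\mathbf{R}$ and would destroy the invariance. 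This yields $\Psi(q^*\mathbf{R})=\Psi(q^*)=M$, so $q^*\mathbf{R}\in\mathcal{M}$.

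For the Hessian, I would upgrade this discrete symmetry to an infinitesimal one. Fix a nonzero skew‑symmetric $A\in\mathbb{R}^{k\times k}$, which exists precisely because $k\geqslant 2$, and set $\mathbf{R}(s)=\exp(sA)\in SO(k)$, a smooth family with $\mathbf{R}(0)=I_k$ and derivative $A$ at $s=0$. The curve $\gamma(s):=q^*\mathbf{R}(s)$ stays in the open domain where the covariances are positive definite and, by the invariance above, satisfies $\Psi(\gamma(s))\equiv M$. Hence each $\gamma(s)$ is a global minimizer, so a critical point, giving $\nabla\Psi(\gamma(s))=0$ for all $s$. Differentiating this identity at $s=0$ by the chain rule yields $\nabla^2\Psi(q^*)\,\gamma'(0)=0$, so $\gamma'(0)$ lies in the kernel of the Hessian.

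It remains to confirm $\gamma'(0)\neq 0$, which is the one delicate point. Differentiating the four blocks gives $\gamma'(0)=\bigl(\mu_{\textbf{W}}^*A,\ \Sigma_{\textbf{W}}^*A-A\Sigma_{\textbf{W}}^*,\ \mu_{\textbf{Z}}^*A,\ \Sigma_{\textbf{Z}}^*A-A\Sigma_{\textbf{Z}}^*\bigr)$. Excluding the trivial stationary point $\mu_{\textbf{W}}^*=\mu_{\textbf{Z}}^*=0$ discussed in \autoref{sec_2}, we have $\mu_{\textbf{W}}^*\neq 0$; since for $k\geqslant 2$ any fixed nonzero row of $\mu_{\textbf{W}}^*$ can be displaced by a suitable skew‑symmetric generator, $A$ can be chosen with $\mu_{\textbf{W}}^*A\neq 0$, so $\gamma'(0)\neq 0$ and the Hessian has a nontrivial kernel. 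I expect the invariance bookkeeping to be entirely routine; the genuine obstacle is this last nondegeneracy step—guaranteeing the orbit map is a true immersion in at least one direction rather than collapsing onto $q^*$—and isolating the assumption $\mu_{\textbf{W}}^*\neq 0$ (equivalently, ruling out the trivial fixed point) is what makes it clean.
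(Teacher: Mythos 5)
Your proof of the first claim is exactly the paper's argument: the paper's entire proof consists of recalling \eqref{eq:loss_function} and observing that every trace, Frobenius-norm and log-determinant term is invariant under $(\mu_{\textbf{W}},\Sigma_{\textbf{W}},\mu_{\textbf{Z}},\Sigma_{\textbf{Z}})\mapsto(\mu_{\textbf{W}}\mathbf{R},\mathbf{R}'\Sigma_{\textbf{W}}\mathbf{R},\mu_{\textbf{Z}}\mathbf{R},\mathbf{R}'\Sigma_{\textbf{Z}}\mathbf{R})$ once $\Lambda=\lambda I_k$; your term-by-term bookkeeping, and your identification of the two $\Lambda$-carrying terms as the only place the scalar hypothesis is used, match it precisely. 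Where you genuinely go beyond the paper is the second claim: the paper asserts the singularity of the Hessian as a ``consequence'' and supports it only with the informal remark that arbitrarily close minimizers exist (plus numerics in the following subsection), whereas you actually derive it by passing to the one-parameter subgroup $\mathbf{R}(s)=\exp(sA)$ with $A$ skew-symmetric, differentiating the identity $\nabla\Psi(\gamma(s))=0$ along the orbit of minimizers, and exhibiting $\gamma'(0)=\bigl(\mu^*_{\textbf{W}}A,\ \Sigma^*_{\textbf{W}}A-A\Sigma^*_{\textbf{W}},\ \mu^*_{\textbf{Z}}A,\ \Sigma^*_{\textbf{Z}}A-A\Sigma^*_{\textbf{Z}}\bigr)$ as a kernel vector. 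This is the right way to make the claim rigorous, and you correctly identify the one genuine gap that the paper's phrasing hides: the statement is false as an unconditional consequence of the orbit invariance unless $\gamma'(0)\neq 0$ for some $A$, i.e.\ unless the orbit is not reduced to the single point $q^*$. Your resolution via $\mu^*_{\textbf{W}}\neq 0$ is sound (for $k\geqslant 2$ and any nonzero row $w$ of $\mu^*_{\textbf{W}}$, the generator $A=e_ie_j'-e_je_i'$ with $w_i\neq 0$ gives $wA\neq 0$, and $\Sigma^* A - A\Sigma^*$ is symmetric so the direction lives in the correct tangent space), but note that the appeal to the ``trivial fixed point'' of \autoref{sec_2} is about CAVI iterates, not about global minimizers of $\Psi$; to be fully airtight one should argue directly that a global minimizer cannot have $\mu^*_{\textbf{W}}=0$ (e.g.\ by showing a first-order decrease of $\Psi$ in a rank-one direction aligned with the leading singular pair of $\textbf{X}$ when $\tau_0\lambda_1$ dominates the prior penalties), or simply add $\mu^*_{\textbf{W}}\neq 0$ as a hypothesis. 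Since the paper offers no proof at all of the Hessian claim, your version is strictly more informative.
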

\begin{proof}
We have that $\Lambda = \lambda I_k$ for some $\lambda> 0$. We recall the loss function from \eqref{eq:loss_function}
\begin{align*}
\Psi\left(\mu_{\textbf{W}}, \Sigma_{\textbf{W}}, \mu_{\textbf{Z}}, \Sigma_{\textbf{Z}}\right) &= \frac{\tau_0}{2} \left\|\textbf{X}- \mu_{\textbf{Z}} \mu_{\textbf{W}}' \right\|^2 + \frac{1}{2} \operatorname{tr}\left( \mu_{\textbf{W}}' \Lambda \mu_{\textbf{W}} \right) + \frac{1}{2} \operatorname{tr}\left( \mu_{\textbf{Z}}' \mu_{\textbf{Z}} \right) + \frac{d}{2} \operatorname{tr}\left( \Lambda\, \Sigma_{\textbf{W}} \right)  \\
&\quad  + \frac{n}{2} \operatorname{tr}\left( \Sigma_{\textbf{Z}} \right)+ \frac{\tau_0 d n}{2} \operatorname{tr}\left( \Sigma_{\textbf{W}}\, \Sigma_{\textbf{Z}} \right)+ \frac{\tau_0 d}{2} \operatorname{tr}\left( \Sigma_{\textbf{W}}\, \mu_{\textbf{Z}}' \mu_{\textbf{Z}} \right)  \\
&\quad + \frac{\tau_0 n}{2} \operatorname{tr}\left( \mu_{\textbf{W}}' \mu_{\textbf{W}}\, \Sigma_{\textbf{Z}} \right)- \frac{k}{2} \log \det\left( \Sigma_{\textbf{W}} \right) - \frac{k}{2} \log \det\left( \Sigma_{\textbf{Z}} \right)+ \text{cst}.
\end{align*}
Therefore, $\Psi(q^*) = \Psi(q^* \mathbf{R})$, as all the trace terms and Frobenius norms remain invariant under the proposed transformations.
\end{proof}
When \(\Lambda\) is a multiple of the identity matrix, \autoref{balls_dont_intersect} presents an immediate roadblock to proving results such as \autoref{main_theorem_bpca} whenever \(k \geqslant 2\). This arises because arbitrarily close minimizers exist. More formally, for any \(\eta > 0\), there exists a rotation \(\textbf{R}_\eta\) such that \(\text{KL}(q^* \textbf{R}_\eta \| q^*) \leqslant \eta\). In such cases, the singularity of the Hessian of the loss function \(\Psi\) leads to flatness at \(q^*\), a phenomenon that is numerically demonstrated in the next subsection.
\subsection{Numerical Experiment}
In this subsection, we numerically illustrate that the Hessian of \(\Psi\) is flat at \(q^*\) when the prior hyperparameter \(\Lambda\) is proportional to \(I_k\) for $k\geqslant 2$. Conversely, when \(\Lambda\) is not proportional to \(I_k\), the Hessian is non-singular, ensuring that \(q^*\) is an isolated local minimizer. Additionally, It is shown that the Hessian is always non-singular when \(k = 1\), as there is no rotational ambiguity in this case. We focus on the simplest case, $k=2$, although similar claims hold for higher rank models and can be easily verified numerically with our code. Note that the numerical experiments presented here are independent of CAVI, as our focus is on an analytical property of the loss function $\Psi$. Suppose $q^*= (\mu^*_{\textbf{W}},\Sigma^*_{\textbf{W}},\mu^*_{\textbf{Z}},\Sigma^*_{\textbf{Z}})$ is a local minimum (or a saddle point) of $\Psi$. We recall the loss function $\Psi(q^*)$ from \eqref{eq:loss_function} and observe that the choice of model hyper-parameter $\Lambda$ influences $\Psi$ through the trace terms.

Let $\Lambda_1= I_2,\ \Lambda_2=\text{diag}(1,2)$. We begin with the set-up: $n=4$, $d=3$, $k=2, \tau_0=100$ and we set $\Lambda = \Lambda_2$, so that $\Lambda\not\propto I_2$. Each dataset, denoted as $\textbf{X}_n$, is generated independently from the model in \eqref{model} with  
\begin{equation}\label{data_prep}
\textbf{X}_n \sim \mathcal{N}(\textbf{Z}_0\textbf{W}_0', \tau_0^{-1}I_{n} \otimes I_{d}),
\end{equation}
where $\textbf{W}_0 = (1,\dots,1)'$ and $\textbf{Z}_0 \sim \mathcal{N}(0, I_{n} \otimes I_k)$, with $\tau_0 = 100$. We run \autoref{algo1} with the initialization $\mu^{(0)}_{\textbf{Z}}$, an $n \times k$ matrix with all entries set to $0.1$, and $\Sigma^{(0)}_{\textbf{Z}} = I_k$.  
To record the value of $q^*$, we first prepare a dataset $\textbf{X}_4$ as described in \eqref{data_prep} and then run CAVI (\autoref{algo1}). However, this alone was insufficient, as the gradient norm of $\Psi$ at the converged point $q^*_{\text{CAVI}}$ remained relatively large ($\approx 10^{-4}$). To reduce this value, we performed an additional round of Newton’s descent on $\Psi$, initializing from $q^*_{\text{CAVI}}$ and iterating until we reached the nearest local minimizer (or a saddle point) $q^*$, where the gradient norm reduced to $\approx 10^{-14}$. Next, we repeat the same steps with the important difference being that $\Lambda =\Lambda_1$.

It is important to note that Newton’s descent alone was ineffective in locating $q^*$, as it converged extremely slowly. In contrast, the combination of CAVI and Newton’s descent yielded convergence in just a few seconds.

We compute the Hessian $H(\Psi)$ at $q^*$ in both cases using Newton's method, and observe that $H(\Psi)$ becomes singular when $\Lambda = \Lambda_1$, exhibiting a zero eigenvalue with a corresponding eigenvector $v_0$. This was empirically confirmed by the observation that, when $\Lambda \propto I_k$, the smallest eigenvalue (in absolute value) was on the order of machine precision ($\approx 10^{-15}$), whereas in the alternative case it was significantly larger ( $\approx10^{-2}$). This implies that $q^*$ is not isolated when $\Lambda\propto I_k$, as $\Psi$ is flat along the direction of $v_0$, meaning $\Psi(q^*+\alpha v_0)$ remains constant for small perturbations $\alpha$. However, when $\Lambda=\Lambda_2$, i.e., $\Lambda$ is not proportional to $I_2$, $H(\Psi)$ is non-singular, ensuring local convexity of $\Psi$ at $q^*$. Consequently, $q^*$ is an isolated local minimizer.

The local behavior of $\Psi$ differs fundamentally when $k=1$ as $\Lambda$ is a positive real number in this case, eliminating rotational ambiguity in the model. In this case, $H(\Psi)$ is always non-singular, ensuring that $\Psi$ is always locally convex regardless of the choice $\Lambda$.
 The situation is highlighted through the plots in \autoref{fig:psi_visualization}.
\begin{figure}[h]
    \centering
    \begin{subfigure}{0.49\textwidth}
        \centering
        \includegraphics[width=\linewidth]{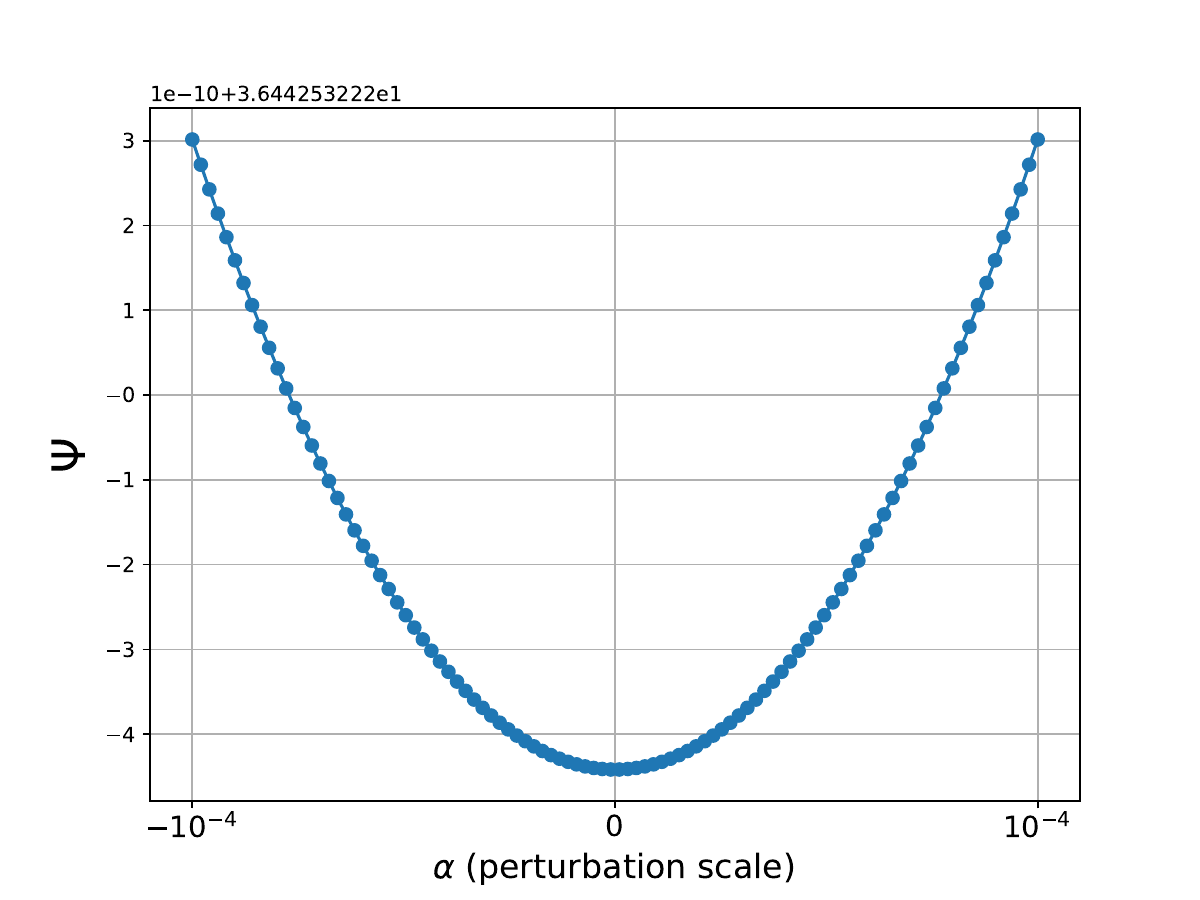}
        \caption{$\Lambda\not\propto I_2,\ k=2$}
    \end{subfigure}
    \hfill
    \begin{subfigure}{0.49\textwidth}
        \centering
        \includegraphics[width=\linewidth]{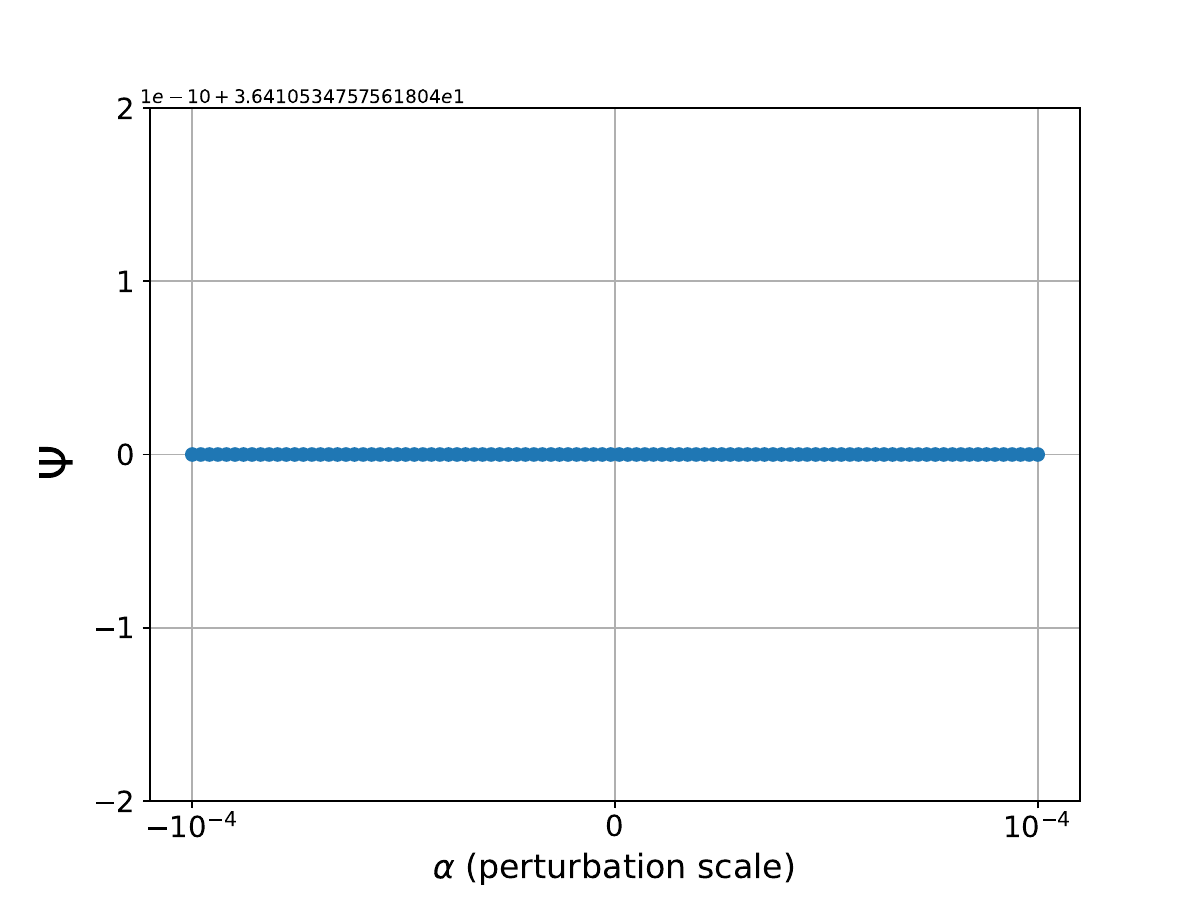}
        \caption{$\Lambda\propto I_2,\ k=2$}
    \end{subfigure}

    \vspace{0.5cm}

    \begin{subfigure}{0.5\textwidth}
        \centering
        \includegraphics[width=\linewidth]{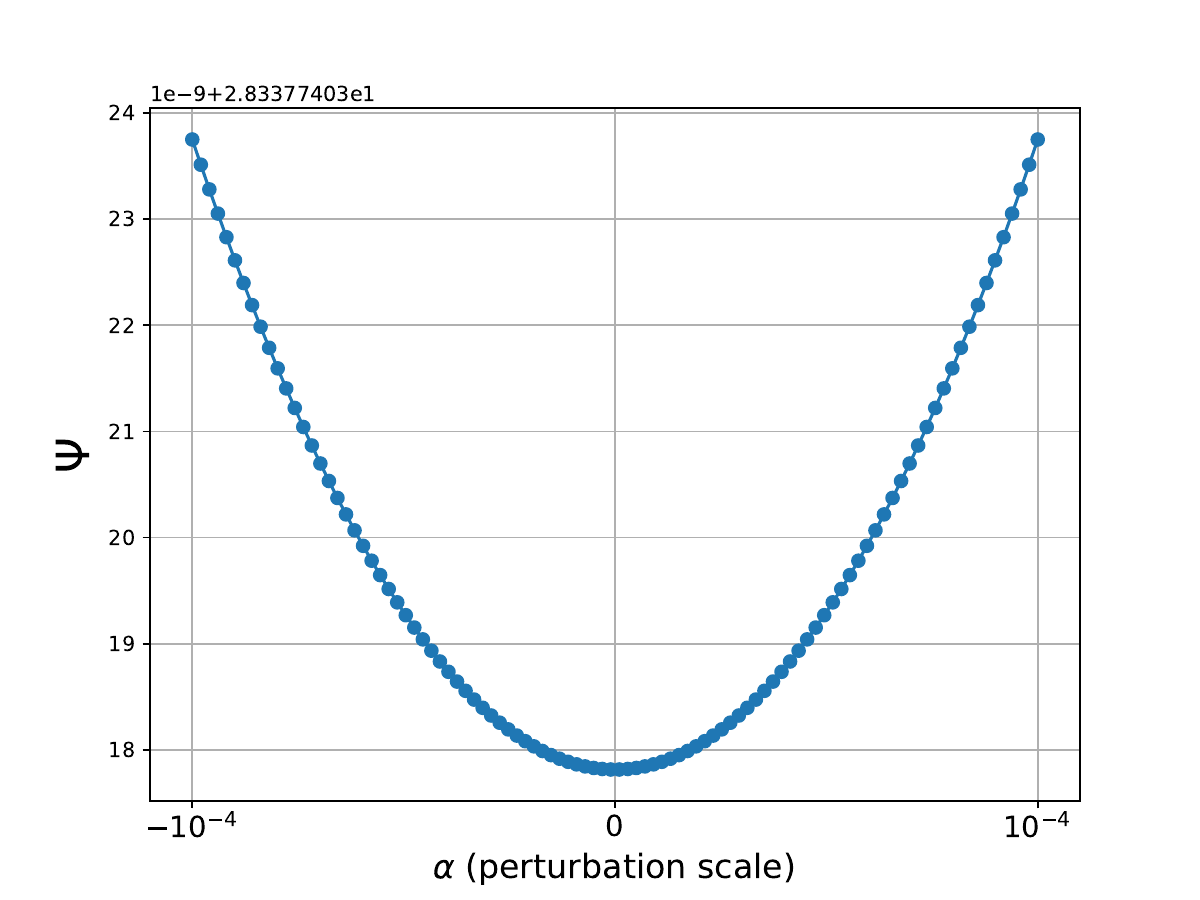}
        \caption{$k=1$}
    \end{subfigure}

    \caption{Visualization of $\Psi$ near $q^*$}
    \label{fig:psi_visualization}
\end{figure}
\end{document}